\algnewcommand\algorithmicinput{\textbf{Input:}}
\algnewcommand\INPUT{\item[\algorithmicinput]}
\algnewcommand\algorithmicoutput{\textbf{Output:}}
\algnewcommand\OUTPUT{\item[\algorithmicoutput]}
\renewcommand{\eqref}[1]{Eq.~(\ref{eq:#1})}
\newcommand{\figref}[1]{Figure~\ref{fig:#1}}
\newcommand{\defref}[1]{Def.~\ref{def:#1}}
\newcommand{\tabref}[1]{Table~\ref{tab:#1}}        
\newcommand{\secref}[1]{Section~\ref{sec:#1}}
\newcommand{\thmref}[1]{Theorem~\ref{thm:#1}}
\newcommand{\lemref}[1]{Lemma~\ref{lem:#1}}
\newcommand{\appref}[1]{Appendix~\ref{ap:#1}}
\newcommand{\myalgref}[1]{Alg.~\ref{alg:#1}}
\renewcommand{\P}{\mathbb{P}}
\newcommand{\E}{\mathbb{E}}
\newcommand{\dotprod}[1]{\langle #1 \rangle}
\DeclareMathOperator*{\argmax}{argmax}
\newcommand{\cB}{\mathcal{B}}
\newcommand{\cD}{\mathcal{D}}
\newcommand{\cE}{\mathcal{E}}
\newcommand{\cF}{\mathcal{F}}
\newcommand{\cG}{\mathcal{G}}
\newcommand{\cS}{\mathcal{S}}
\newcommand{\cV}{\mathcal{V}}
\def\@cline#1-#2\@nil{%
	\omit
	\@multicnt#1%
	\advance\@multispan\m@ne
	\ifnum\@multicnt=\@ne\@firstofone{&\omit}\fi
	\@multicnt#2%
	\advance\@multicnt-#1%
	\advance\@multispan\@ne
	\leaders\hrule\@height\arrayrulewidth\hfill
	\cr
	\noalign{\nobreak\vskip-\arrayrulewidth}}
\newcommand{\uni}[1]{\mathbf{U}(#1)}
\newcommand{\algname}{$\mathrm{ActiveBNSL}$}
\newcommand{\score}{\cS}
\newcommand{\escore}{\hat{\cS}}
\newcommand{\family}[1]{\langle #1 \rangle}
\newcommand{\event}{\eta}
\newcommand{\subsets}[1]{\llbracket #1 \rrbracket ^{k+1}}
\newcommand{\ecset}{\cE}
\newcommand{\accept}{\mathrm{Acc}}
\newcommand{\actf}{\mathrm{Cand}}
\newcommand{\cons}[2]{#1^{\cap {#2}}}
\newcommand{\struct}{\cG}
\newcommand{\xa}{X_a}
\newcommand{\xb}{X_b}
\newcommand{\Pa}{\Pi_a}
\newcommand{\Pb}{\Pi_b}
\newcommand{\codeurl}{\url{https://github.com/noabdavid/activeBNSL}}
\newenvironment{claim}{\paragraph{Claim:}}{}
\newenvironment{claimproof}{\par\noindent{\bf Proof of claim:\ }}{\hfill$\Box$\\[2mm]}
\newcommand{\manuallabel}[2]{\def\@currentlabel{#2}\label{#1}}
\newcommand{\papertitle}{Active Structure Learning of Bayesian Networks\\in an Observational Setting}
\title{\papertitle}
\begin{document}
	
	\title{\papertitle}
	
	\author{\name Noa Ben-David \email bendanoa@post.bgu.ac.il \\
		\name Sivan Sabato \email sabatos@cs.bgu.ac.il\\
		\addr Department of Computer Science\\
		Ben-Gurion Univesity of the Negev\\
		Beer Sheva 8410501, Israel.}

	\maketitle
	
	\begin{abstract} 
		We study active structure learning of Bayesian networks in an observational setting, in
		which there are external limitations on the number of variable values that can be observed from the same sample. Random samples are drawn from the joint distribution of the network
		variables, and the algorithm iteratively selects which variables to observe in the next
		sample.
		We propose a new active learning algorithm for this setting, that finds with a high probability a structure with a score that is $\epsilon$-close to the optimal score. We show that for a class of distributions that we term \emph{stable}, a sample complexity reduction of up to a factor of $\widetilde{\Omega}(d^3)$ can be obtained, where $d$ is the number of network variables. We further show that in the worst case, the sample complexity of the active algorithm is guaranteed to be almost the same as that of a naive baseline algorithm. To supplement the theoretical results, we report experiments that compare the performance of the new active algorithm to the naive baseline and demonstrate the sample complexity improvements. Code for the algorithm and for the experiments is provided at \codeurl.

	\end{abstract}
	
	\begin{keywords}Active learning, sample complexity, Bayesian networks, graphical models, combinatorial optimization.
	\end{keywords}
	
	\section{Introduction}
	\label{sec:intro}
	
	In this work, we study active structure learning of Bayesian networks in an observational (that is, a non-interventional) setting, in
	which there are external limitations on the number of variable values that can be observed from the same sample. Bayesian networks are a popular modeling tool used in various applications, such as medical analysis \citep{HaddawyHaKaLaSaKaSi18, XuThAnLiPaNa18}, human activity models \citep{LiuWaHuQiWeRo18} and engineering \citep{RovinelliSaPrGuLeLu18,CaiKoLiLiYuXuJi19}, among others. A Bayesian network is a graphical model that encodes probabilistic relationships among random variables. 
	The structure of a Bayesian network is represented by a Directed Acyclic Graph (DAG) whose nodes are the random variables, where the edge structure represents statistical dependency relations between the variables.
	Structure learning of a Bayesian network \cite[see, e.g.,][]{KollerFr09} aims to find the structure (DAG) that best matches the joint distribution of given random variables, using i.i.d.~samples of the corresponding random vector. Structure learning is used, for instance, for uncovering gene interaction \citep{FriedmanLiNaPe00}, cancer prediction \citep{WitteveenNaVlSiIj18}, and fault diagnosis \citep{CaiHuXi17}.

	In the setting that we study, a limited number of variable measurements can be
	observed from each random sample. The algorithm iteratively selects which of
	the variable values to observe from the next random sample.  This is of
	interest, for instance, when the random vectors represent
	patients participating in a study, and each measured variable corresponds to
	the results of some medical test. Patients would not agree to undergo many
	different tests, thus the number of tests taken from each patient is
	restricted. Other examples include obtaining a synchronized measurement from
	all sensors in a sensor network with a limited bandwidth
	\citep{DasarathySiAa16}, and recording neural activity with a limited
	recording capacity \citep{SoudryKeStOhIyPa13,TuragaBuPaDaPeHaMa13}. The goal
	is to find a good structure based on the the restricted observations, using a small number of random samples. Learning with limited observations from each
	sample was first studied in \cite{BendavidDi98} for binary classification, and
	has thereafter been studied for other settings, such as online learning
	\citep{CesaBianchiShSh11, HazanKo12} and linear regression
	\citep{KuklianskySh15}. This work is the first to study limited observations in the context of structure learning of Bayesian networks.

	We study the active structure-learning problem in a score-based framework \citep{CooperHe92,HeckermanGeCh95}, in which each possible DAG has a score that depends on the unknown distribution, and the goal is to find a DAG with a near-optimal score.
	
	We propose a new active learning algorithm, \algname,  that finds with a high probability a structure with a score that is $\epsilon$-close to the optimal structure. We compare the sample complexity of our algorithm to that of a naive algorithm, which  observes every possible subset of the variables the same number of times. We show that the improvement in the sample complexity can be as large as a factor of $\widetilde{\Omega}(d^3)$, where $d$ is the dimension of the random vector. This improvement is obtained for a class of distributions that we define, which we term \emph{stable} distributions. We further show that in the worst case, the sample complexity of the active algorithm is guaranteed to be almost the same as that of the naive algorithm.
	
	A main challenge in reducing the number of samples in this setting  is the fact that each structure usually has multiple equivalent structures with the same quality score. \algname\ overcomes this issue by taking equivalence classes into account directly. An additional challenge that we address is characterizing distributions in which a significant reduction in the sample complexity is possible. Our definition of stable distributions captures such a family of distributions, in which there is a large number of variables with relationships that are easy to identify, and a small number of variables whose relationships with other variables are harder to identify. 
	
	Finding the optimal structure for a Bayesian network, even in the fully-observed setting, is computationally hard \citep{Chickering96}. Nonetheless, algorithms based on relaxed linear programming are successfully used in practice \citep{Cussens11,BartlettCu13,BartlettCu17}. \algname\ uses existing structure-search procedures as black boxes. This makes it easy to implement a practical version of \algname\ using existing software packages. We implement \algname\ and the naive baseline, and report experimental results that demonstrate the sample complexity reduction on stable distributions. The code for the algorithms and for the experiments is provided in \codeurl.

	\paragraph{Paper structure}
	We discuss related work in \secref{related}. The setting and notation are defined in \secref{prel}. In \secref{naive}, a naive baseline algorithm is presented and analyzed. The proposed active algorithm, \algname, is given in \secref{newalg}. Its correctness is proved in \secref{correct}, and sample complexity analysis is given in \secref{sample}, where we show that significant savings can be obtained for the class of stable distributions that we define. In \secref{stableex}, we describe an explicit class of stable distributions. In \secref{experiments}, we report experiments that compare the performance of our implementation of \algname\ to that of the naive baseline, on a family of stable distributions. We conclude with a discussion in \secref{discussion}. Some technical proofs are deferred to the appendices. 
	
	\section{Related work}\label{sec:related}

	Several general approaches exist for structure learning of Bayesian networks.
	In the constraint-based approach \citep{Meek95,SpirtesGlSc00}, pairwise statistical tests are performed to reveal conditional independence of pairs of random variables. This approach is asymptotically optimal, but less successful on finite samples \citep{HeckermanMeCo06}, unless additional realizability assumptions are made. When restricting the in-degree of the DAG, the constraint-based approach is computationally efficient. However, the computational complexity and the sample complexity required for ensuring an accurate solution depend on the difficulty of identifying the conditional independence constraints \citep{CanonneDiKaSt18} and cannot be bounded independently of that difficulty. The \emph{score-based} approach assigns a data-dependent score to every possible structure, and searches for the structure that maximizes this score. One of the most popular score functions for discrete distributions is the Bayesian Dirichlet (BD) score \citep{CooperHe92}. Several flavors of this score have been proposed \citep{HeckermanGeCh95,Buntine91}. The BD scores are well-studied \citep{Chickering02,TsamardinosBrAl06,CamposJi11,Cussens11,BartlettCu13,BartlettCu17} and widely used in applications \cite[see, e.g.,][]{MariniTrBaSaDiMaMaCoBe15,LiChZhNg16,HuKe18}.

	Finding a Bayesian network with a maximal BD score is computationally hard under standard assumptions \citep{Chickering96}. However, successful heuristic algorithms have been suggested. Earlier algorithms such as \cite{Chickering02,TsamardinosBrAl06} integrate greedy hill-climbing with other methods to try and escape local maxima. Other approaches include dynamic programming \citep{SilanderMy12} and integer linear programming \citep{Cussens11,BartlettCu13,BartlettCu17}.

	Previous works on active structure learning of Bayesian networks \citep{TongKo01,HeGe08,LiLe09,SquiresMaGrKaKoSh20} assume a causal structure between the random variables, and consider an interventional (or experimental) environment, in which the active learner can set the values of some of the variables and then measure the values of the others. This is crucially different from our model, in which there is no causality assumption and the interaction is only by observing variables, not setting their values. 
	
	Active learning for undirected graphical models has been studied both in a full observational setting \citep{VatsNoBa14, DasarathySiAa16}, where all variable values are available in each sample, and in a setting of limited observations \citep{ScarlettCe17, Dasarathy19, VinciDaGe19}.  
	
	More generally, interactive unsupervised learning has been studied in a variety of contexts, including clustering \citep{AwasthiBaVo17} and compressive sensing \citep{HauptCaNo09,MalloyNo14}.

	\section{Setting and Notation}\label{sec:prel}
	
	For an integer $i$, denote $[i] := \{1,\ldots,i\}$.
	A Bayesian network models the probabilistic relationships between random variables  $\mathbf{X} = (X_1,\dots,X_d)$. It is represented by a DAG $G=(V,E)$ and a set of parameters $\Theta$. $V=[d]$ is the set of nodes, where node $i$ represents the random variable $X_i$. $E$ is the set of directed edges. We denote the set of parents of a variable $X_i\in \mathbf{X}$ under $G$ by $\Pi_i(G) := \{X_j \mid (j,i)\in E\}$. We omit the argument $G$ when it is clear from context. $G$ and $\Theta$ 
	together define a joint distribution over $\mathbf{X}$ which we denote by $P_{G,\Theta}(\mathbf{X})$. In the case where the distributions of all $X_i$ are discrete, $\mathbf{X} := (X_1,\ldots,X_d)$ is a multivariate multinomial random vector and $\Theta$ specifies the multinomial distributions $P(X_i \mid \Pi_i)$. 
	In this case, the joint distribution defined by $G$ and $\Theta$ is given by
	\[
	P_{G,\Theta}(\mathbf{X})=\prod_{i\in [d]}P(X_i \mid \Pi_i(G), \Theta).
	\]

	We call any possible pair of a child variable and parent set over $[d]$ a \emph{family}, and denote it by $\family{X_i, \Pi}$, where $X_i$ is the child random variable and $\Pi \subseteq \{X_1,\ldots, X_d\}\setminus \{X_i\}$ defines the set of parents assigned to $X_i$. We refer to a family with a child variable $i$ as \emph{a family of $i$}.
	For convenience, we will use $G$ to denote also the set of families that the structure $G$ induces, so that $f \in G$ if and only if $f = \dotprod{X_i, \Pi_i(G)}$ for some $i \in [d]$.  
	
	Structure learning is the task of finding a graph $G$ with a high-quality fit to the true distribution of $\mathbf{X}$. This is equivalent to finding the family of each of the variables in $[d]$. The quality of the fit of a specific structure $G$ can be measured using the following information-criterion score \citep{Bouckaert95}:
	\[
	\score(G):=-\sum_{i\in [d]}H(X_i \mid \Pi_i(G)) = -\sum_{f \in G}H(f),
	\]
	where $H$ stands for entropy, and $H(f)$, for a family $f = \family{X_i, \Pi}$, is defined as $H(f) := H(X_i \mid \Pi)$. 
	As shown in \cite{Bouckaert95}, for discrete distributions, maximizing the plug-in estimate of $\score(G)$ from data, or small variants of it, is essentially equivalent to maximizing some flavors of the BD score. 
	
	For both statistical and computational reasons, in structure learning one commonly searches for a graph in which each variable has at most $k$ parents, for some small integer constant $k$ \cite[see, e.g.,][]{Heckerman98}. Denote by $\struct_{d,k}$ the set of DAGs over $d$ random variables such that each variable has at most $k$ parents. Denote the set of all possible families over $[d]$ with at most $k$ parents  by $\cF_{d,k}$. 
	Denote by $\score^* := \max_{G \in \struct_{d,k}} \score(G)$ the optimal score of a graph in $\struct_{d,k}$ for the true distribution of $\mathbf{X}$. Note that $\mathbf{X}$ can have any joint distribution, not necessarily one of the form $P_{G,\Theta}(\mathbf{X})$ for some $G\in\struct_{d,k}$ and $\Theta$. In cases where $P(\mathbf{X})$ is in fact of the latter form, it has been shown for Gaussian and multinomial random variables \citep{Chickering02}, that for any graph $G$ that maximizes $\score(G)$, there is some $\Theta$ such that the distribution of $\mathbf{X}$ is equal to $P_{G,\Theta}(\mathbf{X})$. In other words, whenever there exists some graph which exactly describes $P(\mathbf{X})$, score maximization will find such a graph. Our goal is to find a structure $\hat{G} \in \struct_{d,k}$ such that with a probability of at least $1-\delta$, $\score(\hat{G}) \geq \score^* - \epsilon$. Denote by $\struct^*$ the set of all optimal structures, $\struct^*:= \{G\in\mathcal{G}_{d,k}\mid\score(G)=\score^*\}.$
	
	We study an active setting in which the algorithm iteratively selects a subset of variables to observe, and then draws an independent random copy of the random vector $\mathbf{X}$. In the sampled vector, only the values of variables in the selected subset are revealed. The goal is to find a good structure using a small number of such random vector draws. 
	We study the case where the maximal number of variables which can be revealed in any vector sample is $k+1$. This is the smallest number that allows observing the values of a variable and a set of potential parents in the same random sample.    
	We leave for future work the generalization of our approach to other observation sizes. For a set $A$, denote by $\subsets{A}$  the set of subsets of $A$ of size $k+1$. For an integer $i$, $\subsets{i}$ is used as a shorthand for $\subsets{[i]}$.

	\section{A Naive Algorithm}
	\label{sec:naive}
	We first discuss a naive approach, in which all variable subsets of size $k+1$ are observed the same number of times, and these observations are used to estimate the score of each candidate structure. 
	As an estimator for $H(f)$, one can use various options, possibly depending on properties of the distribution of $\mathbf{X}$ \cite[see, e.g.,][]{Paninski03}. Denoting this estimator by $\hat{H}$, 
	the empirical score of graph $G$ is defined as 
	\begin{equation}\label{eq:empscore}
	\escore(G):=-\sum_{f \in G}\hat{H}(f) = -\sum_{i\in [d]}\hat{H}(X_i \mid \Pi_i(G)).
	\end{equation}
	For concreteness, assume that $\{X_i\}_{i \in [d]}$ are discrete random variables with a bounded support size. In this case, $\hat{H}$ can be set to the plug-in estimator, obtained by plugging in the empirical joint distribution of $\{X_i\} \cup \Pi$ into the definition of conditional entropy, where the empirical joint distribution is based on samples in which all of these variables were observed together. 
	
	For $\epsilon >0, \delta \in (0,1)$, let $N(\epsilon,\delta)$ be a \emph{sample-complexity upper bound} for $\hat{H}$. $N$ is a function such that for any fixed $f = \family{i,\Pi}$, if $\hat{H}(f)$ is calculated using at least $N(\epsilon,\delta)$ i.i.d.~copies of the vector $\mathbf{X}$ in which the values of $X_i$ and $\Pi$ are all observed, then with a probability of at least $1-\delta$, $|\hat{H}(f)-H(f)| \leq \epsilon$. For instance, in the case of discrete random variables with the plug-in estimator, the following lemma, proved in \appref{N},  shows that one can set $N(\epsilon,\delta) = \widetilde{\Theta}(\log(1/\delta)/\epsilon^2)$. Interestingly, this bound is the same order as the bound for the unconditional entropy \citep{AntosKo01}.

	\begin{lemma}\label{lem:N}
		Let $\delta\in(0,1)$ and $\epsilon>0$. Let $A,B$ be discrete random variables with support cardinalities $M_a$ and $M_b$, respectively. Let $\hat{H}(A \mid B)$ be the plug-in estimator of $H(A \mid B)$, based on $N$ i.i.d.~copies of $(A,B)$, where 
		\[
		N \geq \max\left\{\frac{2}{\epsilon^2}\log(\frac{2}{\delta})\cdot \log^2(2\log(2/\delta)/\epsilon^2), e^2, M_a, (M_a-1)M_b/\epsilon\right\}.
		\]
		Then 
		$P(|\hat{H}(A \mid B)-H(A \mid B)|>\epsilon)\leq \delta.$
		
	\end{lemma}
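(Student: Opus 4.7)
The plan is to split the error into its bias and its deviation from mean, bound each by $\epsilon/2$, and combine by the triangle inequality. Concretely, I would write
\[
|\hat{H}(A\mid B) - H(A\mid B)| \;\leq\; |\hat{H}(A\mid B) - \E[\hat{H}(A\mid B)]| \;+\; |\E[\hat{H}(A\mid B)] - H(A\mid B)|,
\]
and show that under the stated conditions each summand is at most $\epsilon/2$, using that the bias piece is deterministic so no union bound is needed.

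For the bias piece I would exploit the identity $\hat{H}(A\mid B) = \hat{H}(A,B) - \hat{H}(B)$, where both terms on the right are plug-in entropy estimators for the joint and marginal empirical distributions. A classical fact (see Paninski and others) is that the plug-in entropy estimator for a discrete distribution supported on at most $K$ atoms satisfies $0 \leq H - \E[\hat{H}] \leq \log(1 + (K-1)/N) \leq (K-1)/N$. Applying this to $(A,B)$ (support at most $M_aM_b$) and to $B$ (support at most $M_b$) and subtracting gives
\[
|\E[\hat{H}(A\mid B)] - H(A\mid B)| \;\leq\; \tfrac{(M_a-1)M_b}{N},
\]
which is at most $\epsilon/2$ once $N \geq 2(M_a-1)M_b/\epsilon$, matching the corresponding term in the hypothesis up to an absolute constant.

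For the concentration piece I would apply McDiarmid's bounded-difference inequality to $\hat{H}(A\mid B)$ viewed as a function of the $N$ i.i.d.\ samples. The key lemma to verify is that resampling one coordinate changes $\hat{H}(A\mid B)$ by at most $c = O(\log N / N)$. This I would obtain from the Lipschitz-type bound $|p\log p - q\log q| \leq |p-q|\,(1 + \log\max(1/p,1/q))$: a single swap perturbs at most two joint cell probabilities and two marginal cell probabilities, each by exactly $1/N$, and any nonzero empirical probability is at least $1/N$, so $\log\max(1/p,1/q) \leq \log N$. The side condition $N \geq e^2$ ensures $\log N \geq 2$ so that the $+1$ additive slack is absorbed into the constant, and $N \geq M_a$ lets us harmlessly treat support sizes as negligible compared to $\log N$. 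McDiarmid then yields
\[
\P\bigl(|\hat{H}(A\mid B) - \E[\hat{H}(A\mid B)]| > \epsilon/2\bigr) \;\leq\; 2\exp\!\Bigl(-\tfrac{N\epsilon^2}{C\log^2 N}\Bigr).
\]

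The final step is to force this probability to be at most $\delta$, which demands $N \geq C' \epsilon^{-2} \log(2/\delta)\,\log^2 N$. The main obstacle is the transcendental nature of this inequality, since $N$ appears inside the $\log$. I would resolve it by plugging the ansatz $N_0 := \frac{8}{\epsilon^2}\log(2/\delta)\,\log^2(8\log(2/\delta)/\epsilon^2)$ from the hypothesis back into the inequality and checking algebraically (using $N \geq e^2$ to bound $\log N_0$ in terms of $\log(\log(2/\delta)/\epsilon^2)$) that it is satisfied, perhaps with a single iteration of the substitution $\log N \leftarrow \log$ of the previous bound. Combining this with the bias bound via the triangle inequality completes the proof.
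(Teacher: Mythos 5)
Your proposal follows essentially the same route as the paper's proof in Appendix~\ref{ap:N}: a bias/deviation split, a bounded-differences (McDiarmid) argument with increment $O(\log N/N)$, and substitution of the ansatz $N_0$ to resolve the transcendental condition $N \geq C\epsilon^{-2}\log(2/\delta)\log^2 N$. The two places where you diverge are in the sub-arguments and are both sound. For the bias, you subtract the known unconditional bias bounds for $\hat{H}(A,B)$ and $\hat{H}(B)$; the paper instead conditions on $B=b$ and applies Paninski's bound slice-by-slice with weights $N_b/N$, which yields the one-sided bound $-(M_a-1)M_b/N \leq \E[\hat{H}(A\mid B)]-H(A\mid B)\leq 0$, whereas your route gives a two-sided bound of magnitude $(M_aM_b-1)/N$ --- same order, slightly larger. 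For the bounded-difference constant, you derive it from a Lipschitz estimate on $x\log x$ applied to the perturbed joint and marginal cells; the paper cites a per-slice sensitivity of $\log(\hat{p}_bN)/(\hat{p}_bN)$ and case-splits on whether the replaced sample keeps its $B$-value, using $N\geq M_a$ exactly as you do, which gives the sharper increment $\log(N)/N$ rather than your $4(1+\log N)/N$. The only caveat is quantitative: with your constants (the $2(M_a-1)M_b/\epsilon$ requirement and the generic $C,C'$ in the exponent) you establish the lemma only up to absolute constants, not with the literal bound $N \geq \frac{8}{\epsilon^2}\log(\frac{2}{\delta})\log^2(8\log(2/\delta)/\epsilon^2)$ stated; since the lemma is only used as a $\widetilde{\Theta}(\log(1/\delta)/\epsilon^2)$ sample-complexity bound, this is immaterial, but you would need the paper's tighter increment to recover the factor $8$.
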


	Now, consider a naive algorithm which operates as follows:
	\begin{enumerate}
		\item Observe each variable subset in $\subsets{d}$ in $N(\epsilon/(2d),\delta/|\cF_{d,k}|)$ of the random samples;
		\item Output a structure $G\in \struct_{d,k}$ that maximizes the empirical score defined in \eqref{empscore}, based on the plug-in estimates of $H(f)$ for each family.
	\end{enumerate}
	Applying a union bound over all the families in $\cF_{d,k}$, this guarantees that with a probability at least $1-\delta$, we have that for each family $f \in \cF_{d,k}$, $|\hat{H}(f) - H(f)| \leq \epsilon/(2d)$.
	Therefore, for any graph $G \in \struct_{d,k}$,
	\[
	|\score(G)-\escore(G)|= |-\sum_{f \in G}H(f)+\sum_{f \in G}\hat{H}(f)|
	\leq \sum_{f\in G}|\hat{H}(f)-H(f)|\leq \epsilon/2.
	\]
	Letting $\hat{G} \in \argmax_{G \in \struct_{d,k}}\escore(G)$ and $G^* \in \struct^*$, with a probability at least $1-\delta$,
	\[
	\score(\hat{G})\geq\escore(\hat{G})-\epsilon/2 \geq\escore(G^*)-\epsilon/2 \geq\score(G^*)-\epsilon = \score^*-\epsilon,
	\]
	as required. Note that $|\cF_{d,k}| = d\sum_{i\in [k]}\binom{d-1}{i}\leq d(e(d-1)/k)^k$. Therefore, the sample complexity of the naive algorithm is at most
	\begin{equation}\label{eq:naive}
	|\subsets{d}| \cdot N(\epsilon/(2d),\delta/|\cF_{d,k}|) = \widetilde{\Theta}\left(\frac{kd^{k+3}}{\epsilon^2} \cdot\log(1/\delta)\right),
	\end{equation}
	where the right-hand side is given for the case of a discrete distribution with the plug-in entropy estimator discussed above. 
	Below, we compare the sample complexity of the naive algorithm to the sample complexity of the interactive algorithm that we propose. Therefore, it is necessary to discuss the tightness of the upper bound in \eqref{naive}. Since the naive algorithm samples each subset of size $k+1$ separately, the factor of $|\subsets{d}| = \Theta(d^{k+1})$ cannot be avoided. The dependence on $\epsilon/(2d)$ in $N$ is necessary, as can be observed via simple symmetry arguments. Thus, the only possible sources of looseness in this upper bound are the convergence analysis leading to the function $N(\cdot, \cdot)$, and the use of a union bound which leads to the factor of $\log(|\cF_{d,k}|) = O(k\log(d))$. Since we use the same convergence function $N(\cdot, \cdot)$ also for analyzing the interactive algorithm, and we treat $k$ as a constant, we conclude that comparing the sample complexity of the interactive algorithm to the bound in \eqref{naive} is accurate up to possible logarithmic factors in $d$.

	\section{An Interactive Algorithm: \algname}\label{sec:newalg}
	
	The naive algorithm observes all variable subsets the same number of times. Thus, it does not make use of possible differences in difficulty in identifying different parts of the structure. We propose a new interactive algorithm, \algname, which uses previous observations to identify such differences to reduce the sample complexity. 
	Our approach is based on incrementally adding families to the output structure, until it is fully defined. This reduces the sample complexity in cases where the early acceptance of families allows removing some variables from observation.
	
	A challenging property of Bayesian networks in this respect is \emph{Markov equivalence}. Two graphs are considered Markov equivalent if they induce the same conditional independence constraints \citep{AnderssonMaPe97}. It has been shown that for any dataset, the likelihood of the data given the graph is the same for all Markov equivalent graphs  \citep{HeckermanGeCh95}. Since the score defined in \secref{prel} is the asymptotic log-likelihood of an infinite sample drawn from the distribution, the scores of two Markov equivalent graphs are identical. Almost all DAGs in $\struct_{d,k}$ have other Markov-equivalent DAGs in $\struct_{d,k}$, and usually there is not even a single family that is shared by all Markov-equivalent graphs. \algname\ addresses this issue by directly handling Markov equivalence classes, which are the equivalence classes induced on $\struct_{d,k}$ by the Markov-equivalence relation. Denote the set of equivalence classes (ECs) in $\struct_{d,k}$ by $\ecset_{d,k},$ and note that each equivalence class $E\in\ecset_{d,k}$ is a set of structures, $E \subseteq \struct_{d,k}$. Since all the structures in a given EC have the same score, we can discuss the score of an EC without confusion. For an EC $E \in \ecset_{d,k}$, denote by $\score(E)$ the (identical) score of the graphs in $E$. An \emph{optimal EC} is an EC that maximizes the score over all ECs in $\ecset_{d,k}$. Note that this EC might not be unique.

	\algname, listed in \myalgref{active_algorithm}, gets $d$ (the number of variables) and $k$ (the maximal number of parents) as inputs, in addition to a confidence parameter $\delta$, the required accuracy level $\epsilon$.
	 \algname\ maintains a set $\cV \subseteq [d]$ of the variables whose families have been accepted so far (that is, their parent sets in the output structure have been fixed). The set of accepted families is denoted $\accept_j$, where $j$ denotes an iteration within the algorithm. $\accept_j$ includes exactly one family for each $v \in \cV$. The set of candidate families is denoted $\actf(\cV)$. This is the set of families that have not been precluded so far from participating in the output structure. For simplicity, we set $\actf(\cV)$ to the set of all the families with a child variable not in $\cV$. Note that this does not preclude, for instance, families that would create a cycle when combined with $\accept_j$. As will be evident below, including redundant families in $\actf(\cV)$ does not harm the correctness of \algname.

	\begin{algorithm}[t]
		\DontPrintSemicolon
		\KwIn{Integers $d,k$; $\delta \in (0,1)$; $\epsilon > 0$. }
		\KwOut{A graph $\hat{G}$}
		\nl \textbf{Intialize}: $\accept_1 \leftarrow \emptyset$, $\cV\leftarrow\emptyset$,  $N_0 \leftarrow 0$, $t\leftarrow1$, $j \leftarrow 1$, $T \leftarrow \lceil \log_2(2d) \rceil$, $\epsilon_1\leftarrow\epsilon$.\;\label{init}
		\nl \While{$\epsilon_t > \epsilon/(d-|\cV|)$}{

			\nl $N_t \leftarrow N(\epsilon_t/2, \delta/(T|\cF_{d,k}|))$\label{Nt} \;
			\nl For each subset in $\subsets{d} \setminus \subsets{\cV}$, observe it in $N_t- N_{t-1}$ random samples.\label{sample}\; ;
			\nl \Repeat{$\accept_j = \accept_{j-1}$ } {
				\nl $\theta_j\leftarrow (d-|\cV|)\cdot\epsilon_t$ \;\label{theta_j} \label{iteration}
				
				\nl $\hat{G}_j\leftarrow\argmax\{\escore_t(G) \mid G\in\struct_j\}$ \hspace{2em}\# Recall: $\struct_j \equiv \{ G \in \struct_{d,k} \mid \accept_j \subseteq G\}$\label{maximization}\;
				\nl $\hat{E}_j\leftarrow$ the EC in $\ecset_{d,k}$ that includes $\hat{G}_j$ \label{Gj} \;
				\nl $L_j\leftarrow\{E \in \ecset_{d,k}\mid (\cons{E}{j}\neq\emptyset) \wedge (\escore_t(\cons{\hat{E}_j}{j})-\escore_t(\cons{E}{j})\leq \theta_j) \}$ \label{L_j}\; 
				\nl\eIf{$\exists f\in \uni{\cons{\hat{E}_j}{j}}\cap\actf(\cV)$ such that  $\forall E\in L_j$, $f\in \uni{\cons{E}{j}}$ \label{if}}{
					\nl Set $f \leftarrow \family{X_v,\Pi}$ such that $f$ satisfies the condition  above and $v \in [d]$\label{condition}\;
					\nl $\cV \leftarrow \cV \cup \{v\}$, $\accept_{j+1}\leftarrow\accept_j\cup\{f\}$  \hspace{3em} \# accept family $f$\label{accept}\;
					
				} { \nl $\accept_{j+1}\leftarrow\accept_j$ }
				\nl $j \leftarrow j+1$\;
				
			}
			\nl \textbf{If } $|\cV|=d$ \label{step:if} \textbf{then} set $\hat{G}\leftarrow\accept_j$ and \Return $\hat{G}$. \;
			\nl $t\leftarrow t+1$ \;
			\nl $\epsilon_t \leftarrow \epsilon_{t-1} / 2$ \;
			
		}
		\nl $\epsilon_{\mathrm{last}} \leftarrow \epsilon/(d-|\cV|)$,  $N_T \leftarrow N(\epsilon_{\mathrm{last}}/2, \delta/(T|\actf(\cV)|))$ \label{step:epslast}\;
		\nl For each subset in $\subsets{d} \setminus \subsets{\cV}$, observe it in $N_T- N_{T-1}$ random samples.\;
		
		\nl \textbf{Return} $\hat{G}\leftarrow\argmax\{\escore_T(G) \mid G\in\struct_j\}$. \label{return}\;
		\caption{\algname}\label{alg:active_algorithm}
	\end{algorithm}

	\algname\ works in rounds. At each round $t$, the algorithm observes all the subsets in $\subsets{d}$ that include at least one variable not in $\cV$. Each such subset is observed a sufficient number of times to obtain a required accuracy. Let $\hat{H}_t(f)$ be the estimator for $H(f)$ based on the samples observed until round $t$, and denote the empirical score at round $t$ by $\escore_t(G) :=  -\sum_{f \in G} \hat{H}_t(f)$. Note that unlike the true score, the empirical score of the graphs in an EC might not be the same for all the graphs, due to the use of partial observations. Therefore, we define the empirical score of a set of graphs $A$ as $\escore(A):=\max_{G \in A} \escore(G).$ Iteratively within the round, \algname\ finds some equivalence class $\hat{E}$ that maximizes the empirical score $\escore(\hat{E})$ and is consistent with the families accepted so far. Here, $\struct_j := \{ G \in \struct_{d,k} \mid \accept_j \subseteq G\}$ denotes the set of graphs that are consistent with these families in iteration $j$.  
	We also denote $\uni{A} := \bigcup_{G\in A}G$, the set of all families in any of the graphs in $A$.

	For $E \in \ecset_{d,k}$, denote $\cons{E}{j} := E \cap \struct_j$. \algname\ calculates a threshold $\theta_j$, and searches for a family such that in each of the ECs whose empirical score is $\theta_j$-close to the empirical score of $\hat{E}$, there exists at least one graph which is consistent with $\accept_j$ and includes this family. If such a family exists, it can be guaranteed that it is a part of some optimal structure along with previously accepted families. Therefore, it is accepted, and its child variable is added to $\cV$.
	At the end of every round, the required accuracy level $\epsilon_t$ is halved. Iterations continue until $\epsilon_t\leq\epsilon/(d-|\cV|)$. It is easy to verify that the total number of rounds is at most $T-1 =  \lceil \log_2(d) \rceil$, where $T$ is defined in \algname. In the last round, which occurs outside of the main loop, if any families are still active, the remaining variable subsets are observed based on the required accuracy $\epsilon$. \algname\ then returns a structure which maximizes the empirical score, subject to the constraints set by the families accepted so far.
	
	\begin{table}[t]
		\setlength\extrarowheight{2pt} 
		\centering

		\begin{tabular}{l|l|l|l|l|l}
			$j$	& $\accept_j$ & $\cV$ & $\hat{G}_j$ & $\hat{E}_j$ & $L_j$ \\
			\hline
			1 & $\emptyset$ & $\emptyset$ & $G_3$ & $E_3$ & $\{E_1,E_2,E_3\}$ \\
			\hline
			2 & $\{\family{1,\emptyset}\}$ & $\{1\}$ & $G_3$ & $E_3$ & $\{E_1,E_2,E_3\}$ \\
			\hline
			3 & $\{\family{1,\emptyset},\family{2,\{1\}}\}$ & $\{1,2\}$ & $G_3$ & $E_3$ & $\{E_1,E_2,E_3\}$ \\
			\hline
			4 & $\{\family{1,\emptyset},\family{2,\{1\}}\}$ & $\{1,2\}$ & $G_2$ & $E_2$ & $\{E_1,E_2\}$ \\
		\end{tabular}
		\caption{A summary of the example trace}\label{tab:example_run}
		\bigskip
		
		\begin{tabular}{|c|c|c|}
			\hline
			$E_1$ & $E_2$ & $E_3$ \\
			\hline
			$1\rightarrow2\rightarrow3\rightarrow4$ & $1\rightarrow2\rightarrow4\rightarrow3$ &  
			$2\leftarrow1\rightarrow3\rightarrow4$\\
			$G_1$ & $G_2$ & $G_3$ \\
			\hline
			$1\leftarrow2\rightarrow3\rightarrow4$ & $1\leftarrow2\rightarrow4\rightarrow3$ &
			$2\rightarrow1\rightarrow3\rightarrow4$  \\
			
			$G_4$ & $G_5$ & $G_6$ \\
			\hline
			$1\leftarrow2\leftarrow3\rightarrow4$ & $1\leftarrow2\leftarrow4\rightarrow3$ & $2\leftarrow1\leftarrow3\rightarrow4$ \\
			
			$G_7$ & $G_8$ & $G_9$ \\
			\hline
			
			$1\leftarrow2\leftarrow3\leftarrow4$ & $1\leftarrow2\leftarrow4\leftarrow3$ & $2\leftarrow1\leftarrow3\leftarrow4$ \\
			
			$G_{10}$ & $G_{11}$ & $G_{12}$ \\
			\hline
			
		\end{tabular}
		\caption{Three ECs in $\ecset_{4,1}$. Each of these ECs includes four structures.}\label{tab:example_ecs}

	\end{table} 
	
	\paragraph{An execution example.} To demonstrate how \algname\ works, we now give an example trace for \algname\ with $d=4,k=1.$ A summary of this trace is provided in \tabref{example_run}. $\mathcal{F}_{4,1}$ includes four families with an empty parent set, and twelve families with a parent set of size one.
	\tabref{example_ecs} lists three of the ECs in $\ecset_{4,1}$ that we will refer to in this execution example. Note that  $T=\log_2(8)=3.$ 
	\begin{itemize}
		\item After the initialization stage (line \ref{init}), \algname\ draws $N_1$ samples from each pair of variables in $\{1,2,3,4\}$ (line \ref{sample}). 
		\item Suppose that the graph that maximizes the empirical score in line \ref{maximization} is $\hat{G}_1=G_3$, where $G_3$ is given in  \tabref{example_ecs}. Thus, in line \ref{Gj}, $\hat{E}_1=E_3.$ 
		\item Since $\accept_1$ is an empty set, $\struct_j = \struct_{4,1}$, and so $L_1$ (line \ref{L_j})  simply includes all the ECs that are empirically $\theta_1$-close  to $E_3.$ Suppose that $L_1=\{E_1,E_2,E_3\}.$
		\item The condition in line \ref{if} is satisfied by the families $\family{1,\emptyset}$ and $\family{2,\{1\}}$ (see $G_1,G_2,G_3$ in \tabref{example_ecs}), and by the family $\family{1, \{2\}}$ (see $G_4,G_5,G_6$ in \tabref{example_ecs}). 
		Suppose that $f:=\family{1,\emptyset}$ is selected in line \ref{condition} and so $f$ is accepted (line \ref{accept}). 
		Thus, $\accept_2=\{\family{1,\emptyset}\}$, and $\cV=\{1\}.$
		\item In the next iteration, $j=2$, the algorithm hasn't drawn any new samples. Therefore, since $G_3$ includes the accepted family, we have $\hat{G}_2=G_3,\hat{E}_2=E_3$ as in the previous iteration. Suppose that $L_2=L_1.$
		\item Note that only one graph in each EC in \tabref{example_ecs} includes the family $\family{1,\emptyset}$. Therefore, $E_1^{\cap 2}=\{G_1\},$ $E_2^{\cap 2}=\{G_2\},$ and $E_3^{\cap 2}=\{G_3\}$. The accepted family in this iteration can only be $\family{2,\{1\}}$, which is shared by $G_1,G_2$, and $G_3$.  Therefore, we now have $\accept_3=\{\family{1,\emptyset},\family{2,\{1\}}\},$ and $\cV=\{1,2\}.$
		\item  In iteration $j=3$, suppose that $L_3=L_2=\{E_1,E_2,E_3\}.$ $\hat{G}_3, \hat{E}_3$ and $E_i^{\cap 3}=\{G_i\}$ for $i\in\{1,2,3\}$ are the same as in the previous iterations. There is no family for $\{3,4\}$ that is shared by  $G_1,G_2$, and $G_3.$ Thus, in this iteration the condition on line \ref{if} does not hold, and no family is accepted on iteration 3. The condition ending the repeat-until loop  now holds, and \algname\ proceeds to the next round.
		\item On the next round, $t=2$, $N_2 - N_1$ additional samples are drawn, such that the total number of samples taken from each subset of size $2$ of variables (except for the subset $\{1,2\}$) is equal to $N_2$. Now  $j=4$.
		\item  Suppose that $\hat{G}_4=G_2$ in line \ref{maximization}. Thus, in line \ref{Gj}, $\hat{E}_4=E_2$. Suppose further that $L_4 = \{E_1,E_2\},$ and note that $E_2^{\cap 4}=\{G_1\}$ and $E_1^{\cap 4}=\{G_2\}.$ 
		Again, no family for $\{3,4\}$ is shared by $G_1$ and $G_2$, and no family is accepted at iteration 4. This causes the second round of the algorithm to terminate.
		\item The main loop now ends, since $T-1$ rounds have been completed. The final sampling round, round $T=3$, occurs outside of the main loop. Again all the subsets of size $2$ except for $\{1,2\}$ are sampled, so the total number of samples is $N_3$. \algname\ returns the highest-scoring graph that includes the accepted families $\family{1,\emptyset},\family{2,\{1\}}$.
	\end{itemize}

	\paragraph{The computational complexity of \algname.} As mentioned in \secref{related} above, score-based structure learning is NP-hard in the general case \citep{Chickering96}. Thus, \algname\ is not an efficient algorithm in the general case. Nonetheless, \algname\ is structured so that the computationally hard operations are encapsulated in two specific black boxes: 
	\begin{itemize}
		\item The score maximization procedure, used in lines \ref{maximization} and \ref{return}. A similar procedure is required by any score-based structure learning algorithm, including the naive algorithm proposed in \secref{naive}, as well as structure learning algorithms for the full information setting \cite[see, e.g.,][]{Chickering02, SilanderMy12,  BartlettCu13}. Thus, established and successful efficient heuristics exist for this procedure. In \algname, the maximization is constrained to include specific families. This does not add a significant computational burden to these heuristics. 
		\item The assignment of ECs that satisfy the required constraints in line \ref{L_j}. This procedure can be divided into two steps:
		\begin{enumerate}
			\item Compute all the high-scored structures that are consistent with the accepted families;
			\item Aggregate equivalent structures into ECs.
		\end{enumerate}
		
		Step 1 can be replaced with an efficient heuristic proposed in \cite{LiaoShCuVa18}, which is derived from score-maximization heuristics. 
		 
		Step 2 requires computing the EC of each of these structures, which can be done efficiently. 
		
	\end{itemize}
	This encapsulation thus allows using heuristics for these black boxes to obtain a practical algorithm, as we show in \secref{experiments} below, whereas an exact implementation of these black boxes would be exponential in $d$ in the worst case \cite[see, e.g.,][]{Chickering02, LiaoShCuVa18}.

	In the next section, we show that \algname\ indeed finds an $\epsilon$-optimal structure with a probability at least $1-\delta$.

	\section{Correctness of \algname}\label{sec:correct}
	The following theorem states the correctness of \algname.
	
	\begin{theorem}\label{thm:correct}
		With a probability at least $1-\delta$, the graph $\hat{G}$ returned by \algname\ satisfies $\cS(\hat{G}) \geq \cS^* - \epsilon.$
	\end{theorem}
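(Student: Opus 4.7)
The plan is to isolate all probabilistic content into a single high-probability event $\event$, and then argue correctness deterministically under $\event$ by a combinatorial induction.

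\textbf{Step 1 (good event).} Define $\event$ to be the event that at every round $t$ executed by \algname\ and every family $f$ that is a candidate at the start of round $t$ (i.e., $f \in \actf(\cV)$ for the then-current $\cV$), the empirical estimate satisfies $|\hat{H}_t(f) - H(f)| \leq \epsilon_t/2$. By the defining property of $N(\cdot,\cdot)$ applied to the choice $N_t = N(\epsilon_t/2, \delta/(T|\actf(\cV)|))$, a union bound over the $|\actf(\cV)|$ candidates in round $t$ yields a per-round failure probability at most $\delta/T$; a second union bound across the at most $T$ rounds gives $\P(\event)\geq 1-\delta$. Condition on $\event$ henceforth.

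\textbf{Step 2 (invariant).} Prove by induction on the iteration index $j$ the following invariant: there exists an optimal EC $E^*$ (one with $\cS(E^*)=\cS^*$) such that $\cons{E^*}{j}\neq \emptyset$. The base case is trivial since $\accept_1=\emptyset$. For the inductive step, the key observation is that for any two graphs $G_1,G_2 \in \struct_j$ the families in $\accept_j$ cancel in both $\escore_t$ and $\cS$, leaving only families whose child lies outside $\cV$; on $\event$ such families are candidates in round $t$ and obey $|\hat{H}_t(f)-H(f)|\leq \epsilon_t/2$, so
\[
|(\escore_t(G_1)-\escore_t(G_2))-(\cS(G_1)-\cS(G_2))| \leq 2(d-|\cV|)\cdot \tfrac{\epsilon_t}{2} = \theta_j.
\]
Taking $G_1$ to be the empirical-score maximizer in $\cons{\hat E_j}{j}$ and $G_2$ any element of $\cons{E^*}{j}$ (nonempty by the hypothesis), and using $\cS(G_1)\leq \cS^* = \cS(G_2)$, yields $\escore_t(\cons{\hat E_j}{j}) - \escore_t(\cons{E^*}{j})\leq \theta_j$, i.e., $E^* \in L_j$. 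If a family $f$ is accepted in iteration $j$, the acceptance rule forces $f \in U(\cons{E}{j})$ for every $E \in L_j$; applied to our $E^*$ this furnishes $G \in E^*$ with $\accept_{j+1}=\accept_j\cup\{f\}\subseteq G$, so the same $E^*$ witnesses the invariant at $j+1$. Otherwise $\accept_{j+1}=\accept_j$ and there is nothing to prove.

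\textbf{Step 3 (final output).} If \algname\ returns in the $|\cV|=d$ check, then $\accept_j$ is a complete DAG, and the invariant forces $\accept_j \subseteq G$ for some $G \in E^*$ with $|G|=d$, hence $\accept_j = G$ and $\cS(\hat G)=\cS^*$. Otherwise the last round ensures accuracy $\epsilon_{\mathrm{last}}/2 = \epsilon/(2(d-|\cV|))$ for every remaining candidate family and returns $\hat G = \argmax_{G \in \struct_j}\escore_T(G)$. Re-applying the cancellation bound of Step 2 in round $T$ with $\epsilon_{\mathrm{last}}$ in place of $\epsilon_t$, for any $G^* \in \cons{E^*}{j}$,
\[
\cS(\hat G) - \cS(G^*) \;\geq\; (\escore_T(\hat G) - \escore_T(G^*)) \;-\; (d-|\cV|)\,\epsilon_{\mathrm{last}} \;\geq\; 0 - \epsilon,
\]
so $\cS(\hat G) \geq \cS^* - \epsilon$, as required.

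\textbf{Main obstacle.} The central subtlety is the tightness of the threshold $\theta_j = (d-|\cV|)\epsilon_t$: with the naive $d\epsilon_t$ one could not hope to shrink per-round accuracy as $\cV$ grows, and the sample complexity of the last round could not be set to $N(\epsilon/(2(d-|\cV|)),\cdot)$ instead of $N(\epsilon/(2d),\cdot)$. The reason $(d-|\cV|)$ suffices is precisely the cancellation of $\accept_j$-families inside every comparison within $\struct_j$ that the algorithm ever performs; verifying this cancellation systematically (in the definition of $L_j$, in the choice of $\hat G_j$, and in the final argmax) is the main work, and it also guarantees that every entropy estimate the algorithm relies on is for a family whose child variable is currently outside $\cV$, so the union bound of Step 1 covers everything without needing to re-estimate the families of already-accepted variables.
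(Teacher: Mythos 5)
Your proposal is correct and follows essentially the same route as the paper: the same good event $\event$ with the same union bound, the same inductive invariant (the paper's Lemma~5.2 that $\struct_j^*\neq\emptyset$, which your persistent-$E^*$ formulation strengthens only cosmetically) established via the same deviation bound on score differences within $\struct_j$ to show $E^*\in L_j$, and the same final-round comparison against an optimal graph in $\struct_J$. Your explicit handling of the $|\cV|=d$ early-return case is a minor addition the paper leaves implicit.
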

	
	To prove \thmref{correct}, we first give some preliminaries. Let $\cV_t$ be the set $\cV$ at the beginning of round $t$. Define the following event:
	\[
	\event := \{\forall t \in [T],\forall f\in\actf(\cV_t), |\hat{H}_{t}(f)-H(f)|\leq\epsilon_{t}/2\}.
	\]
	Noting that for any family in $\actf(\cV_t)$, $\hat{H}_t(f)$ is estimated based on $N_t = N(\epsilon_t/2, \delta/(T|\cF_{d,k}|))$ samples, it is easy to see that by a union bound over at most $T$ rounds and at most $|\cF_{d,k}|$ families in each round, $\event$ holds with a probability at least $1-\delta$.

	Let an \emph{iteration} of \algname\  be a single pass of the inner loop starting at line \ref{iteration}. Let $J$ be the total number of iterations during the entire run of the algorithm. Note that a single round $t$ can include several iterations $j$. Denote by $\cV_{(j)}$ the value of $\cV$ at the start of iteration $j$. Under $\event$, it follows from above that at any iteration $j$ in round $t$, for any graph $G \in \struct_j$ (recall that we treat $G$ as the set of families that the graph consists of), 
	\begin{equation}\label{eq:thetat}
	|\score(G\setminus \accept_j) - \escore_t(G\setminus \accept_j)| \leq \sum_{f\in G\setminus \accept_j }|H(f) - \hat{H}_t(f)| \leq (d-|\cV_{(j)}|)\cdot\epsilon_t/2 = \theta_j/2,
	\end{equation}
	
	where $\theta_j$ is defined in line \ref{theta_j} of \myalgref{active_algorithm}. 
	We now give a bound on the empirical score difference between equivalence classes. For $i \in {1,2}$, let $E_i\in \ecset_{d,k}$ be two equivalence classes such that $\cons{E_i}{j} \neq \emptyset$, and $G_i \in \argmax_{G \in \cons{E_i}{j}} \escore_t(G)$. We have $G_i \in \struct_j$. Hence, $|\score(G_i\setminus \accept_j) - \escore_t(G_i\setminus \accept_j)| \leq \theta_j/2$.
	In addition, $\score(G_i\setminus \accept_j) = \score(E_i) - \score(\accept_j)$, and by the definition of the empirical score of a set of structures, we have that $\escore_t(\cons{E_i}{j}) = \escore_t(G_i) = \escore_t(G_i \setminus \accept_j) + \escore_t(\accept_j).$
	Therefore, under $\event$,
	\begin{align}\label{eq:thetatec}
	&\escore_t(\cons{E_1}{j}) - \escore_t(\cons{E_2}{j}) =
	\escore_t(G_1 \setminus \accept_j) - \escore_t(G_2 \setminus \accept_j) \notag\\
	&= (\escore_t(G_1 \setminus \accept_j) - \score(G_1 \setminus \accept_j))  +
	(\score(G_1 \setminus \accept_j) - \score(G_2 \setminus \accept_j)) \notag\\
	&\quad+(\score(G_2 \setminus \accept_j) - \escore_t(G_2 \setminus \accept_j))  \notag\\
	&\leq \score(G_1 \setminus \accept_j) - \score(G_2 \setminus \accept_j) + \theta_j\notag\\
	&= \score(E_1) - \score(E_2) + \theta_j.
	\end{align}
	
	The following lemma shows that if a family is accepted in the main loop of \myalgref{active_algorithm}, then it is in some optimal structure which includes all the families accepted so far.
	This shows that in \algname, there is always some optimal structure that is consistent with the families accepted so far. 
	
	Denote the set of optimal graphs in $\struct_j$ by $\struct_j^* := \struct^* \cap \struct_j$.
	
	\begin{lemma}\label{lem:family_correctness}
		Assume  that $\event$ occurs.  Then for all $j \in [J]$, $\struct_{j}^*\neq\emptyset$.
		
	\end{lemma}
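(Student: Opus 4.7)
The plan is an induction on the iteration index $j$ with the invariant being precisely the claim: some globally optimal DAG always extends the currently accepted families, i.e., $\struct_j^* \neq \emptyset$. The base case $j=1$ is immediate, since $\accept_1 = \emptyset$ gives $\struct_1 = \struct_{d,k}$ and hence $\struct_1^* = \struct^* \neq \emptyset$.

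For the inductive step, assume $\struct_j^* \neq \emptyset$. If the \textbf{else} branch is taken at iteration $j$, then $\accept_{j+1} = \accept_j$, so $\struct_{j+1}^* = \struct_j^* \neq \emptyset$ and there is nothing to prove. Otherwise some family $f \in U(\cons{\hat{E}_j}{j}) \cap \actf(\cV)$ is accepted, with the guarantee that $f \in U(\cons{E}{j})$ for every $E \in L_j$. The plan is then to pick an arbitrary $G^* \in \struct_j^*$ afforded by the inductive hypothesis, let $E^*$ denote its Markov equivalence class, and show $E^* \in L_j$. Once this is in hand, the acceptance rule produces a graph $G \in \cons{E^*}{j}$ with $f \in G$; because $G$ lies in the optimal EC $E^*$ we have $\score(G) = \score(E^*) = \score^*$, and because $G \in \struct_j$ with $f \in G$, the set $\accept_{j+1} = \accept_j \cup \{f\}$ sits inside $G$, so $G \in \struct_{j+1}^*$ and the induction closes.

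To verify that $E^* \in L_j$, I first note that $G^* \in \cons{E^*}{j}$, so $\cons{E^*}{j} \neq \emptyset$. For the score-gap condition, I instantiate the EC-level inequality \eqref{thetatec}, which was derived under $\event$, with $E_1 = \hat{E}_j$ and $E_2 = E^*$, to obtain
\[
\escore_t(\cons{\hat{E}_j}{j}) - \escore_t(\cons{E^*}{j}) \leq \score(\hat{E}_j) - \score(E^*) + \theta_j \leq \theta_j,
\]
where the last inequality uses that $E^*$ is optimal, so $\score(E^*) = \score^* \geq \score(\hat{E}_j)$. Both defining conditions of $L_j$ are therefore met, and the argument from the previous paragraph applies.

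The main subtlety, to my mind, is the bookkeeping around Markov equivalence: because observations are partial, the empirical score is \emph{not} constant on an equivalence class, so one must carefully distinguish the max-over-EC quantity $\escore_t(\cons{E}{j})$ from the genuinely EC-level true score $\score(E)$. Reconciling the two is exactly what \eqref{thetatec} does under $\event$, and once that identification is made the remainder of the proof is a clean combinatorial manipulation of accepted families and EC membership. No extra case analysis for cycles or for $k$-parent violations is needed, because $\cons{E^*}{j} \subseteq \struct_{d,k}$ automatically.
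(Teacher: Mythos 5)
Your proof is correct and follows essentially the same route as the paper's: induction on $j$, using \eqref{thetatec} under $\event$ to place an optimal EC with nonempty intersection with $\struct_j$ into $L_j$, and then invoking the acceptance condition to conclude that the accepted family lies in some graph of that optimal EC. The only cosmetic difference is that you fix a concrete $G^*\in\struct_j^*$ and work with its equivalence class, whereas the paper phrases the same step directly in terms of an optimal EC $E_*$ with $\cons{E_*}{j}\neq\emptyset$.
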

	
	\begin{proof}
		The proof is by induction on the iteration $j$. For $j=1$, we have $\struct_j = \struct_{d,k}$, hence $\struct_1^* = \struct^* \neq \emptyset$. 
		Now, suppose that the claim holds for iteration $j$. Then $\struct_j^* \neq \emptyset$. 		We show that if any family $f$ is accepted in this iteration, it satisfies $f \in \uni{\struct^*_j}$. This suffices to prove the claim, since $\struct^*_j \subseteq \struct_j$  implies that all the graphs in $\struct^*_j$ include $\accept_{j}$; Combined with $f \in \uni{\struct^*_j}$, this implies that there is at least one graph in $\struct^*_j$ that includes $\accept_{j+1} \equiv \accept_{j}\cup \{f\}$, thus $\struct_{j+1}^* \neq \emptyset$, as needed.

		Suppose that $f$ is accepted at iteration $j$.
		Let $E_* \subseteq \struct^*$ be an optimal EC such that $\cons{E_*}{j}\neq\emptyset$; one exists due to the induction hypothesis.
		We show that $f \in \uni{\cons{E_*}{j}}$, thus proving that $f \in \uni{\struct^*_j}$. By event $\event$ and \eqref{thetatec}, we have
		$\escore_t(\cons{\hat{E}_j}{j}) - \escore_t(\cons{E_*}{j}) \leq \score(\hat{E}_j) - \score(E_*) + \theta_j \leq \theta_j.$
		
		The last inequality follows since $\score(\hat{E}_t) \leq \score(E_*)$.
		Therefore, for $L_j$ as defined in line~\ref{L_j} of \myalgref{active_algorithm}, we have $E_* \in L_j$. Since $f$ is accepted, it satisfies the condition in line \ref{if}, therefore $f \in \uni{\cons{E_*}{j}}$. This proves the claim.                
		         
	\end{proof}

	Using \lemref{family_correctness}, the correctness of \algname\ can be shown.
	\begin{proof}[Proof of \thmref{correct}]
		Assume that $\event$ holds. As shown above, this occurs with a probability at least $1-\delta$. Let $\accept := \accept_J$ be the set of accepted families at the end of the main loop of \myalgref{active_algorithm}. For any $G \in \struct_J$, we have $\accept \subseteq G$, hence $\score(G) = \score(\accept) + \score(G \setminus \accept)$, and similarly for $\escore_T$. 
		
		In addition, for each family $f \in G \setminus \accept$, we have  $f \in \actf(\cV_T)$, since $\actf(\cV_T)$ includes all the families of a child node whose family was not yet accepted. So, by event $\event$ we have $|\hat{H}_T(f) - H(f)| \leq \epsilon_{\mathrm{last}}/2$, where $\epsilon_{\mathrm{last}}$ is defined in line~\ref{step:epslast} of \myalgref{active_algorithm}.
		Therefore, as in \eqref{thetat}, for any $G \in \struct_J$, 
		\[
		|\score(G \setminus \accept) -\escore_T(G \setminus \accept)|
		\leq (d - |\cV_T|)\cdot \epsilon_{\mathrm{last}}/2 = \epsilon/2.
		\]
		By \lemref{family_correctness}, we have that $\struct_J^*\neq\emptyset$. Let $G^*$ be some graph in $\struct_J^* \subseteq \struct_J$. Recall that also $\hat{G} \in \struct_J$.
		By the definition of $\hat{G}$, $\escore_T(\hat{G}) \geq \escore_T(G^*)$.
		Hence, $\escore_T(\hat{G} \setminus \accept) \geq \escore_T(G^* \setminus \accept)$.
		Combining this with the inequality above, it follows that
		\[
		\score(\hat{G} \setminus \accept) \geq \escore_T(\hat{G} \setminus \accept) - \epsilon/2 \geq \escore_T(G^* \setminus \accept) - \epsilon/2 \geq \score(G^* \setminus \accept) - \epsilon.
		\]
		Therefore, $\score(\hat{G}) \geq \score(G^*) - \epsilon$, which proves the claim.
	\end{proof}

	\section{The Sample Complexity of \algname}\label{sec:sample}
	
	We now analyze the number of samples drawn by \algname. We show that it is never much larger than that of the naive algorithm, while it can be significantly smaller for some classes of distributions. For concreteness, we show sample complexity calculations using the value of $N(\epsilon,\delta)$ given in \secref{naive} for discrete distributions with the plug-in entropy estimator. We assume a  regime with a small constant $k$ and a possibly large $d$. 
	
	\algname\ requires the largest number of samples if no family is accepted until round $T\equiv\lceil \log_2( 2d) \rceil$. In this case, the algorithm pulls all subsets in $\subsets{d}$, each for $N(\epsilon/(2d),\delta/(T|\cF_{d,k}|))$ times. Thus, the worst-case sample complexity of \algname\ is
	\[
	|\subsets{d}| \cdot N(\epsilon/(2d),\delta/(T|\cF_{d,k}|)) = \widetilde{\Theta}\left(\frac{kd^{k+3}}{\epsilon^2}\log( 1/\delta) \right).
	\]

	In the general case, the sample complexity of \algname\ can be upper-bounded based on the round in which each variable was added to $\cV$. Let $t_1 < \ldots < t_n$ be the rounds in which at least one variable was added to $\cV$, and denote $t_{n+1} := T$. Let $V_i$ for $i \in [n]$ be the total number of variables added until the end of round $t_i$, and set $V_0 = 0, V_{n+1} = d$. Note that $V_{n}$ is the size of $\cV$ at the end of \algname.
	In round $t_i$, $V_{i} - V_{i-1}$ variables are added to $\cV$. Denoting by $Q_i$  the number of subsets that are observed by \algname\ for the last time in round $t_i$, we have
	\[
	Q_{i} := |\subsets{V_{i}} \setminus \subsets{V_{i-1}}| = \binom{V_{i}}{k+1} - \binom{V_{i-1}}{k+1} \leq (V_i - V_{i-1})V_i^k.
	\]
	The sample complexity of \algname\ is at most $\sum_{i \in [n+1]} Q_{i} N_{t_i}$, where
	\[
	N_{t_i} \equiv N(\epsilon_{t_i}/2,\delta/(T|\cF_{d,k}|)) = \widetilde{O}\left(\frac{1}{\epsilon_{t_i}^2}k\log(d/\delta)\right).
	\]

	The sample complexity is thus upper-bounded by
	\begin{equation*}\label{eq:tsc}
	 \widetilde{O}\left( \Big(\sum_{i \in [n]}  (V_i - V_{i-1})V_i^k\cdot\frac{1}{\epsilon^2_{t_i}} + kd^k(d-V_n)^3 \cdot \frac{1}{\epsilon^2}\Big) \log(d/\delta)\right).
	\end{equation*}
	
	Since $\epsilon_t$ is decreasing in $t$, if most
	variables are added to $\cV$ early in the run of the algorithm, considerable savings in the sample complexity are possible.  
	We next describe a family of distributions for which \algname\ can have a significantly smaller sample complexity compared to the naive algorithm. 
	In the next section, we describe a general construction that satisfies these requirements.

	\begin{definition}\label{def:gammastable}
		Let $\gamma > 0$, and let $V \subseteq \mathbf{X}$. A \emph{$(\gamma,V)$-stable distribution} over $\mathbf{X}$ is a distribution which satisfies the following conditions.
		\begin{enumerate}
			\item In every $G \in \struct_{d,k}$ such that $\score(G) \geq \score^* - \gamma$, the parents of all the variables in $V$ are also in $V$. \label{vgap}
			\item There is a unique optimal EC for the marginal distribution on $V$, and the difference in scores between the best EC and the second-best EC on $V$ is more than $\gamma$.  \label{unique}
		\end{enumerate}
		
	\end{definition}
	
	\thmref{stableimprove}, stated below, states that the sample complexity improvement for a discrete stable distribution can be as large as a factor of $\widetilde{\Omega}(d^3)$. In \secref{stableex}, we give a specific construction of a class of distributions that satisfies the necessary conditions of \thmref{stableimprove}. This class is characterized by the existence of two similar variables, one a noisy version of the other.
	
	\begin{theorem}\label{thm:stableimprove}
		 Let $\gamma > 0$. Let $v := |V|$. 
		Let $\cD$ be a discrete $(\gamma,V)$-stable distribution, and assume that \algname\ samples from $\cD.$
		Then, the sample complexity of  \algname\ is at most 
		\begin{equation}\label{eq:stable_sc}
		\widetilde{O}\left(\left(\frac{d^2v^{k+1}}{\gamma^2}+\frac{d^k(d-v)^3}{\epsilon^2} \right)\cdot k \log(1/\delta)\right).
		\end{equation}
		Furthermore, if $v=d-O(1)$ and $\epsilon=O(\gamma d^{-3/2})$, then the sample complexity improvement factor of \algname\ compared to the naive algorithm is $\widetilde{\Omega}(d^3).$  
	\end{theorem}

	To prove \thmref{stableimprove}, we first prove several lemmas.
	The following lemma gives a sufficient condition for a family to be accepted by \algname\ at a given iteration. 
	\begin{lemma}\label{lem:2nd_direction}
		Assume that $\event$ occurs. 
		Let $j$ be an iteration in the run of \algname. For a family $f$, define
		\[
		S_j^\neg(f) := \max\{\score(E) \mid E \in \ecset_{d,k}, \cons{E}{j} \neq \emptyset,  f \notin \uni{\cons{E}{j}}\}.
		\]
		This is the maximal score of an equivalence class which is consistent with families accepted so far and also inconsistent with $f$.
		
		Suppose that for some $f \in \uni{\struct_j^*}\cap \actf(\cV_{(j)})$, we have $S_j^\neg(f) < \score^* - 2\theta_j$. Then, some family is accepted by \algname\ at iteration $j$.
		 
	\end{lemma}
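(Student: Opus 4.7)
The plan is to show that the very family $f$ whose existence is assumed in the hypothesis satisfies the acceptance condition on line~\ref{if} of \myalgref{active_algorithm}, so that \emph{some} family (either $f$ or another one picked by the algorithm) is accepted at iteration $j$. A useful preliminary observation is that $\hat{E}_j$ itself always lies in $L_j$, since $\escore_t(\cons{\hat{E}_j}{j})-\escore_t(\cons{\hat{E}_j}{j})=0\le\theta_j$. Consequently, the membership condition ``$f\in U(\cons{\hat{E}_j}{j})$'' in line~\ref{if} is automatic once we establish the universal statement ``for every $E\in L_j$, $f\in U(\cons{E}{j})$''. The task thus reduces to proving this universal statement for the hypothesized $f$; membership in $\actf(\cV_{(j)})$ is supplied directly by the hypothesis.

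I would argue by contradiction. Suppose some $E\in L_j$ satisfies $f\notin U(\cons{E}{j})$. Since $\cons{E}{j}\neq\emptyset$, the definition of $S_j^\neg(f)$ yields $\score(E)\le S_j^\neg(f)<\score^*-2\theta_j$. On the other hand, because $f\in U(\struct_j^*)$, there is an optimal graph $G^*\in\struct_j^*$ containing $f$; let $E^*$ be its equivalence class. Then $E^*$ is an optimal EC with $G^*\in\cons{E^*}{j}\neq\emptyset$, so $\score(E^*)=\score^*$.

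Now I would combine two ingredients, both valid under event $\event$. First, the empirical-score optimality of $\hat{E}_j$ among ECs consistent with $\accept_j$ gives $\escore_t(\cons{\hat{E}_j}{j})\ge\escore_t(\cons{E^*}{j})$. Second, applying \eqref{thetatec} to the pair $(E^*,E)$ (i.e.\ the inequality in \eqref{thetatec} used in the reverse direction) yields $\escore_t(\cons{E^*}{j})-\escore_t(\cons{E}{j})\ge\score(E^*)-\score(E)-\theta_j=\score^*-\score(E)-\theta_j$. Chaining these and using $\score(E)<\score^*-2\theta_j$ produces
\[
\escore_t(\cons{\hat{E}_j}{j})-\escore_t(\cons{E}{j})>\score^*-(\score^*-2\theta_j)-\theta_j=\theta_j,
\]
which contradicts $E\in L_j$. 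Hence no such $E$ exists, $f$ satisfies the condition on line~\ref{if}, and some family is accepted at iteration $j$.

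The only delicate point, and the main thing to get right, is the budget of slack: the empirical score gap between two ECs tracks the true gap only up to $\pm\theta_j$ by \eqref{thetatec}, and membership in $L_j$ absorbs another $\theta_j$. The constant $2\theta_j$ in the hypothesis $S_j^\neg(f)<\score^*-2\theta_j$ is calibrated precisely so that after spending one $\theta_j$ on each of these two sources of slack, a strict separation between the empirical score of $\hat{E}_j$ and that of any EC excluding $f$ still survives. Everything else in the proof is routine bookkeeping, and the argument never needs to identify exactly \emph{which} family the algorithm accepts, only that the ``no family accepted'' branch cannot be taken.
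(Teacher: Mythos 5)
Your proof is correct and follows essentially the same route as the paper's: both use the empirical optimality of $\hat{E}_j$, apply \eqref{thetatec} to compare an optimal EC consistent with $\accept_j$ against any EC excluding $f$, and conclude that every such EC is excluded from $L_j$, so the acceptance condition on line~\ref{if} holds. The only cosmetic differences are that you phrase it as a contradiction and obtain the optimal consistent EC directly from the hypothesis $f\in U(\struct_j^*)$ rather than by citing \lemref{family_correctness}; the slack accounting ($\theta_j$ from \eqref{thetatec} plus $\theta_j$ from membership in $L_j$, matching the $2\theta_j$ in the hypothesis) is identical.
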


	\begin{proof}
		Let $E_* \subseteq \struct^*$ be an optimal EC such that $\cons{E_*}{j}\neq\emptyset$. By \lemref{family_correctness}, such an EC exists. 
		Suppose that the assumption of the lemma holds. Then, for some $f\in \uni{\struct_j^*}\cap \actf(\cV_{(j)})$, we have $S_j^\neg(f) < \score^* - 2\theta_j$. 
		It follows that for any $E \in \ecset_{d,k}$ such that $\cons{E}{j} \neq \emptyset$ and $f \notin \uni{\cons{E}{j}}$, $2\theta_j<\score(E_*)-\score(E)$. By the definition of $\hat{E}_j$ (line \ref{Gj} in \myalgref{active_algorithm}), we have  $\escore_t(\cons{\hat{E}_j}{j}) \geq \escore_t(\cons{E_*}{j})$. In addition, since event $\event$ occurs, \eqref{thetatec} holds.  
		Therefore, 
		
		\[
		\escore_t(\cons{\hat{E}_j}{j})-\escore_t(\cons{E}{j}) \geq \escore_t(\cons{E_*}{j})-\escore_t(\cons{E}{j}) \geq \score(E_*) -\score(E) - \theta_j > \theta_j.
		\]

		It follows that $E \notin L_j$, where $L_j$ is as defined in line~\ref{L_j} of \myalgref{active_algorithm}. Therefore, $\forall E \in L_j, f \in \uni{\cons{E}{j}}$. In particular, $f \in \uni{\cons{\hat{E}_j}{j}}$. Since  $f \in \actf(\cV_{(j)})$, it follows that $f \in \uni{\cons{\hat{E}_j}{j}} \cap \actf(\cV_{(j)})$. Therefore, the condition in line \ref{if} is satisfied, which implies that some family is accepted during iteration $j$. 
	\end{proof}
	
	Next, we give a characterization of the optimal structures for $\cD$. Let $\struct_V$ be the set of DAGs with an in-degree at most $k$ over $V$. Denote $\bar{V} := \mathbf{X} \setminus V$. We term a set of families $F \subseteq \cF_{d,k}$ a \emph{legal family set for $\bar{V}$} if it includes exactly one family for each variable in $\bar{V}$ and no other families, and has no cycles. First, we provide an auxiliary lemma.
	
	\begin{lemma}\label{lem:legaldag}
		If $G\in\struct_V$ and $F$ is a legal family set for $\bar{V}$, then $G \cup F \in \struct_{d,k}$.
	\end{lemma}
	
	\begin{proof}
		Denote $G':=G\cup F.$
		First, we show that $G'$ has an indegree of at most $k.$
		Any $G\in\mathcal{G}_V$ has exactly one family for every variable in $V$ with at most $k$ parents. Thus, in $G'$, there is exactly one family for each variable in $V\cup\bar{V}$, and each family includes at most $k$ parents. Thus, the degree constraint is not violated in $G'$.
		
		Next, we address the acyclicity constraint. 
		Assume in contradiction that $G'$ contains a cycle, and let $(A,B)$ be an edge in the cycle. This means that there is a directed path from $B$ to $A$ in $G'$. Let $(B,v_1,\ldots,v_m,A)$ be the nodes in the path. Note that $G$ includes only edges from $V$ to $V$, and $F$ includes edges from $V\cup \bar{V}$ to $\bar{V}.$  Consider the possible cases:
		\begin{enumerate}
			\item $A\in \bar{V}$, $B\in V$: Neither $G$ nor $F$ include edges from $\bar{V}$ to $V$. Thus, this case is impossible.
			\item $A\in V,$ $B\in V$: In this case, $(A,B)$ must be an edge in $G.$ In addition, since there are no edges from $\bar{V}$ to $V$ in $G'$, all the nodes $v_1,\ldots,v_m$ on the path from $B$ to $A$ are also in $V.$ This means that this path is entirely in $G$, which contradicts the acyclicity of $G.$
			\item $A\in V,$ $B\in \bar{V}$: In this case, the path from $B$ to $A$ includes at least one edge from $\bar{V}$ to $V$. However, neither $G$ nor $F$ include edges from $\bar{V}$ to $V$. Therefore, this case is impossible.
			\item $A\in \bar{V},$ $B\in \bar{V}$: In this case, $(A,B)$ must be in $F.$ In addition, since there are no edges from $\bar{V}$ to $V$ in $G'$, all the nodes $v_1,\ldots,v_m$ on the path form $B$ to $A$ are also in $\bar{V}.$ This means that this path is entirely in $F$, which contradicts the acyclicity of $F.$
			
		\end{enumerate}
		
		Since all cases are impossible, the existence of a cycle in $G'$ is impossible.
		This completes the proof.
	\end{proof}
	
	Next, we provide the following characterization of the optimal structures for $\cD$.
	 
	\begin{lemma}\label{lem:optimalstruct}
		Let $\cD$ be a $(\gamma,V)$-stable distribution.  Let $G^* \in E_*$, and let $F \subseteq G^*$ be set of the families of $\bar{V}$ in $G^*$.
		Then the following hold:
		\begin{enumerate}
			\item $G^* \setminus F \in \struct_V$;
			\item $\score(G^* \setminus F) = \max_{G \in \struct_V} \score(G)$;
			\item The score of $F$  is maximal among the legal family sets for $\bar{V}$.
		\end{enumerate}

	\end{lemma}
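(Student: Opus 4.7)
The plan is to prove the three claims in order, relying only on property~\ref{vgap} of \defref{gammastable}; property~\ref{unique} does not seem needed for this lemma and presumably enters the sample complexity analysis later on. Claim~1 is an immediate consequence of property~\ref{vgap}: since $G^* \in E_* \subseteq \struct^*$ we have $\score(G^*) = \score^* \geq \score^* - \gamma$, so every $X_i \in V$ satisfies $\Pi_i(G^*) \subseteq V$. By the definition of $F$ as the families of $\bar{V}$ in $G^*$, the families remaining in $G^*\setminus F$ are $\{\family{X_i,\Pi_i(G^*)} \mid X_i \in V\}$, all of whose edges lie inside $V$. Hence $G^*\setminus F$ is a DAG over $V$ of in-degree at most $k$, i.e.\ $G^*\setminus F \in \struct_V$.

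The remaining two claims hinge on one preparatory gluing observation: for any $G' \in \struct_V$ and any legal family set $F'$ for $\bar{V}$, the union $G'\cup F'$ lies in $\struct_{d,k}$, and the score decomposes as $\score(G'\cup F') = \score(G') + \score(F')$ because the two sets of families cover $V$ and $\bar{V}$ disjointly and together give exactly one family per variable. The only nontrivial part is acyclicity: edges of $G'$ stay inside $V$ and edges of $F'$ land in $\bar{V}$, so no edge of the union goes from $\bar{V}$ back to $V$; this forces any cycle to live entirely within $V$ or entirely within $\bar{V}$, contradicting the acyclicity of $G'$ or of $F'$ respectively.

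Given this gluing observation, claims~2 and~3 follow by symmetric swap-and-contradict arguments using the optimality of $\score^*$. For claim~2, if some $G' \in \struct_V$ satisfied $\score(G') > \score(G^*\setminus F)$, then $G'\cup F \in \struct_{d,k}$ would have $\score(G'\cup F) = \score(G') + \score(F) > \score(G^*\setminus F) + \score(F) = \score(G^*) = \score^*$, a contradiction. Claim~3 is the mirror image: any legal family set $F'$ for $\bar{V}$ with $\score(F') > \score(F)$ would make $(G^*\setminus F)\cup F'$ a graph in $\struct_{d,k}$ whose score strictly exceeds $\score^*$, again a contradiction. The gluing observation is the only place that requires any care; everything else reduces to the additive decomposition of $\score$ and the definition of $\score^*$.
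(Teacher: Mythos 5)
Your proof is correct and follows essentially the same route as the paper's: part~1 from property~\ref{vgap} of \defref{gammastable} applied to the optimal $G^*$, and parts~2 and~3 by gluing a candidate $G'\in\struct_V$ or a candidate legal family set $F'$ onto the complementary half of $G^*$ and comparing the resulting score with $\score^*$ via the additive decomposition. The only difference is that you spell out the acyclicity of the glued graph, which the paper asserts without proof in the remark just before the lemma; your justification (no edge of the union goes from $\bar{V}$ into $V$, so any cycle would lie wholly in $V$ or wholly in $\bar{V}$, contradicting the acyclicity of $G'$ or of $F'$) is valid.
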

	\begin{proof}
		By the assumptions on the distribution, since $G^*$ is an optimal structure, $\bar{V}$ has no outgoing edges into $V$. Therefore, $G^* \setminus F \in \struct_V$. This proves the first part of the lemma.

		To prove the second part of the lemma, let $G^*_1 \in \struct_V$ be some DAG over $V$ with a maximal score.  Note that $F$ is a legal family set for $\bar{V}$. 
		Let $G_1 := G^*_1 \cup F.$ By \lemref{legaldag}, $G_1\in \struct_{d,k}$. Then,
		\[
		\score(G^*\setminus F) + \score(F) = \score(G^*) \geq \score(G_1) = \score(G_1^*) + \score(F).
		\]
		Therefore, $\score(G^*\setminus F) \geq \score(G^*_1)$. Hence, $G^* \setminus F$ is also optimal over $V$. This proves the second part of the lemma. 
		
		To prove the third part of the lemma, let $F'\neq F$ be some legal family set for $\bar{V}$. Then, by \lemref{legaldag}, $G_2 := (G^* \setminus  F) \cup F' \in \struct_{d,k}$, 
		and $\score(G_2) = \score(G^*) - \score(F) + \score(F')$. 
		Since $G^*$ is an optimal structure, $\score(G_2) \leq \score(G^*)$. Therefore, $\score(F) \geq \score(F')$. This holds for all legal family sets of $\bar{V}$. Therefore, the score of $F$ is maximal among such sets, which proves the third part of the lemma.
	\end{proof}

	The next lemma shows that equivalence classes with a near-optimal score include graphs with common families.

	\begin{lemma}\label{lem:existetilde}
		Let $\cD$ be a $(\gamma,V)$-stable distribution. For a set of families $A\in\cF_{d,k}$ and an equivalence class $E$, denote $E^{\cap A}:=\{G \in E \mid A\subseteq G\}$.   
		Let $G^* \in E_*^{\cap A}$, and suppose that there is some $\tilde{E} \in \ecset_{d,k}$ such that $\tilde{E} \neq E_*$, $\score(\tilde{E}) \geq \score^* - \gamma$ and $\tilde{E}^{\cap A} \neq \emptyset$. 
		
		Then, there exists a graph $\tilde{G} \in \tilde{E}^{\cap A}$ such that the families of all the variables in $V$ are the same in both $G^*$ and $\tilde{G}$. 
	\end{lemma}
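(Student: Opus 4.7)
The plan is to exhibit $\tilde{G}$ explicitly by keeping the $\bar{V}$-families of an arbitrary representative of $\tilde{E}^{\cap A}$ and swapping out its $V$-subgraph for the one coming from $G^*$. The two stability conditions do all the real work: condition~\ref{vgap} (applied to $\tilde{E}$, which is near-optimal) enforces a clean decomposition of every relevant graph into a $V$-part and a $\bar{V}$-family set, and condition~\ref{unique} pins down the EC of the $V$-part.

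First I would pick any $\tilde{G}_0 \in \tilde{E}^{\cap A}$, let $\tilde{F}$ be its $\bar{V}$-families, and let $\tilde{G}_V := \tilde{G}_0 \setminus \tilde{F}$. Since $\score(\tilde{E}) \geq \score^* - \gamma$, condition~\ref{vgap} of \defref{gammastable} gives that $\tilde{G}_V \in \struct_V$ and $\tilde{F}$ is a legal family set for $\bar{V}$. Similarly, $F$ (from the statement of \lemref{optimalstruct}) is a legal family set for $\bar{V}$ and $G^* \setminus F \in \struct_V$. By additivity of $\score$, $\score(\tilde{G}_V) = \score(\tilde{G}_0) - \score(\tilde{F}) \geq \score^* - \gamma - \score(\tilde{F})$, and applying parts~2 and~3 of \lemref{optimalstruct} then gives $\score(\tilde{G}_V) \geq \score(G^* \setminus F) - \gamma = \max_{G \in \struct_V} \score(G) - \gamma$. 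By condition~\ref{unique}, the optimal EC on $V$ is unique and strictly more than $\gamma$ above any other EC on $V$, so $\tilde{G}_V$ must lie in the same EC as $G^* \setminus F$.

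I would then set $\tilde{G} := (G^* \setminus F) \cup \tilde{F}$ and verify three things. That $\tilde{G}$ is a DAG in $\struct_{d,k}$ follows because by condition~\ref{vgap} there are no edges from $\bar{V}$ to $V$ in $\tilde{G}$, and both $G^* \setminus F$ (a DAG on $V$) and $\tilde{F}$ (acyclic as a subset of the DAG $\tilde{G}_0$) are acyclic. That $A \subseteq \tilde{G}$ is checked family by family: a family of $A$ whose child lies in $V$ belongs to $G^* \setminus F \subseteq \tilde{G}$ because $A \subseteq G^*$; a family of $A$ whose child lies in $\bar{V}$ belongs to $\tilde{F} \subseteq \tilde{G}$ because $A \subseteq \tilde{G}_0$. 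Finally, that $\tilde{G} \in \tilde{E}$, i.e.~that $\tilde{G}$ is Markov-equivalent to $\tilde{G}_0$, is checked by comparing skeletons and v-structures: skeletons agree since $\tilde{G}_V$ and $G^* \setminus F$ share the same skeleton (same EC on $V$) and $\tilde{F}$ is unchanged; for a v-structure $a \to b \leftarrow c$, condition~\ref{vgap} forces $a,c \in V$ whenever $b \in V$, reducing to the EC agreement on $V$, whereas when $b \in \bar{V}$ the two colliding edges lie in the unchanged $\tilde{F}$ and adjacency of $a,c$ is either governed by $\tilde{F}$ (if either is in $\bar{V}$) or by the common $V$-skeleton (if both are in $V$).

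By construction the families of all variables in $V$ in $\tilde{G}$ are those of $G^* \setminus F$, which are the $V$-variable families of $G^*$, giving the claim. The main obstacle is the v-structure verification in the last step; the finite case analysis would be routine were it not for the possibility of edges from $\bar{V}$ into $V$, which is exactly what condition~\ref{vgap} rules out and is the reason stability is formulated the way it is.
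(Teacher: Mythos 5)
Your proposal is correct and follows essentially the same route as the paper's proof: the same construction $\tilde{G} = (G^*\setminus F)\cup\tilde{F}$, the same use of condition~\ref{unique} to place the $V$-part of the representative of $\tilde{E}^{\cap A}$ in the unique optimal EC on $V$ (you argue this via a direct score lower bound where the paper argues by contradiction, but it is the same estimate), and the same skeleton/v-structure verification of Markov equivalence.
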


	To prove this lemma, we use a known characterization of a Markov equivalence class, which relies on the notion of a \emph{v-structure}.\footnote{Also known as an "immorality" \cite[see, e.g.,][]{VermaPe90}} In a DAG $G$, a v-structure is an ordered triplet of nodes $(X,Y,Z)$ such that $G$ contains the edges $(X,Y)$ and $(Z,Y)$, and there is no edge in either direction between $X$ and $Z$ in $G$. 
	
	\begin{theorem}[\citealt{VermaPe91}]\label{thm:vstructures}
		Two DAGs are equivalent if and only if they have the same skeletons (underlying undirected graphs) and the same set of v-structures.
		
	\end{theorem}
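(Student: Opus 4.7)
The plan is to prove both directions separately: the forward direction (Markov equivalence implies same skeleton and v-structures) follows from local characterizations of adjacency and of unshielded colliders in terms of d-separation, while the backward direction (same skeleton and v-structures implies Markov equivalence) is proved by a transformation argument using covered-edge reversals.

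For the forward direction, I first show the skeletons must coincide. Two variables $X,Y$ are adjacent in a DAG $G$ iff no subset $S\subseteq V\setminus\{X,Y\}$ d-separates them: if adjacent, the direct edge cannot be blocked by any conditioning set; if non-adjacent, the parents in $G$ of whichever of $X,Y$ appears later in some topological ordering form a d-separating set by the local Markov property. Since Markov-equivalent DAGs induce the same d-separation relations, adjacency is preserved. Next, given matching skeletons, I characterize v-structures by d-separation. For an unshielded triple $X-Y-Z$ with $X,Z$ non-adjacent, if $Y$ is a collider then $Y$ is a child of both $X$ and $Z$, so the parents in $G$ of the later of $X,Z$ do not include $Y$; this yields a d-separator of $X,Z$ avoiding $Y$. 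If $Y$ is not a collider (chain or fork), then the subpath $X-Y-Z$ has $Y$ as a non-collider and is unblocked unless $Y$ is conditioned on, so $Y$ must appear in every d-separator of $X,Z$. Hence the v-structure status of each unshielded triple is determined by the d-separation relation and preserved under equivalence.

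For the backward direction, I would invoke the covered-edge reversal argument. Call an edge $X\to Y$ in $G$ \emph{covered} if the parents of $Y$ in $G$ are exactly $\{X\}$ together with the parents of $X$ in $G$. The argument rests on two facts: (a) reversing a single covered edge produces a DAG with the same skeleton, the same v-structures, and the same set of d-separations, which is checked by direct case analysis on d-connected paths passing through the reversed edge, verifying that path activity under any conditioning set is preserved; (b) any two DAGs sharing their skeleton and v-structures can be transformed into one another by a finite sequence of covered-edge reversals, proved by fixing a topological order for one DAG and inductively modifying the other to match, using covered-edge reversals to resolve edge-orientation discrepancies without creating or destroying v-structures. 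The main obstacle is fact (b): the critical sub-lemma is that whenever two such DAGs differ on at least one edge orientation, some covered edge exists whose reversal strictly decreases the number of disagreements, and exhibiting such an edge in general — and in particular ruling out cycles of obstructions while preserving acyclicity — is the delicate combinatorial heart of the proof.
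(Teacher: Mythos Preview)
The paper does not prove this theorem at all: it is quoted as a classical result of Verma and Pearl and used as a black box in the proofs of \lemref{existetilde} and \lemref{gapd1}. There is therefore no ``paper's own proof'' to compare against.

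Your sketch is a correct outline of a standard proof. The forward direction is fine: adjacency is characterized by the nonexistence of a d-separating set, and for an unshielded triple $X\text{--}Y\text{--}Z$ the collider/non-collider status of $Y$ is determined by whether some d-separator of $X,Z$ omits $Y$ versus every d-separator contains $Y$; both facts follow from the local Markov property exactly as you indicate. For the backward direction you invoke covered-edge reversals, which is Chickering's (1995) argument rather than Verma and Pearl's original one; it is valid, and you correctly identify that the nontrivial content lies in showing that two DAGs with the same skeleton and v-structures are connected by a finite sequence of covered-edge reversals. Since the paper only cites the result, either route would serve equally well here.
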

	
	\begin{proof}[of \lemref{existetilde}]
		Let $G$ be some graph in $\tilde{E}^{\cap A}$.  
		Let $F$, $F^*$ be the families of $\bar{V}$ in
		$G$ and $G^*$ respectively. Note that both $F$ and $F^*$ are legal family sets for $\bar{V}$, since they are subsets of DAGs in $\struct_{d,k}$. Define the graph $G_1 := G^* \setminus F^*$, and set $\tilde{G}:=G_1\cup F$. By \lemref{optimalstruct}, $G_1\in\struct_V$.
		Thus, by \lemref{legaldag}, we also have  $\tilde{G}\in\struct_{d,k}$.  
		Furthermore, all of the variables in $V$ have the same families in both $G^*$ and $\tilde{G}.$
		
		To show that $\tilde{G}$ satisfies the conditions of the lemma, it is left to show that $\tilde{G} \in \tilde{E}^{\cap A}$. First, note that $A \subseteq \tilde{G}$, since any family in $A$ is either in $G_1$ or in $F$. 
		We now prove that $\tilde{G}\in\tilde{E}$, by showing that $G$ and $\tilde{G}$ are Markov equivalent. Define $G_2:= G \setminus F$. Since $\score(G)=\score(\tilde{E})\geq\score^*-\gamma$, by assumption \ref{vgap} in \defref{gammastable}, it holds that in $G$ all the variables in $V$ have parents in $V$. Thus, $G_2 \in \struct_v$.

		We now prove that $G_1$ and $G_2$ are Markov equivalent, and conclude that the same holds for $G$ and $\tilde{G}$. First, we show that $\score(G_2) = \score(G_1)$. 
		We have $G_1,G_2 \in \struct_v$, and by \lemref{optimalstruct}, $G_1$ is an optimal graph on $V$. Therefore, $\score(G_2)\leq \score(G_1)$. 
		Now, suppose for contradiction that $\score(G_2)<\score(G_1)$.
		Then, by assumption \ref{unique} in \defref{gammastable}, we have $\score(G_2) < \score(G_1) - \gamma$. We also have, by \lemref{optimalstruct}, that $\score(F) \leq \score(F^*)$. It follows that
		\[
		\score(G)=\score(G_2)+\score(F)\leq\score(G_2)+\score(F^*) < \score(G_1) - \gamma +\score(F^*)=\score(G^*) - \gamma = \score^* -\gamma.
		\]
		
		Therefore, $\score(G) < \score^* -\gamma$. 
		But $G \in \tilde{E}$, and so $\score(G) \geq \score^* -\gamma$, leading to a contradiction. It follows that $\score(G_2) = \score(G_1)$, implying that $G_2$ is also an optimal structure on $V$. 
		Combining this with the uniqueness of the optimal EC on $V$, as given in assumption \ref{unique} of \defref{gammastable}, we conclude that $G_1$ and $G_2$ are Markov equivalent.
		
		To show that $G$ and $\tilde{G}$ are Markov equivalent, we first observe that since $G_1$ and $G_2$ are equivalent, then by \thmref{vstructures}, they have the same skeleton and the same set of v-structures.
		Therefore, $G_1 \cup F$ and $G_2 \cup F$ also have the same skeleton.
		In addition, they have the same set of v-structures, as follows: The v-structures with a child in $V$ are in $G_1$ and $G_2$ and so they are shared; The v-structures with a child in $\bar{V}$ are those with parents in $F$ and no edges between the parents. Since both graphs share the same skeleton, these must be the same v-structures. This proves that $G$ and $\tilde{G}$ are Markov equivalent, thus $\tilde{G} \in \tilde{E}$, and so, as observed above, also $\tilde{G} \in \tilde{E}^{\cap A}$, which completes the proof.
		
	\end{proof}

	The last lemma required for proving \thmref{stableimprove} shows that for a stable distribution, \algname\ is guaranteed to accept families early.
	\begin{lemma}\label{lem:outgoing}
		Let $\cD$ be a $(\gamma,V)$-stable distribution. Consider a run of \algname\ in which $\event$ holds.
		Then, \algname\ accepts at least $|V|$ families by the end of the first round $t$ which satisfies $\epsilon_t \leq \gamma/(2d)$.  
	\end{lemma}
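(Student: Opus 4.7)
The plan is to argue by contradiction on when round $t$ can terminate: I will show that at any iteration $j$ in round $t$ for which $V \not\subseteq \cV_{(j)}$ (identifying variables with their indices), \lemref{2nd_direction} forces acceptance of some family. Since round $t$ only terminates at an iteration accepting nothing, this forces $V \subseteq \cV$ at termination, and hence at least $|V|$ families have been accepted by the end of the round.

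First, for every iteration $j$ in round $t$,
\[
\theta_j = (d-|\cV_{(j)}|)\epsilon_t \leq d\cdot \gamma/(2d) = \gamma/2,
\]
so $2\theta_j \leq \gamma$. Fix such an iteration $j$ with $V \not\subseteq \cV_{(j)}$. By \lemref{family_correctness}, $\struct_j^* \neq \emptyset$; pick any $G^* \in \struct_j^*$ and let $E_*$ be its Markov equivalence class. Choose $v \in V \setminus \cV_{(j)}$ and set $f := \family{X_v, \Pi_v(G^*)}$; then $f \in U(\struct_j^*)$ and $f \in \actf(\cV_{(j)})$. To apply \lemref{2nd_direction}, I need $S_j^\neg(f) < \score^* - 2\theta_j$, for which it suffices (since $2\theta_j \leq \gamma$) to show that no EC $E$ with $\cons{E}{j} \neq \emptyset$ and $f \notin U(\cons{E}{j})$ satisfies $\score(E) \geq \score^* - \gamma$. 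Suppose such an $E$ existed. The case $E = E_*$ is immediately ruled out: $G^* \in \cons{E_*}{j}$ contains $f$, so $f \in U(\cons{E_*}{j})$, contradicting $f \notin U(\cons{E}{j})$. Hence $E \neq E_*$, and \lemref{existetilde} applies with $A := \accept_j$ (we have $G^* \in E_*^{\cap A}$ and $E^{\cap A} = \cons{E}{j} \neq \emptyset$), producing $\tilde{G} \in E^{\cap A}$ whose families on $V$ match those of $G^*$. In particular, $f \in \tilde{G}$, so $f \in U(\cons{E}{j})$ --- a contradiction. Since $\ecset_{d,k}$ is finite, it follows that $S_j^\neg(f) < \score^* - \gamma \leq \score^* - 2\theta_j$.

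By \lemref{2nd_direction}, a family is accepted at every such iteration, enlarging $\cV$ by one, so round $t$ cannot terminate until $V \subseteq \cV$, at which point $|\cV| \geq |V|$. The main subtlety is ruling out the case $E = E_*$ in the contradiction step; this is precisely why $f$ must be chosen as a family actually present in the specific optimal graph $G^* \in \cons{E_*}{j}$, rather than merely any element of $U(\struct_j^*)$.
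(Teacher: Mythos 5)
Your proof is correct and follows essentially the same route as the paper's: take a family $f$ of a variable in $V\setminus\cV_{(j)}$ from an optimal graph in $\struct_j^*$ (nonempty by \lemref{family_correctness}), invoke \lemref{existetilde} with $A=\accept_j$ to show every consistent EC with score at least $\score^*-\gamma$ contains a graph including $f$, deduce $S_j^\neg(f) < \score^*-\gamma \le \score^*-2\theta_j$, and conclude via \lemref{2nd_direction}. Your explicit dismissal of the case $E=E_*$ and your insistence that $f$ be read off the same $G^*$ that is fed to \lemref{existetilde} are small points of rigor that the paper's write-up glosses over, but the argument is the same.
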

	
	\begin{proof}
		
		Suppose that \algname\ has accepted less than $|V|$ families until some iteration $j$. Then $V \setminus \cV_{(j)} \neq \emptyset$. Let $f \in \uni{\struct_j^*} \cap \actf(\cV_{(j)})$ be a family of some variable in $V\setminus \cV_{(j)}$ that belongs to an optimal structure in $\struct^*_j$. Note that such a family exists, since by \lemref{family_correctness}, $\struct^*_j \neq \emptyset$. 
		Let $\tilde{E} \in \ecset_{d,k}$ such that $\cons{\tilde{E}}{j} \neq \emptyset$ and $\score(\tilde{E}) \geq \score^* - \gamma$. 
		Let $G^* \in \struct^*_j$. By \lemref{existetilde} with $A := \accept_j$,  there exists a graph $\tilde{G} \in \cons{\tilde{E}}{j}$ such that $f \in \tilde{G}$. Therefore, $f \in \uni{\cons{\tilde{E}}{j}}$. 
		Since this holds for any $\tilde{E} \in \ecset_{d,k}$ that satisfies the conditions above, it follows that for $S_j^\neg(f)$ as defined in \lemref{2nd_direction}, 
		we have that $S_j^\neg(f) < \score^* - \gamma$. The conditions of \lemref{2nd_direction} thus hold if $2\theta_j \leq \gamma$. In this case, some family will be accepted at iteration $j$.
		 
		In round $t$, $\theta_j \leq d\epsilon_t$, and the round only ends when no additional families are accepted. Therefore, at least $|V|$ families will be accepted until the end of the first round $t$ with $\epsilon_t \leq \gamma/(2d)$. 
	\end{proof}

		We are now ready to prove \thmref{stableimprove}.
	
	\begin{proof}[of \thmref{stableimprove}]
		Since $\cD$ is a discrete distribution, the sample complexity upper bounds 
		in \eqref{tsc} and  \eqref{naive} can be used. 
		To upper bound the sample complexity of \algname, observe that by \lemref{outgoing}, on the first round $t$ with $\epsilon_t \leq \gamma/(2d)$, at least $|V| = v$ families have been accepted. 
		Following \eqref{tsc} and the notation therein, the sample complexity of \algname\ can be upper bounded by setting $n=1$, $V_1 = v$, $t_1 = t$, and getting an upper bound of 
		\begin{align*}
		&\widetilde{O}\left( \Big(v^{k+1}\cdot\frac{1}{\epsilon^2_t} +  d^k(d-v)^3 \cdot \frac{1}{\epsilon^2}\Big) \cdot \log(d^{k+1}/\delta)\right)
		&= \widetilde{O}\left(\Big(\frac{d^2v^{k+1}}{\gamma^2} + \frac{d^k(d-v)^3}{\epsilon^2}\Big)\cdot k\log(1/\delta)\right),
		\end{align*}

		where we used the fact that $\epsilon_t \in (\gamma/(4d),\gamma/(2d)]$. This proves \eqref{stable_sc}. 
		
		By \eqref{naive}, the sample complexity of the naive algorithm is 
		$\tilde{\Omega}( \frac{d^{k+3}}{\epsilon^2}\cdot k\log(1/\delta)).$ 
		Substituting $v = d-O(1)$ in \eqref{stable_sc}, we get that \algname\ uses $\widetilde{O}\left((\frac{d^{k+3}}{\gamma^2}+\frac{d^k}{\epsilon^2})\cdot k\log(1/\delta)\right)$ samples. Dividing this by the sample complexity of the naive algorithm, we get a sample complexity ratio of $\widetilde{O}((\epsilon/\gamma)^2+ 1/d^3)$. For $\epsilon = O(\gamma d^{-3/2})$, this ratio is $\widetilde{O}(1/d^3)$. The sample complexity of \algname\ is thus a factor of $\widetilde{\Omega}(d^3)$ smaller than that of the naive algorithm in this case.
		This completes the proof. 
	\end{proof}
	
	\thmref{stableimprove} gives a set of conditions that, if satisfied by a distribution, lead to a significant sample complexity reduction when using \algname. In the next section, we give a construction which satisfies this set of conditions.

	\section{A Class of Stable Distributions} \label{sec:stableex}
	
	In this section, we give an explicit construction of a family of distributions that are stable according to \defref{gammastable}. The construction starts with a distribution with a unique optimal EC, and augments it with a noisy version of one of its variables. This leads to a structure learning problem in which most variable dependencies are easy to identify, but it is difficult to discern between the two versions of the noisy variable. In this situation, the advantage of \algname\ is manifested, since it can request more samples from the families that are more difficult to choose from.
	
	Let $\cD_1$ be a distribution over a finite set of at least $k$ variables $\mathbf{X}_1$, one of which is denoted $X_a$, which satisfies the following properties for some values $\beta,\alpha > 0$.
	\begin{enumerate}[label=(\Roman*)]
		\item $\cD_1$ has a unique optimal EC, and the difference in scores between this EC and the next-best EC is at least $\beta$. \label{gap} 
		\item $\xa\in\mathbf{X}_1$ does not have children in any of the structures in the optimal EC \label{xdchildless}
		\item $H(\xa \mid \mathbf{X}_1 \setminus \{\xa\}) = \alpha$. \label{alpha}
	\end{enumerate}
	Property \ref{xdchildless} holds, for instance, in the case that the optimal EC includes a graph in which $\xa$ has no children and no edges between any of its parents (see \figref{stable_illustration} for illustration). By \thmref{vstructures}, in this case, $\xa$ is the child in a v-structure with each two of its parents, and since v-structures are preserved within the optimal EC, $\xa$ has no children in any of the equivalent structures. 
	
	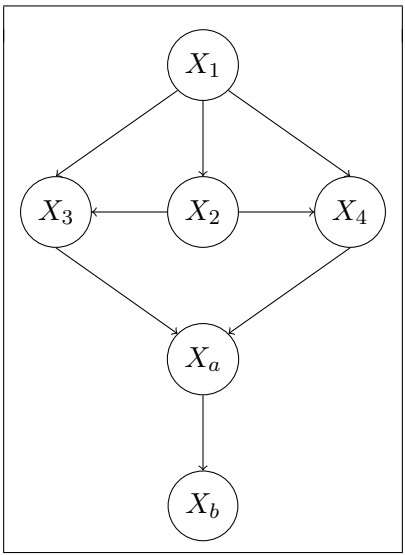
\begin{figure}[h!]
		\begin{center}
			\begin{tabular}[ht]{| c |}
				\hline
				\\[-5pt]
				\begin{tikzpicture}[
				observednode/.style={circle, draw},
				hiddennode/.style={circle, draw, dashed},
				]
				\node[observednode] (x0) {$X_1$};
				\node[observednode, below=of x0] (x1) {$X_2$};
				\node[observednode, left=of x1] (x2) {$X_3$};
				\node[observednode, right=of x1] (x3) {$X_4$};
				\node[observednode, below=of x1] (x4) {$\xa$};
				\node[observednode, below=of x4] (x5) {$\xb$};

				\draw[->] (x0.south) -- (x1.north);
				\draw[->] (x0.south west) -- (x2.north);
				\draw[->] (x0.south east) -- (x3.north);
				\draw[->] (x1.west) -- (x2.east);
				\draw[->] (x1.east) -- (x3.west);
				\draw[->] (x2.south) -- (x4.north west);
				\draw[->] (x3.south) -- (x4.north east);
				\draw[->] (x4.south) -- (x5.north);

				\end{tikzpicture} \\
				
				\hline
			\end{tabular}
			
			\caption{An illustration of an EC that satisfies property \ref{xdchildless}. }\label{fig:stable_illustration}
		\end{center}
	\end{figure}
	
	We now define a distribution which is similar to $\cD_1$, except that $\xa$ has a slightly noisy copy, denoted $\xb$. Let the set of variables be $\mathbf{X} := \mathbf{X}_1 \cup \{ \xb\} $, and set $d := |\mathbf{X}|$. For any $\lambda \in\left(0,\min(\alpha,\frac{\beta}{3d})\right)$, we define the distribution $\cD_2(\lambda)$ of $\mathbf{X}$ in which the marginal of $\cD_2(\lambda)$ over $\mathbf{X}_1$ is $\cD_1$, and $\xb$ is defined as follows:
	\begin{enumerate}[label=(\Roman*)]\setcounter{enumi}{3}
		\item $\xb$ is an independently noisy version of $\xa$. Formally, there is a hidden Bernoulli variable $C$ which is independent of $\mathbf{X}_1$, such that $P(\xb = \xa \mid C = 1) = 1$, and $(\xb \bot \mathbf{X}_1) \mid C=0$.\label{coin}
		
		\item The probability $P(C = 1)$ is such that $\max(H(\xb \mid \xa) ,H(\xa\mid \xb))=\lambda$. \label{lambda}
	\end{enumerate}

	The following theorem shows that $\cD_2$ is stable where the set $V$ includes all but one variable. Thus, \thmref{stableimprove} holds with $q = 1$, leading to a sample complexity improvement factor as large as $\widetilde{\Omega}(d^3)$ for this construction. 
	
	\begin{theorem}\label{thm:d2main}
		$\cD_2(\lambda)$ is a $(\gamma, V)$-stable distribution for $V := \mathbf{X}_1 \setminus \xa$ and for all $\gamma \leq \beta-3d\lambda$. 
	\end{theorem}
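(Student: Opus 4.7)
I verify the two conditions of \defref{gammastable} for $\cD_2(\lambda)$ with $V=\mathbf{X}_1\setminus\{\xa\}$ and any $\gamma\leq \beta-3d\lambda$. Write $\score_1$ and $\score_V$ for the score functions under $\cD_1$ and under the marginal of $\cD_2(\lambda)$ on $V$, and $\score_1^*$, $\score_V^*$ for their optima. Since the marginal of $\cD_2(\lambda)$ on $V$ coincides with the marginal of $\cD_1$ on $V$, Condition~2 will be obtained by lifting uniqueness and the gap $\beta$ from $\mathbf{X}_1$ down to $V$. Condition~1 will require a graph-modification argument that combines Property~II (no $\xa$-children in $\cD_1$'s optimal EC) with a swap lemma bounding the cost of interchanging $\xa$ and $\xb$ as parents.

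\paragraph{Condition~2 (lifting).}
By Property~II, $\xa$ is childless in every graph of $\cD_1$'s unique optimal EC; hence any DAG $G_V$ on $V$ of in-degree $\leq k$ extends to the DAG $G_V\cup\family{\xa,\Pi_{\xa}(G_1^*)}$ on $\mathbf{X}_1$, with $\score_1(G_V\cup\family{\xa,\Pi_{\xa}(G_1^*)})=\score_V(G_V)-H(\xa\mid\Pi_{\xa}(G_1^*))$. This lifting also preserves Markov equivalence: by \thmref{vstructures}, two non-equivalent DAGs on $V$ stay non-equivalent after attaching the same $\xa$-family, since their skeletons and v-structures within $V$ are unchanged. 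Consequently, the uniqueness and gap $\beta$ guaranteed by Property~I transfer verbatim to the marginal on $V$, and since $\gamma<\beta$ this immediately yields Condition~2.

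\paragraph{Condition~1 (modification).}
I argue by contradiction: assume $G\in\struct_{d,k}$ satisfies $\score(G)\geq\score^*-\gamma$ and has an edge from some $u\in\{\xa,\xb\}$ into some $v\in V$. From Property~IV, $\xb\perp\mathbf{X}_1\setminus\{\xa\}\mid\xa$, giving the Markov chain $v-\xa-\xb$ conditionally on every $A\subseteq V$ and hence the swap lemma
\[
0\leq H(v\mid A,\xb)-H(v\mid A,\xa)\leq I(v;\xa\mid A,\xb)\leq H(\xa\mid\xb)\leq\lambda,
\]
with a symmetric bound for the reverse swap. Using this, I would transform $G$ into a DAG $\tilde G$ on $\mathbf{X}_1$ in which $\xa$ has at least one child in $V$, via (a) swapping $\xb$ with $\xa$ in every $V$-variable's parent set, (b) reorienting any blocking $v\to\cdots\to\xa$ paths via the symmetric swap bound, and (c) deleting $\xb$ together with its remaining incident edges. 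Property~II then forces $\tilde G$ outside $\cD_1$'s optimal EC, so Property~I yields $\score_1(\tilde G)\leq\score_1^*-\beta$. Combined with the lower bound $\score^*\geq\score_1^*-\lambda$, which is achieved by $G_1^*\cup\family{\xb,\{\xa\}}$, and a careful accounting of the entropy cost of (a)--(c), this gives $\score(G)\leq\score^*-(\beta-3d\lambda)\leq\score^*-\gamma$, contradicting near-optimality.

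\paragraph{Main obstacle.}
The hardest step is the graph surgery: naively replacing $\xb$ by $\xa$ in $\Pi_v$ can create a cycle when $G$ contains a directed $v$-to-$\xa$ path, and the $k$-indegree constraint must be preserved throughout. Showing that the swaps and reorientations can be performed globally while keeping the cumulative entropy loss within the $3d\lambda$ budget (roughly $d\lambda$ each for the $V$-parent-set swaps, for the reorientations, and for the $\xa$ and $\xb$ family adjustments) is what pins down the precise form $\gamma\leq\beta-3d\lambda$ of the statement. The remaining steps are bookkeeping from Properties~I--V and \thmref{vstructures}.
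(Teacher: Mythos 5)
Your overall strategy is the same as the paper's: verify the two conditions of \defref{gammastable} separately, obtaining Condition~2 by lifting uniqueness and the $\beta$-gap from $\cD_1$ to the marginal on $V$ via Property~II and \thmref{vstructures} (this part matches \lemref{gapd1} and is fine), and obtaining Condition~1 by surgically removing $\xb$ from all parent sets at an entropy cost of $O(d\lambda)$ and then invoking Properties~I--II. Your swap inequality $0\leq H(v\mid A,\xb)-H(v\mid A,\xa)=I(v;\xa\mid A,\xb)\leq H(\xa\mid\xb)\leq\lambda$ is correct (and in fact tighter than the $3\lambda$ bound of \lemref{help}).

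The genuine gap is in the surgery itself, which you correctly identify as the hardest step but do not actually carry out, and the substitute you sketch would fail. Your step~(b) proposes to handle the cyclic case --- where $\xb$ is a parent of some $v\in V$ and $G$ contains a directed path $v\to\cdots\to\xa$ --- by ``reorienting the blocking path via the symmetric swap bound.'' But the swap lemma only controls the score change when $\xa$ and $\xb$ are exchanged inside a conditioning set; it says nothing about reversing an edge $u\to w$ between two ordinary variables of $V$, which replaces the families $\family{X_w,\Pi_w}$ and $\family{X_u,\Pi_u}$ by families whose entropies can differ from the originals by $\Omega(1)$. So the reorientation step has no $O(\lambda)$ cost guarantee, and it is also not argued to preserve acyclicity or the in-degree bound $k$. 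The paper's \lemref{no_children} resolves exactly this obstruction differently: when $\xb$ has a directed path to $\xa$, it does not touch the intermediate path at all, but instead swaps the \emph{entire parent sets} of $\xa$ and $\xb$ (reversing the edge between them if present) and redirects $\xb$'s children to $\xa$; only the families of $\xa$, $\xb$, and $\xb$'s former children change, each by at most $O(\lambda)$ via \lemref{help}, and a separate case analysis establishes that the result is a DAG. Without this (or an equivalent) construction, Condition~1 is not proved. A secondary, smaller issue is that your final accounting leaks an extra additive $\lambda$ (from $\score^*\geq\score_1^*-\lambda$ together with dropping $\xb$'s family score), so as written it would only yield stability for $\gamma$ slightly below $\beta-3d\lambda$; the paper avoids this by comparing against the optimal legal $\xb$-family rather than deleting it (see the first case of \lemref{optimalecgap}).
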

	Before turning to the proof of \thmref{d2main}, we note that \thmref{stableimprove} assumes a given required accuracy $\epsilon$. If the score of the second-best EC is much smaller than $\score^* - \epsilon$, and this is known in advance, then one could set $\epsilon$ to a larger value, thus potentially voiding the conclusion of \thmref{stableimprove}. The following theorem shows that this is not the case for the construction of $\cD_2$. Its proof is provided in \appref{deferred}. 
	\begin{theorem}\label{thm:scoreeps}
		For  $\epsilon \in \left(0,2\cdot\min(\alpha,\frac{\beta}{3d})\right)$, there exists a value of $\lambda$ such that in $\cD_2(\lambda)$ there exists a non-optimal EC with a score of at least $S^* - \epsilon$.
	\end{theorem}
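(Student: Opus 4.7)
The plan is to exhibit a non-optimal EC whose score is within $\epsilon$ of the optimum by constructing a structure that ``swaps'' the roles of $\xa$ and $\xb$. I would set $\lambda := \epsilon/2$, which by the hypothesis $\epsilon < 2\min(\alpha,\beta/(3d))$ lies in the admissible range $(0,\min(\alpha,\beta/(3d)))$ for the construction of $\cD_2(\lambda)$.

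The first step is to identify an optimal structure $G^*$ of $\cD_2(\lambda)$. By \thmref{d2main}, $\cD_2(\lambda)$ is $(\gamma,V)$-stable with $\gamma=\beta-3d\lambda>0$ and $V=\mathbf{X}_1\setminus\{\xa\}$, so by \defref{gammastable} any optimal structure has variables in $V$ with parents in $V$, and its restriction to $V$ lies in the unique optimal EC of the marginal on $V$. A local case analysis of the possible families for $\xa,\xb$ -- using that the coin-flip construction implies $\xb\perp V\mid\xa$, that $H(\xb\mid\xa)\leq\lambda$, and that $H(\xb\mid V)\geq\alpha-\lambda\gg\lambda$ (via the chain rule and property~III) -- pins down $G^*$ as the extension of $\cD_1$'s optimal structure $G_1^*$ by the single edge $\xa\to\xb$. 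Let $\Pi_a$ denote the parents of $\xa$ in $G_1^*$.

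Next, I would construct the swap structure $G'$: modify $G^*$ by replacing the families $\family{\xa,\Pi_a}$ and $\family{\xb,\{\xa\}}$ with $\family{\xa,\{\xb\}}$ and $\family{\xb,\Pi_a}$, keeping every other family unchanged. Since $\xa$ is a leaf in $G'$ and $\xb$'s new parents lie in $V$, $G'$ is a valid member of $\struct_{d,k}$. Using the chain rule together with $\xb\perp\Pi_a\mid\xa$, the score gap telescopes cleanly:
\begin{align*}
\score(G^*)-\score(G') &= H(\xa\mid\xb)+H(\xb\mid\Pi_a)-H(\xa\mid\Pi_a)-H(\xb\mid\xa)\\
&= H(\xa\mid\xb)-H(\xa\mid\xb,\Pi_a) \;=\; I(\xa;\Pi_a\mid\xb)\\
&\le H(\xa\mid\xb)\;\le\;\lambda\;=\;\epsilon/2,
\end{align*}
so $\score(G')\ge S^*-\epsilon/2>S^*-\epsilon$.

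Finally, I would verify that $G'$ lies in a distinct, non-optimal EC. The skeleton of $G^*$ contains the edges $\Pi_a$--$\xa$ and $\xa$--$\xb$, while that of $G'$ contains $\Pi_a$--$\xb$ and $\xb$--$\xa$; provided $\Pi_a\neq\emptyset$ (as in the motivating example given in \secref{stableex} for property~II, where $\xa$ has at least two unconnected parents), the skeletons differ and \thmref{vstructures} places the two graphs in different ECs. Generically $I(\xa;\Pi_a\mid\xb)>0$, yielding the strict inequality $\score(G')<S^*$ so that $G'$'s EC is non-optimal. The main obstacle is the second paragraph: pinning down $G^*$ requires combining the stability guarantee of \thmref{d2main} with a careful case analysis on the families of $\xa,\xb$ to rule out alternative arrangements (isolated $\xa$ or $\xb$, reversed $\xa\leftarrow\xb$ edge, spurious $V$-parents for $\xb$), whereas the gap computation itself is a short application of the chain rule.
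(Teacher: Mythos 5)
Your route is essentially the paper's: take an optimal structure, swap the roles of $\xa$ and $\xb$ in their families, and show the score drops by exactly $I(\xa;\Pa\mid\xb)\le H(\xa\mid\xb)\le\lambda$. That telescoping computation is correct and coincides with the identity the paper derives. The genuine gap is at the end: to exhibit a \emph{non-optimal} EC you need the strict inequality $\score(G')<\score^*$, i.e.\ $I(\xa;\Pa\mid\xb)>0$. Distinctness of skeletons only shows the two ECs differ, not that the second one is non-optimal (optimal ECs need not be unique), so strictness is indispensable — and you dismiss it with ``generically''. The theorem is about the specific construction $\cD_2(\lambda)$ built from an arbitrary $\cD_1$ satisfying (I)--(III); if this conditional mutual information were zero, the two orientations would tie and your witness $G'$ would itself be optimal. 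The paper proves strict positivity from the assumptions: rewriting the gap as $H(\Pa\mid\xb)-H(\Pa\mid\xa)$ and conditioning on the hidden coin $C$ gives $H(\Pa\mid\xb)\ge \P[C{=}1]\,H(\Pa\mid\xa)+\P[C{=}0]\,H(\Pa)$, while $H(\Pa\mid\xa)<H(\Pa)$ follows from the uniqueness/$\beta$-gap of the optimal EC of $\cD_1$; since $\P[C{=}0]>0$, the gap is strictly positive. Some argument of this kind must be supplied.

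A secondary weakness is the step you yourself flag as the main obstacle: pinning $G^*$ down to exactly $G_1^*\cup\{\xa\to\xb\}$. The paper avoids this classification entirely — it only proves (via the $\alpha-\lambda>0$ comparison you allude to) that in the optimal legal family set for $\{\xa,\xb\}$ one of the two variables is a parent of the other, and then runs the swap argument in both orientation cases, disposing of the $\xb\to\xa$ case by a short contradiction. This is both easier and safer: the optimal family of $\xb$ is not unique (adding parents from $V$ alongside $\xa$ leaves $H(\xb\mid\xa,\cdot)$ unchanged), and ruling out the reversed orientation already requires the same strict-positivity fact discussed above, so your ``local case analysis'' cannot be left as a sketch.
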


	We now prove \thmref{d2main} by proving several lemmas. The first lemma gives useful technical inequalities. Its proof is provided in \appref{deferred}.
	\begin{lemma}\label{lem:help}
		Assume the distribution $\cD_2(\lambda)$ defined above. 
		For any $\Pi,\Pb,\Pa \subseteq \mathbf{X}_1$ and any $Y \in \mathbf{X}_1$, the following holds:
		\begin{align}
		&|H(\xb\mid\Pi)-H(\xa\mid\Pi)|\leq\lambda,\label{eq:xmidpi}\\
		&|H(Y\mid\Pi,\xb)-H(Y\mid\Pi,\xa)|\leq3\lambda.\label{eq:y}
		\end{align}

	\end{lemma}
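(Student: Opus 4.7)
The plan is to prove both inequalities using only the chain rule for conditional entropy and the fact that conditioning on additional variables can only decrease entropy. The one substantive input from the construction of $\cD_2(\lambda)$ is Property~\ref{lambda}, which gives $H(\xb\mid\xa)\leq\lambda$ and $H(\xa\mid\xb)\leq\lambda$; the detailed coin-flip structure in Property~\ref{coin} plays no further role beyond these two bounds.

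For \eqref{xmidpi}, I would apply the chain rule to $H(\xa,\xb\mid\Pi)$ in both orders, yielding the identity $H(\xa\mid\Pi)+H(\xb\mid\xa,\Pi)=H(\xb\mid\Pi)+H(\xa\mid\xb,\Pi)$. Rearranging gives $H(\xb\mid\Pi)-H(\xa\mid\Pi)=H(\xb\mid\xa,\Pi)-H(\xa\mid\xb,\Pi)$, which is a difference of two non-negative quantities. By monotonicity of entropy under conditioning, $H(\xb\mid\xa,\Pi)\leq H(\xb\mid\xa)\leq\lambda$ and symmetrically $H(\xa\mid\xb,\Pi)\leq\lambda$, so the absolute difference is at most $\lambda$.

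For \eqref{y}, my plan is the one-step decomposition $H(Y\mid\Pi,\xb)\leq H(Y,\xa\mid\Pi,\xb)=H(\xa\mid\Pi,\xb)+H(Y\mid\Pi,\xa,\xb)$. Monotonicity bounds the first summand by $H(\xa\mid\xb)\leq\lambda$ and the second by $H(Y\mid\Pi,\xa)$, giving $H(Y\mid\Pi,\xb)\leq\lambda+H(Y\mid\Pi,\xa)$. Swapping the roles of $\xa$ and $\xb$ in the same argument yields the matching reverse inequality, so $|H(Y\mid\Pi,\xb)-H(Y\mid\Pi,\xa)|\leq\lambda$; this is in fact tighter than the stated $3\lambda$, and the looser constant in the lemma leaves considerable slack, presumably to accommodate a more routine argument or downstream convenience. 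I do not anticipate any real obstacle: the whole derivation is chain rule plus conditioning-reduces-entropy. The only subtlety worth noting is that $\Pi\subseteq\mathbf{X}_1$ permits $\xa\in\Pi$, but this causes no trouble since every bound used above is monotone in the conditioning set and remains valid in that boundary case.
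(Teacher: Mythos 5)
Your proof is correct, and for \eqref{y} it takes a genuinely different and tighter route than the paper. For \eqref{xmidpi} the two arguments essentially coincide: both expand $H(\xa,\xb\mid\Pi)$ by the chain rule in the two orders and bound the resulting difference $H(\xb\mid\xa,\Pi)-H(\xa\mid\xb,\Pi)$ of nonnegative terms by $\max(H(\xb\mid\xa),H(\xa\mid\xb))\leq\lambda$; the only cosmetic difference is that the paper invokes the Markov chain $\Pi\rightarrow\xa\rightarrow\xb$ from assumption \ref{coin} to get $H(\xb\mid\xa,\Pi)=H(\xb\mid\xa)$ exactly, whereas you need only the monotonicity inequality. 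For \eqref{y}, the paper carries out a longer telescoping computation, reducing the difference to $|-H(\xa\mid Y,\Pi,\xb)-H(\xb\mid\Pi)+H(\xa\mid\Pi)+H(\xb\mid\xa)|$ and bounding three terms separately (one of them via \eqref{xmidpi}), which yields $3\lambda$. Your two-sided sandwich $H(Y\mid\Pi,\xb)\leq H(\xa\mid\Pi,\xb)+H(Y\mid\Pi,\xa,\xb)\leq\lambda+H(Y\mid\Pi,\xa)$, together with the symmetric inequality, gives the stronger bound $\lambda$ and uses only nonnegativity and monotonicity of conditional entropy plus property \ref{lambda}; the Markov structure of assumption \ref{coin} is not needed anywhere. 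Your argument is thus both more elementary and more general---it applies to any pair of variables satisfying $\max(H(\xb\mid\xa),H(\xa\mid\xb))\leq\lambda$---and it shows the constant $3$ in the lemma is slack rather than necessary (harmless, since the lemma is only used as an upper bound downstream). Your handling of the boundary case $\xa\in\Pi$ is also correct.
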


	The next lemma shows that any DAG for $\cD_2$ can be transformed to a DAG with a similar score in which $\xb$ has no children. The proof of the lemma is provided in \appref{deferred}.
	\begin{lemma}\label{lem:no_children} 
		Let $G \in \struct_{d,k}$. There exists a graph $\tilde{G}\in \struct_{d,k}$ that satisfies the following properties: \begin{enumerate}
			\item $|\score(\tilde{G})-\score(G)|<3d\lambda$;\label{sim_score}
			\item In $\tilde{G}$,  $\xb$ has no children; \label{childless}
			\item The children of $\xa$ in $\tilde{G}$ are exactly the union of the children of $\xa$ and the children of $\xb$ in $G$ (except for $\xa$, if it is a child of $\xb$ in $G$)\label{xa_child}.
		\end{enumerate}
	\end{lemma}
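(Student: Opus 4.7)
The plan is to construct $\tilde{G}$ explicitly from $G$ by transferring $\xb$'s role as a parent to $\xa$, and to bound the resulting score change family-by-family using \lemref{help}. For each variable $Y\in\mathbf{X}\setminus\{\xa,\xb\}$ with $\xb\in\Pi_Y^G$, I would set $\Pi_Y^{\tilde G}:=(\Pi_Y^G\setminus\{\xb\})\cup\{\xa\}$ when $\xa\notin\Pi_Y^G$, and $\Pi_Y^{\tilde G}:=\Pi_Y^G\setminus\{\xb\}$ when $\xa$ is already in $\Pi_Y^G$; otherwise leave $\Pi_Y^G$ unchanged. For $\xa$, set $\Pi_\xa^{\tilde G}:=\Pi_\xa^G\setminus\{\xb\}$. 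For $\xb$, I would choose $\Pi_\xb^{\tilde G}$ so that $\xa\in\Pi_\xb^{\tilde G}$ iff $\xa\in\Pi_\xb^G$ (to match property~3) and, when $\xa\notin\Pi_\xb^G$, to keep the conditional entropy close to $H(\xb\mid\Pi_\xb^G)$. Properties~2 and~3 then follow by tracking which variables have $\xa$ or $\xb$ as parents in $\tilde G$.

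The key subtlety is acyclicity. If $\xa$ is not a descendant of $\xb$ in $G$, no cycles are created: each former child $Y$ of $\xb$ is a descendant of $\xb$ and hence not an ancestor of $\xa$, so the new edge $\xa\to Y$ is safe. If $\xa$ is a descendant of $\xb$ in $G$, the naive construction may yield a cycle $\xa\to Y\to\cdots\to\xa$ for some $Y$ that was both a child of $\xb$ and an ancestor of $\xa$. To handle this case I would reroute the edges along the $\xb$-to-$\xa$ path, exploiting the near-equivalence $H(\xb\mid\xa),H(\xa\mid\xb)\le\lambda$ to keep the per-family score cost of each modification at $O(\lambda)$.

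For the score bound, I would sum per-family differences in conditional entropy. For each $Y\notin\{\xa,\xb\}$ whose parent set changed by a $\xb\mapsto\xa$ swap, \eqref{y} of \lemref{help} bounds $|H(Y\mid\Pi_Y^{\tilde G})-H(Y\mid\Pi_Y^G)|$ by $3\lambda$. When $\xb$ is dropped because $\xa$ was already a parent, the change equals $I(Y;\xb\mid\Pi_Y^G\setminus\{\xb\})\le H(\xb\mid\xa)\le\lambda$. The contributions from $Y=\xa$ and $Y=\xb$ are handled analogously using both \eqref{y} and \eqref{xmidpi}, noting in particular that when $\xb\in\Pi_\xa^G$, $H(\xa\mid\Pi_\xa^G)$ is already at most $3\lambda$ by \eqref{y} applied with $Y=\xa$ (since $H(\xa\mid\,\cdot\,,\xa)=0$). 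Summing over at most $d$ affected families yields the desired $|\score(\tilde G)-\score(G)|<3d\lambda$. The main obstacle is the careful treatment of the case where $\xa$ is a descendant of $\xb$ in $G$, where the rerouting required to preserve both acyclicity and property~3 must be designed to fit within the per-family $O(\lambda)$ cost budget.
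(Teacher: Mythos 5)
Your easy case ($\xa$ not a descendant of $\xb$ in $G$) and your handling of the children of $\xb$ other than $\xa$ match the paper's proof. But the hard case is precisely the one you leave unresolved, and the two hints you give for it do not work. First, ``rerouting the edges along the $\xb$-to-$\xa$ path'' is not a construction: modifying or reversing edges between the intermediate nodes on that path changes the families of those nodes, and \lemref{help} gives no control over such changes --- it only bounds the effect of substituting $\xa$ for $\xb$ (or vice versa) inside a parent set. Second, and independently of acyclicity, your prescription for $\xa$'s own family, $\Pi_{\xa}^{\tilde G}:=\Pi_{\xa}^{G}\setminus\{\xb\}$, already breaks property~1 when $\xb\in\Pi_{\xa}^{G}$: the resulting change in that family's score is $H(\xa\mid\Pi_{\xa}^{G}\setminus\{\xb\})-H(\xa\mid\Pi_{\xa}^{G})=I(\xa;\xb\mid\Pi_{\xa}^{G}\setminus\{\xb\})$, which can be as large as $H(\xa)-\lambda$ (take $\Pi_{\xa}^{G}=\{\xb\}$). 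Your observation that $H(\xa\mid\Pi_{\xa}^{G})\le\lambda$ bounds the \emph{old} entropy, not the change; and since your rule for $\xb$ deliberately keeps $H(\xb\mid\Pi_{\xb}^{\tilde G})$ close to $H(\xb\mid\Pi_{\xb}^{G})$, nothing compensates for this loss.

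The paper resolves both problems with a single device you are missing: when $\xb$ has a directed path to $\xa$, it \emph{swaps the entire parent sets} of $\xa$ and $\xb$, reversing the edge between them if present (i.e., $\tilde{\Pi}_a:=\Pb$, and $\tilde{\Pi}_b:=\Pa\setminus\{\xb\}\cup\{\xa\}$ if $\xb\in\Pa$, else $\tilde{\Pi}_b:=\Pa$). The combined score change of these two families then telescopes, via two applications of \eqref{xmidpi}, to $|H(\xa\mid\Pa\setminus\{\xb\})-H(\xb\mid\Pa\setminus\{\xb\})|+|H(\xb\mid\Pb)-H(\xa\mid\Pb)|\le 2\lambda$: the information $\xb$ carried about $\xa$ is not discarded but transferred into $\xb$'s new family. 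Note the two families must be bounded \emph{jointly}; separately each can change by much more than $O(\lambda)$. Acyclicity of the swapped graph still requires a genuine case analysis (new edges into $\xb$, into $\xa$, and into former children of $\xb$), which occupies most of the paper's proof. Finally, the swap makes $\xb$ a child of $\xa$ in $\tilde G$, so property~3 must be read as permitting this; your insistence that $\xa\in\Pi_{\xb}^{\tilde G}$ iff $\xa\in\Pi_{\xb}^{G}$ is what forces you into the uncontrolled ``just drop $\xb$'' step above.
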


	Next, we use \lemref{no_children} to prove that property \ref{vgap} in \defref{gammastable} holds for $\cD_2$. 
	\begin{lemma}\label{lem:optimalecgap}
		Assume the distribution $\cD_2(\lambda)$ as defined above. Let $G$ be a structure such that at least one of $\xa,\xb$ has children other than $\xa,\xb$. 
		Then, $\score(G) <\score^* - \beta + 3d\lambda$. 
	\end{lemma}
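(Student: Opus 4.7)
The plan is to reduce the problem to a graph over $\mathbf{X}_1$ so that the score gap property \ref{gap} of $\cD_1$ can be invoked. This involves two main steps: (i) using \lemref{no_children} to make $\xb$ a sink so it can be cleanly removed, and (ii) recognizing that the resulting graph over $\mathbf{X}_1$ must lie outside the optimal EC of $\cD_1$, thanks to property \ref{xdchildless}.

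First, apply \lemref{no_children} to $G$ to obtain a $\tilde{G}\in\struct_{d,k}$ with $\xb$ childless and $|\score(\tilde{G})-\score(G)|<3d\lambda$. The child-preservation clause (item 3 of \lemref{no_children}) is essential here: the hypothesis that some variable in $V$ is a child of $\xa$ or $\xb$ in $G$ forces that variable to appear as a child of $\xa$ in $\tilde{G}$. Since $\xb$ is a sink in $\tilde{G}$, removing $\xb$ yields a valid $G_1\in\struct_{\mathbf{X}_1,k}$ in which every variable in $\mathbf{X}_1$ keeps exactly the same parent set, and
\[
\score(\tilde{G}) \;=\; \score(G_1) - H(\xb\mid\Pi_{\xb}(\tilde{G})) \;\leq\; \score(G_1).
\]

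Second, $G_1$ has $\xa$ with at least one child, so by property \ref{xdchildless}, $G_1$ cannot lie in the (unique) optimal EC of $\cD_1$. Property \ref{gap} then yields $\score(G_1)\leq \cS_1^*-\beta$, where $\cS_1^*$ denotes the optimal BN score under $\cD_1$. To connect $\cS_1^*$ to $\score^*$, let $G_1^*$ be an optimal DAG for $\cD_1$; the graph $G_1^*\cup\{\xa\to\xb\}$ belongs to $\struct_{d,k}$ and has score $\cS_1^*-H(\xb\mid\xa)\geq \cS_1^*-\lambda$ by property \ref{lambda}, so $\cS_1^*\leq \score^*+\lambda$. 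Chaining gives
\[
\score(G) \;<\; \score(G_1) + 3d\lambda \;\leq\; \cS_1^*-\beta+3d\lambda \;\leq\; \score^*-\beta+(3d{+}1)\lambda,
\]
which matches the claim up to absorbing one $\lambda$ (or using a slightly sharper bound inside \lemref{no_children}).

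The main obstacle is the interplay of leaf-conversion with child-transfer. Without item 3 of \lemref{no_children}, a $G$ in which only $\xb$ (and not $\xa$) had $V$-children would, upon naive deletion of $\xb$, yield a $G_1$ in which $\xa$ has no $V$-children, and property \ref{xdchildless} would not be violated. The child-transfer clause is precisely what guarantees that the structural defect of $G$ persists in $G_1$, which is essential for the reduction to $\cD_1$ to go through.
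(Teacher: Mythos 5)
Your argument is structurally the same as the paper's: reduce to $\cD_1$ by using \lemref{no_children} to make $\xb$ a sink, strip $\xb$ off, and invoke properties \ref{xdchildless} and \ref{gap}. (The paper organizes this as two cases --- $\xb$ childless in $G$, handled directly, and the general case reduced to it via \lemref{no_children} --- but since \lemref{no_children} is vacuous when $\xb$ is already a sink, your uniform application of it is the same argument.) Your use of item 3 of \lemref{no_children} to ensure that the hypothesized child of $\xa$ or $\xb$ survives as a child of $\xa$ in $G_1$ is correct and matches the paper's sub-case analysis of $W=\{\xa\}$ versus $W\neq\{\xa\}$.

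The one real issue is the extra $\lambda$: as written you establish $\score(G)<\score^*-\beta+(3d+1)\lambda$, not the stated $\score^*-\beta+3d\lambda$, so the lemma is not literally proven (and the constant matters downstream, e.g.\ in the admissible range of $\gamma$ in \thmref{d2main} and the constraint $\lambda<\beta/(3d)$ in the definition of $\cD_2$). The slack comes from making two independent lossy estimates: you discard $-H(\xb\mid\Pi_{\xb}(\tilde{G}))\leq 0$ when passing from $\tilde{G}$ to $G_1$, and you separately lower-bound $\score^*$ by $\cS_1^*-H(\xb\mid\xa)\geq\cS_1^*-\lambda$ using the specific family $\family{\xb,\{\xa\}}$. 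The paper's bookkeeping pairs these two terms instead: let $f_b^*$ be a score-maximizing family for $\xb$, so that $\score^*\geq\cS_1^*+\score(\{f_b^*\})$, and bound the retained $\xb$-family in $\tilde{G}$ by $\score(\{\family{\xb,\Pi_{\xb}(\tilde{G})}\})\leq\score(\{f_b^*\})$. Then
\[
\score(\tilde{G})=\score(G_1)+\score(\{\family{\xb,\Pi_{\xb}(\tilde{G})}\})\leq(\cS_1^*-\beta)+\score(\{f_b^*\})\leq\score^*-\beta,
\]
and the $3d\lambda$ from \lemref{no_children} is the only loss. With that one-line repair your proof is complete and coincides with the paper's.
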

	\begin{proof}
		In this proof, all scores are calculated for the joint distribution defined by $\cD_2(\lambda)$. 
		Let $\Pb$ be a parent set of size $k$ for $\xb$  that maximizes the family score. Formally,
		\[
		\Pb \in \argmax_{\Pi\subseteq \mathbf{X}_1:|\Pi|=k}\score(\{\family{\xb,\Pi}\}).
		\]
		Let $f^*_b = \family{\xb, \Pb}$. 
		Denote by $\struct_{d-1,k}$ the set of DAGs with an in-degree of at most $k$ over $\mathbf{X}_1$. Let $G_1 \in \struct_{d-1,k}$ be such a DAG with a maximal score, and denote this score $\bar{\score}$. Let $G_1' = G_1 \cup \{f^*_b\}$. This is a DAG in $\struct_{d,k}$ (over $\mathbf{X} \equiv \mathbf{X}_1\cup \{ \xb\}$), with a score  $\score(G_1') = \bar{\score} + \score(\{f^*_b\})$. Therefore, $\score^* \geq \bar{\score} + \score(\{f^*_b\})$.

		Now, let $G$ be a structure such that at least one of $\xa,\xb$ has children other than $\xa,\xb$. We consider two cases:
		\begin{enumerate}
			\item $\xb$ has no children in $G$.
			\item  $\xb$ has children in $G$.
		\end{enumerate}
		In case 1, 
		$\xa$ has children in $\mathbf{X}_1$. Let $f_b$ be the family of $\xb$ in $G$. The graph $G \setminus \{f_b\}$ is a DAG over $\mathbf{X}_1$ in which $\xa$ has children. Therefore, by assumption \ref{xdchildless} in the definition of $\cD_1$, $G\setminus \{f_b\}$ is not an optimal DAG, and so by assumption \ref{gap}, it holds that $\score(G\setminus \{f_b\}) \leq \bar{\score} - \beta$.
		 
		Therefore, we have 
		\[
		\score(G) = \score(G\setminus \{f_b\}) + \score(\{f_b\}) \leq \score(G\setminus \{f_b\}) + \score(\{f^*_b\}) \leq \bar{\score} - \beta + \score(\{f^*_b\}) \leq \score^* - \beta.
		\]
		The last inequality follows from the fact proved above that $\score^* \geq \bar{\score} + \score(\{f^*_b\})$. This proves the statement of the lemma for case 1.
		
		In case 2, $\xb$ has children in $G$. Denote this set of children by $W$. By \lemref{no_children}, there exists a graph $\tilde{G} \in \struct_{d,k}$ such that $|\score(G)-\score(\tilde{G})| < 3d\lambda$, $\xb$ has no children, and the children of $\xa$ are $W \setminus \{\xa\}$ as well as the children of $\xa$ in $G$. Now, consider two cases.
		\begin{itemize}
			\item $W \neq \{\xa\}$. In this case, $W \setminus \{\xa\} \neq \emptyset$ but cannot include $\xb$, and so in $\tilde{G}$, $\xa$ has at least one child which is different from $\xb$.
			\item $W = \{\xa\}$. In this case, from the definition of $G$, it follows that $\xa$ has children other than $\xb$ in $G$, and so also in $\tilde{G}$. 
		\end{itemize}
		In both cases, we get that in $\tilde{G}$, $\xb$ has no children and $\xa$ has at least one child other than $\xb$. Thus, $\tilde{G}$ satisfies the conditions of case 1 discussed above. It follows that $\score(\tilde{G}) \leq \score^* - \beta$. Therefore, $\score(G) < \score^* - \beta + 3d\lambda$. This proves the statement of the lemma for case 2, thus completing the proof. 
	\end{proof}

	The next lemma shows that property \ref{unique} in \defref{gammastable} of a stable distribution holds for $\cD_2$, for any $\gamma < \beta$  
	\begin{lemma}\label{lem:gapd1}
		Consider the marginal of $\cD_2(\lambda)$ on $\mathbf{X}_1 \setminus \{\xa\}$. There is a unique optimal EC for this distribution, and the difference in scores between the optimal EC and the second-best EC is at least $\beta$. 
	\end{lemma}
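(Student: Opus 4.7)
The plan is to reduce the claim to property~\ref{gap} of $\cD_1$ by constructing a score-preserving, equivalence-class-injective lift from DAGs on $V := \mathbf{X}_1 \setminus \{\xa\}$ into DAGs on $\mathbf{X}_1$. The starting observation is that the marginal of $\cD_2(\lambda)$ on $\mathbf{X}_1$ equals $\cD_1$, so the marginal on $V$ agrees with the marginal of $\cD_1$ on $V$; in particular, any conditional entropy of variables in $V$ given other variables in $V$ is the same under either distribution. Let $\score_1$ denote the score of a DAG on $\mathbf{X}_1$ under $\cD_1$, let $\score_V$ denote the score of a DAG on $V$ under the marginal on $V$, and let $\score^*_1$ denote the optimum of $\score_1$.

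Fix $\Pi^* \in \argmin\{H(\xa \mid \Pi) : \Pi \subseteq V, |\Pi| \leq k\}$ (entropy under $\cD_1$), and set $H^* := H(\xa \mid \Pi^*)$. For any DAG $G$ on $V$ with in-degree at most $k$, define $G^+ := G \cup \{\family{\xa,\Pi^*}\}$. Since $\xa$ is a sink in $G^+$, the graph $G^+$ is a DAG with in-degree at most $k$, and the additive score decomposition yields
\[
\score_1(G^+) = \score_V(G) - H^*.
\]
I would then show that $\max_G \score_V(G) = \score^*_1 + H^*$. The inequality $\leq$ is immediate from the identity above. For $\geq$, I would take an optimal DAG $G^*$ under $\cD_1$; by property~\ref{xdchildless}, $\xa$ is a sink in $G^*$, so removing the family of $\xa$ from $G^*$ yields a DAG on $V$ whose score equals $\score^*_1 + H(\xa \mid \Pi)$, where $\Pi$ is the parent set of $\xa$ in $G^*$, and this is at least $\score^*_1 + H^*$ by the choice of $\Pi^*$.

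For equivalence-class-injectivity of the lift, I would invoke \thmref{vstructures}. If $G_1 \not\equiv G_2$ on $V$, they differ either in the skeleton on $V$ or in some v-structure on $V$. Since $G_1^+$ and $G_2^+$ augment each with the same set of edges (from $\Pi^*$ into the sink $\xa$), any skeletal discrepancy on $V$ persists on $\mathbf{X}_1$, and any v-structure discrepancy on $V$ remains one on $\mathbf{X}_1$: the only new v-structures have $\xa$ as the child and are determined by $\Pi^*$ together with the common $V$-skeleton restricted to $\Pi^*$, hence are shared by $G_1^+$ and $G_2^+$. Therefore $G_1^+ \not\equiv G_2^+$ on $\mathbf{X}_1$.

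Assembling the pieces: any DAG $G$ on $V$ with $\score_V(G) > \score^*_1 + H^* - \beta$ satisfies $\score_1(G^+) > \score^*_1 - \beta$, so by property~\ref{gap} its lift $G^+$ lies in the unique optimal EC of $\cD_1$. By EC-injectivity, at most one EC on $V$ can lift into this optimal EC, so $G$ belongs to that single EC. This yields both the uniqueness of the optimal EC on $V$ and the $\beta$-gap to the next-best EC. The main obstacle will be the EC-injectivity step, which crucially uses that $\xa$ is a sink in the lift and that $\Pi^*$ is fixed across all lifts; the rest is routine score bookkeeping together with property~\ref{xdchildless}.
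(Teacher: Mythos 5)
Your proposal is correct and follows essentially the same route as the paper's proof: both lift DAGs on $V$ to $\mathbf{X}_1$ by appending $\xa$ as a sink with a family-score-maximizing parent set, use property~\ref{xdchildless} to show this lift attains the optimum over $\mathbf{X}_1$, and invoke \thmref{vstructures} to show the lift preserves non-equivalence, so that uniqueness and the $\beta$-gap are inherited from property~\ref{gap}. The only difference is presentational — you fold uniqueness and the gap into a single ``everything within $\beta$ of optimal lifts into the unique optimal EC'' argument, while the paper argues the two parts separately — but the underlying reasoning is identical.
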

	
	\begin{proof}
		Let $E_1$ be an optimal EC for the considered marginal over $\mathbf{X}_1 \setminus \{\xa\}$. Let $G_1 \in E_1$. Let $\Pa$ be a parent set of size $k$ for $\xa$ that maximizes the family score. Formally,  
		\[
		\Pa \in \argmax_{\Pi\subseteq \mathbf{X}_1\setminus \xa:|\Pi|=k}\score(\{\family{\xa,\Pi}\}).
		\]
		Denote $f_a^* := \family{\xa, \Pa}$. Let $G$ be an optimal graph over $\mathbf{X}_1$, and let $f_a$ be the family of $\xa$ in $G$. By assumption \ref{xdchildless} on $\cD_1$, $\xa$ has no children in $G$. Therefore, $G \setminus \{f_a\}$ is a DAG with an in-degree at most $k$ over $\mathbf{X}_1 \setminus \{\xa\}$. Thus, $\score(G \setminus \{f_a\}) \leq \score(G_1)$. It follows that
		\[
		\score(G) = \score(G \setminus \{f_a\}) + \score(\{f_a\}) \leq \score(G \setminus \{f_a\}) + \score(\{f_a^*\}) \leq \score(G_1) + \score(\{f_a^*\}) = \score(G_1 \cup \{f_a^*\}).
		\]
		Therefore, $G_1 \cup \{f_a^*\}$ is also optimal on $\mathbf{X}_1$.
		Now, assume for contradiction that there is another optimal EC over $\mathbf{X}_1\setminus \{\xa\}$, denoted $E_2$, and let $G_2 \in E_2$. By the same analysis as above, $G_2 \cup \{f_a^*\}$ is also optimal on $\mathbf{X}_1$.
		However, by \thmref{vstructures}, since $G_1$ and $G_2$ are not equivalent, then $G_1 \cup \{f_a^*\}$ and $G_2 \cup \{f_a^*\}$ are also not equivalent, contradicting assumption \ref{gap} on $\cD_1$. Therefore, $E_1$ is the only optimal EC over $\mathbf{X}_1 \setminus \{ X_a\}$. 
		
		Now, let $G_3$ be non-optimal DAG with an in-degree at most $k$ over $\mathbf{X}_1 \setminus \{\xa\}$, so $\score(G_3) < \score(G_1)$. By assumption \ref{gap} on $\cD_1$, 
		$\score(G_3  \cup \{f_a^*\}) \leq \score(G_1\cup \{f_a^*\}) - \beta$.
		Therefore, $\score(G_3) \leq \score(G_1) - \beta$. This proves the claim.
	\end{proof}

	\thmref{d2main} is now an immediate consequence of the lemmas above, since both properties of \defref{gammastable} hold for all positive $\gamma \leq \beta-3d\lambda$, by \lemref{optimalecgap} and \lemref{gapd1}.

	\section{Experiments}
	\label{sec:experiments}
	
	\newcommand{\fulltable}

{
  \begin{table}[h]
		\begin{center}
			\begin{tabular}{lllllll}
				\toprule
				$\mathbf{d}$ & $\mathbf{r}$ & $\mathbf{\epsilon\equiv d/r}$ & $\mathbf{N}$ \textbf{naive}  & $\mathbf{N}$ \textbf{active}  & \textbf{sample} & \textbf{\% accepted}\\
				 &  &  &  &   & \textbf{ratio}  &  \textbf{families}\\ 
				\toprule \multirow{4}{*}{6} 
				& $2^{13}$ &0.00073 & \textbf{43}$\times10^{12}$ &51$\times10^{12}$ &118\% &0\% \\  
				 & $2^{15}$ &0.00018 & \textbf{88}$\times10^{13}$ &104$\times10^{13}$ &118\% &0\% \\  
				 & $2^{17}$ &0.00005 & \textbf{17}$\times10^{15}$ &20$\times10^{15}$ &118\% &0\% \\  
				 & $2^{19}$ &0.00001 & \textbf{33}$\times10^{16}$ &39$\times10^{16}$ &117\% &0\% \\  
				\midrule\multirow{4}{*}{7} 
				& $2^{13}$ &0.00085 & 81$\times10^{12}$ &\textbf{73}$\pm10\times10^{12}$ &90\% &11\% $\pm$5.71\% \\ 
				& $2^{15}$ &0.00021 & 16$\times10^{14}$ &\textbf{13}$\times10^{14}$ &84\% &14\% \\  
				& $2^{17}$ &0.00005 & 32$\times10^{15}$ &\textbf{25}$\pm4\times10^{15}$ &79\% &17\% $\pm$8.57\% \\ 
				& $2^{19}$ &0.00001 & 62$\times10^{16}$ &\textbf{21}$\times10^{16}$ &34\% &42\% \\  
				\midrule\multirow{4}{*}{8} 
				& $2^{13}$ &0.00098 & \textbf{13}$\times10^{13}$ &15$\times10^{13}$ &116\% &0\% \\  
				& $2^{15}$ &0.00024 & 27$\times10^{14}$ &\textbf{24}$\times10^{14}$ &87\% &12\% \\  
				& $2^{17}$ &0.00006 & 54$\times10^{15}$ &\textbf{47}$\times10^{15}$ &87\% &12\% \\  
				& $2^{19}$ &0.00002 & 105$\times10^{16}$ &\textbf{26}$\times10^{16}$ &24\% &50\% \\  
				\midrule\multirow{4}{*}{9} 
				& $2^{13}$ &0.00110 & 21$\times10^{13}$ &\textbf{19}$\times10^{13}$ &91\% &11\% \\  
				& $2^{15}$ &0.00027 & 43$\times10^{14}$ &\textbf{21}$\times10^{14}$ &48\% &33\% \\  
				& $2^{17}$ &0.00007 & 85$\times10^{15}$ &\textbf{41}$\times10^{15}$ &48\% &33\% \\  
				& $2^{19}$ &0.00002 & 165$\times10^{16}$ &\textbf{30}$\times10^{16}$ &18\% &55\% \\  
				\midrule\multirow{4}{*}{10} 
				& $2^{13}$ &0.00122 & 31$\times10^{13}$ &\textbf{24}$\pm2\times10^{13}$ &76\% &18\% $\pm$4.00\% \\ 
				& $2^{15}$ &0.00031 & 64$\times10^{14}$ &\textbf{34}$\times10^{14}$ &54\% &30\% \\  
				& $2^{17}$ &0.00008 & 126$\times10^{15}$ &\textbf{68}$\times10^{15}$ &54\% &30\% \\  
				& $2^{19}$ &0.00002 & 244$\times10^{16}$ &\textbf{69}$\pm43\times10^{16}$ &28\% &49\% $\pm$13.00\% \\ 
				\midrule\multirow{4}{*}{11} 
				& $2^{13}$ &0.00134 & 45$\times10^{13}$ &\textbf{34}$\times10^{13}$ &75\% &18\% \\  
				& $2^{15}$ &0.00034 & 90$\times10^{14}$ &\textbf{53}$\times10^{14}$ &58\% &27\% \\  
				& $2^{17}$ &0.00008 & 17$\times10^{16}$ &\textbf{10}$\times10^{16}$ &58\% &27\% \\  
				& $2^{19}$ &0.00002 & 34$\times10^{17}$ &\textbf{20}$\times10^{17}$ &59\% &27\% \\  
				\midrule\multirow{6}{*}{12} 
				& $2^{13}$ &0.00146 & \textbf{61}$\times10^{13}$ &72$\times10^{13}$ &116\% &0\% \\  
				& $2^{15}$ &0.00037 & 124$\times10^{14}$ &\textbf{34}$\pm6\times10^{14}$ &27\% &47\% $\pm$3.82\% \\ 
				& $2^{17}$ &0.00009 & 245$\times10^{15}$ &\textbf{67}$\pm16\times10^{15}$ &27\% &51\% $\pm$3.33\% \\ 
				& $2^{19}$ &0.00002 & 475$\times10^{16}$ &\textbf{74}$\pm22\times10^{16}$ &15\% &58\% $\pm$5.03\% \\ 
				
				\bottomrule
				
			\end{tabular}

	\end{center}
	\caption{Experiment results for the synthetic networks. $N$ stands for the number of samples used by each algorithm. "sample ratio" is the sample size savings rate by \algname\ compared to the naive algorithm (N active / N naive).  ``\% accepted families'' is the fraction of the families that \algname\ accepted before the last stage.}\label{tab:exp_results}
\end{table} 

}

{
	\begin{table}[h]
		\begin{center}
			
			\begin{tabular}{llllllll}
				\toprule
				\textbf{Network} & $\mathbf{d}$ & $\mathbf{r}$ & $\mathbf{\epsilon\equiv d/r}$ & $\mathbf{N}$ \textbf{naive}  & $\mathbf{N}$ \textbf{active}  & \textbf{sample} & \textbf{\% accepted}\\
				
				& &  &  &  &   & \textbf{ratio} &  \textbf{families}\\ 
				\toprule \multirow{5}{*}{Cancer} 
				& \multirow{5}{*}{5} 
				& $2^{11}$ &0.00244 & \textbf{96}$\times10^{10}$ &116$\times10^{10}$ &120\% &0\% \\  
				& & $2^{13}$ &0.00061 & \textbf{20}$\times10^{12}$ &24$\times10^{12}$ &119\% &0\% \\  
				& & $2^{15}$ &0.00015 & 41$\times10^{13}$ &\textbf{30}$\times10^{13}$ &73\% &20\% \\  
				& & $2^{17}$ &0.00004 & 81$\times10^{14}$ &\textbf{13}$\times10^{14}$ &16\% &60\% \\  
				& & $2^{19}$ &0.00001 & 157$\times10^{15}$ &\textbf{24}$\times10^{15}$ &15\% &60\% \\  
				\midrule\multirow{5}{*}{Earthquake} 
				& \multirow{5}{*}{5} 
				& $2^{11}$ &0.00244 & \textbf{96}$\times10^{10}$ &116$\times10^{10}$ &120\% &0\% \\  
				& & $2^{13}$ &0.00061 & \textbf{20}$\times10^{12}$ &24$\times10^{12}$ &119\% &0\% \\  
				& & $2^{15}$ &0.00015 & \textbf{41}$\times10^{13}$ &49$\times10^{13}$ &119\% &0\% \\  
				& & $2^{17}$ &0.00004 & 81$\times10^{14}$ &\textbf{21}$\pm9\times10^{14}$ &25\% &52\% $\pm$9.80\% \\ 
				& & $2^{19}$ &0.00001 & 157$\times10^{15}$ &\textbf{47}$\pm18\times10^{15}$ &30\% &48\% $\pm$9.80\% \\ 
				\midrule\multirow{5}{*}{Survey} 
				& \multirow{5}{*}{6} 
				& $2^{11}$ &0.00293 & \textbf{20}$\times10^{11}$ &24$\times10^{11}$ &118\% &0\% \\  
				& & $2^{13}$ &0.00073 & 43$\times10^{12}$ &\textbf{21}$\times10^{12}$ &48\% &33\% \\  
				& & $2^{15}$ &0.00018 & 88$\times10^{13}$ &\textbf{22}$\times10^{13}$ &25\% &50\% \\  
				& & $2^{17}$ &0.00005 & 174$\times10^{14}$ &\textbf{44}$\times10^{14}$ &25\% &50\% \\  
				& & $2^{19}$ &0.00001 & 338$\times10^{15}$ &\textbf{86}$\times10^{15}$ &25\% &50\% \\  
				\midrule\multirow{5}{*}{Asia} 
				& \multirow{5}{*}{8} 
				& $2^{11}$ &0.00391 & \textbf{65}$\times10^{11}$ &76$\times10^{11}$ &116\% &0\% \\  
				& & $2^{13}$ &0.00098 & \textbf{13}$\times10^{13}$ &15$\times10^{13}$ &116\% &0\% \\  
				& & $2^{15}$ &0.00024 & 27$\times10^{14}$ &\textbf{24}$\times10^{14}$ &87\% &12\% \\  
				& & $2^{17}$ &0.00006 & 545$\times10^{14}$ &\textbf{64}$\pm28\times10^{14}$ &11\% &65\% $\pm$7.50\% \\ 
				& & $2^{19}$ &0.00002 & 1055$\times10^{15}$ &\textbf{94}$\pm39\times10^{15}$ &8\% &67\% $\pm$6.12\% \\ 
				\midrule\multirow{5}{*}{Sachs} 
				& \multirow{5}{*}{11} 
				& $2^{11}$ &0.00537 & \textbf{48}$\times10^{12}$ &54$\pm6\times10^{12}$ &111\% &1\% $\pm$5.45\% \\ 
				& & $2^{13}$ &0.00134 & \textbf{10}$\times10^{14}$ &11$\pm1\times10^{14}$ &109\% &2\% $\pm$5.82\% \\ 
				& & $2^{15}$ &0.00034 & 204$\times10^{14}$ &\textbf{57}$\pm31\times10^{14}$ &28\% &59\% $\pm$22.73\% \\ 
				& & $2^{17}$ &0.00008 & 402$\times10^{15}$ &\textbf{25}$\pm15\times10^{15}$ &6\% &75\% $\pm$4.17\% \\ 
				& & $2^{19}$ &0.00002 & 776$\times10^{16}$ &\textbf{49}$\pm14\times10^{16}$ &6\% &71\% $\pm$2.73\% \\

				\bottomrule
				
			\end{tabular}
		
		\end{center}
		\caption{Experiment results for benchmark networks. $N$ stands for the number of samples used by each algorithm. "sample ratio" is the sample size savings rate by \algname\ compared to the naive algorithm (N active / N naive). ``\% accepted families'' is the fraction of the families that \algname\ accepted before the last stage.}\label{tab:benchmark_results}
	\end{table} 
	
}

In this section, we report an empirical comparison between the naive algorithm, given in \secref{naive}, and \algname, given in \secref{newalg}. The code for both algorithms and for the experiments below is available at \codeurl.

We implemented the algorithms for the case of discrete distributions using the plug-in estimator for conditional entropy to calculate the empirical score $\escore$ (see \eqref{empscore}), as discussed in \secref{naive}. To avoid spurious score ties, we augmented the empirical score with an additional tie-breaking term. 
Since the empirical entropy never increases when adding a potential parent to a family, any family with a maximal score that has fewer than $k$ parents can be augmented with additional parents without decreasing the score. This creates spurious optimal families, thus significantly reducing the number of cases where an optimal family can be identified and accepted by \algname. To break the score ties, we added to the entropy of each family a small penalty term that prefers smaller families.
Formally, we used the following modified empirical score:
\begin{equation}
\escore(G):= -\sum_{i\in [d]}\left(\hat{H}(X_i \mid \Pi_i(G)) + \beta|\Pi_i(G)|\cdot\hat{H}(X_i)\right),
\end{equation}
where $\beta := 0.001$.

We now describe how we implemented the non-trivial steps in each algorithm. The naive algorithm aims to output a network structure in $\struct_{d,k}$ that maximizes the empirical score, as defined  above. Finding a structure with the maximal score is computationally hard in the general case, as discussed in \secref{intro}. We use the well-established algorithm GOBNILP \citep{Cussens11}, as implemented in the eBNSL package\footnote{\url{https://github.com/alisterl/eBNSL}} \citep{LiaoShCuVa18}, which attempts to find such a structure using a linear programming approach.

We now turn to the implementation of \algname. To compute $N_t$, the number of required samples (see line \ref{Nt}), we used the formula given in the bound in \lemref{N}. To compute $L_j$, the set of possible equivalence classes  (see line \ref{L_j}), we used the eBNSL algorithm, also implemented in the eBNSL package. The input to eBNSL includes the score of each family and a positive real number $\epsilon$. The algorithm heuristically attempts to output all network structures in $\struct_{d,k}$ with a score gap of at most $\epsilon$ from the maximal achievable score. To compute $L_j$, \algname\ sets the $\epsilon$ input value of eBNSL to $\theta_j$. The list of structures provided as output by eBNSL is then divided into equivalence classes, using the characterization given in \thmref{vstructures}. To impose the constraint that $L_j$ should only include structures consistent with the accepted families, as required by the definition of $L_j$, \algname\ provides eBNSL with structural constraints as additional input. These constraints specify sets of edges that must or must not be included in the output structures. To find a a structure that maximizes the empirical score, \algname\ uses GOBNILP, providing it with constraints that impose the inclusion of accepted families.

We ran experiments using data generated from synthetically constructed networks, and from discrete networks from the Bayesian Network Repository,\footnote{\url{https://www.bnlearn.com/bnrepository/}} which provides Bayesian networks that are commonly used as benchmarks.  We generated synthetic networks with 6-12 nodes, and tested all benchmark networks from the ``small networks'' category of the repository, which includes networks with up to 11 nodes. In all the networks, all nodes represented Bernoulli random variables taking values in $\{0,1\}$, except for the benchmark networks Survey and Sachs, which include multinomially distributed variables with $3$ possible values.

The synthetic networks describe stable distributions of the type presented in \secref{stableex}, in which each of the network variables has at most  two parents. The nodes in a network with $d$ nodes, denoted $\cB_d$, are denoted: $X_1,\ldots, X_{d-2}, \xa, \xb$.
\figref{gammav} illustrates the graph structures for $\cB_6$ and $\cB_7$. In all networks, $X_1$ has no parents and is a Bernoulli(1-$\rho$) random variable, where $\rho = 0.99$. $X_2$ has only $X_1$ as a parent and is equal to $X_1$ with an independent probability of $1-\rho$ (otherwise, it is equal to $1-X_1$). Each of the variables $X_i$ for $i \in \{3,...,d-2\}$ has two parents $X_j, X_{j'}$ for some $j,j' < i$.
The value of $X_i$ is $\mathrm{xor}
(X_j, X_{j'})$ with an independent probability of $\rho$. In all networks, $\xa$ is the child of $X_3$ and $X_4$, and the value of $\xa$ is $\mathrm{xor}
(X_3, X_{4})$ with an independent probability of $\rho$. $\xb$ is a slightly noisy version of $\xa$, such that $\xa=\xb$ with an independent probability of $1-5\cdot10^{-6}$. This construction satisfies the conditions in \secref{stableex} with $\mathbf{X}_1:=\{X_1,\ldots,X_{d-2},\xa\}$. Since $\xb$ is almost always equal to $\xa$, it is hard to distinguish between them using samples. In particular, the structure in which $X_3,X_4$ are the parents of $\xb$ and the latter is the parent of $\xa$ has a score which is very close to optimal. The advantage of the active algorithm is in being able to focus on observations that distinguish between $\xa$ and $\xb$.

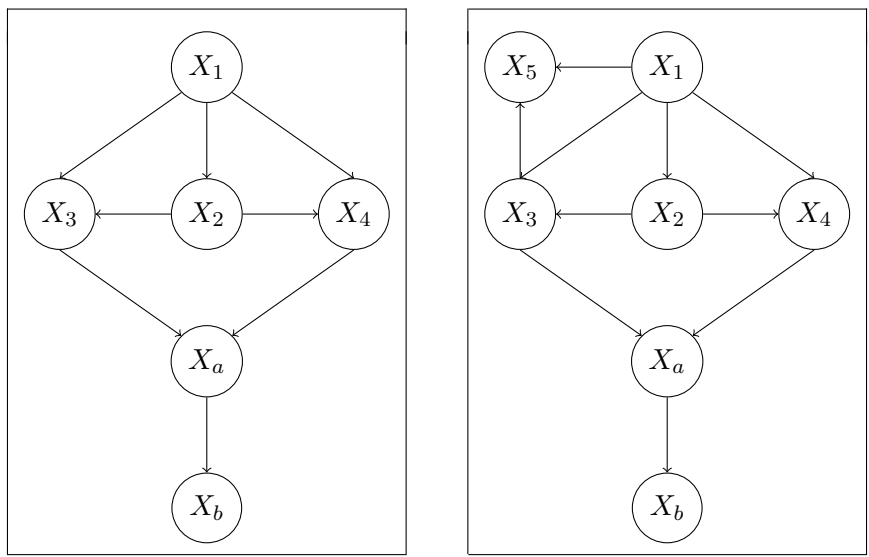
\begin{figure}[h!]
	\begin{center}
		\begin{tabular}[ht]{| c | c |c|}
			\cline{1-1}\cline{3-3}
			& & \\[-5pt]
			\begin{tikzpicture}[
			observednode/.style={circle, draw},
			hiddennode/.style={circle, draw, dashed},
			]
			\node[observednode] (x0) {$X_1$};
			\node[observednode, below=of x0] (x1) {$X_2$};
			\node[observednode, left=of x1] (x2) {$X_3$};
			\node[observednode, right=of x1] (x3) {$X_4$};
			\node[observednode, below=of x1] (x4) {$\xa$};
			\node[observednode, below=of x4] (x5) {$\xb$};

			\draw[->] (x0.south) -- (x1.north);
			\draw[->] (x0.south west) -- (x2.north);
			\draw[->] (x0.south east) -- (x3.north);
			\draw[->] (x1.west) -- (x2.east);
			\draw[->] (x1.east) -- (x3.west);
			\draw[->] (x2.south) -- (x4.north west);
			\draw[->] (x3.south) -- (x4.north east);
			\draw[->] (x4.south) -- (x5.north);

			\end{tikzpicture} & $\quad$ &
			\begin{tikzpicture}[
			observednode/.style={circle, draw},
			hiddennode/.style={circle, draw, dashed},
			]
			\node[observednode] (x0) {$X_1$};
			\node[observednode, below=of x0] (x1) {$X_2$};
			\node[observednode, left=of x1] (x2) {$X_3$};
			\node[observednode, right=of x1] (x3) {$X_4$};
			\node[observednode, below=of x1] (x4) {$\xa$};
			\node[observednode, below=of x4] (x5) {$\xb$};
			
			\node[observednode, above=of x2] (x6) {$X_5$};
			
			\draw[->] (x0.south) -- (x1.north);
			\draw[->] (x0.south west) -- (x2.north);
			\draw[->] (x0.south east) -- (x3.north);
			\draw[->] (x1.west) -- (x2.east);
			\draw[->] (x1.east) -- (x3.west);
			\draw[->] (x2.south) -- (x4.north west);
			\draw[->] (x3.south) -- (x4.north east);
			\draw[->] (x4.south) -- (x5.north);
			
			\draw[->] (x0.west) -- (x6.east);
			\draw[->] (x2.north) -- (x6.south);
			
			\end{tikzpicture}\\
			\cline{1-1}\cline{3-3}
		\end{tabular}
		
		\caption{Illustration of the networks $\cB_6$ and $\cB_7$ used in the experiments.}\label{fig:gammav}
	\end{center}
\end{figure}

In all of the synthetic and benchmark networks, the true network could be represented with $k=2$, except for the benchmark Sachs network, in which the true network requires $k=3$. Accordingly, the value of $k$ was set to $2$ for all but the Sachs network, where it was set to $3$. In all of the experiments, we set $\delta=0.05$.

To set the values of $\epsilon$ in each experiment in a comparable way, we adjusted this value by the number of network nodes, denoted by $d$, by
fixing the ratio $r:=d / \epsilon$ and calculating $\epsilon$ based on this ratio. We ran experiments with $r := 2^j$ for a range of values of $j$.
Each experiment configuration was ran $10$ times, and we report the average and standard deviation of the results. In all of the experiments, both the naive algorithm and \algname\ found an $\epsilon$-optimal structure.  Note that for the naive algorithm, all repeated experiments use the same sample size, since this size is pre-calculated.

Detailed results of the synthetic and the benchmark experiments are provided in \tabref{exp_results} and \tabref{benchmark_results}, respectively. The ``sample ratio'' column calculates the average reduction in the number of samples when using \algname\ instead of the naive algorithm, where a number below $100\%$ indicates a reduction in the number of samples. The last column lists the percentage of families that were accepted early in each experiment.
\fulltable

It can be seen in both tables that \algname\ obtains a greater reduction for larger values of $r$ (smaller values of $\epsilon$). 
This is also depicted visually in \figref{changing_d}, which plots the sample sizes obtained by the algorithms for the synthetic networks as a function of $d$ for three values of $r$. It can be seen that for $r=2^{13}$, there is almost no difference between the required sample sizes, while some sample saving is observed for $r=2^{15}$, and a large saving is observed for $r=2^{19}$. For the two greater values of $r$, it is apparent that the advantage increases with $d$, showing that larger values of $d$ lead to a larger sample size reduction.
In the most favorable configurations for both the synthetic and the benchmark networks, when $r=2^{19}$, and the largest tested value of $d$, \algname\ samples only 15\% and 6\%, respectively, of the number of samples required by the naive algorithm.

The increase in sample size reduction when increasing $r$ can be explained by observing in both tables that when $r$ is small, only a small fraction of the families is accepted early by \algname, leading to both algorithms requiring about the same sample size. In general, \algname\ is more successful when it accepts a larger fraction of the families early.

\begin{figure}[p!]
	
	\centering
			\includegraphics[scale=0.9]{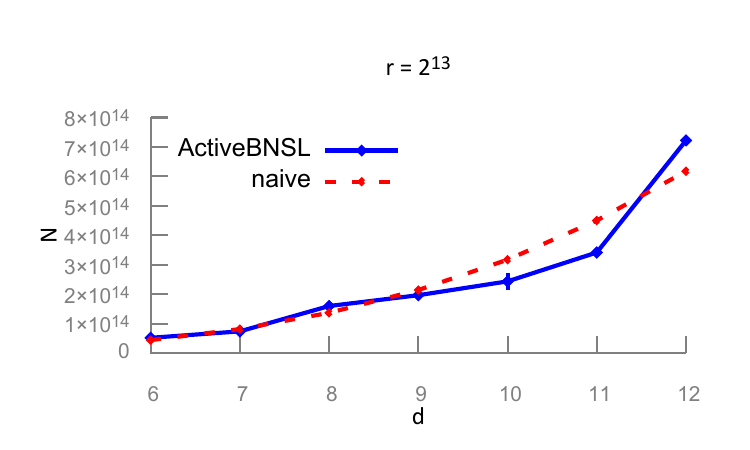}\\
			\includegraphics[scale=0.9]{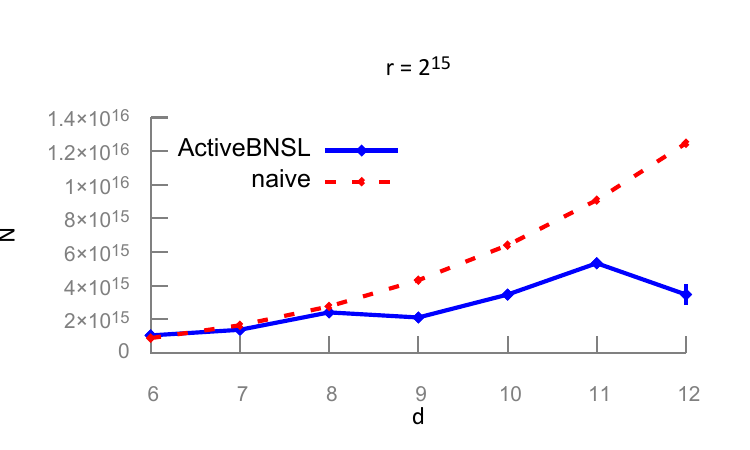}\\
			\includegraphics[scale=0.9]{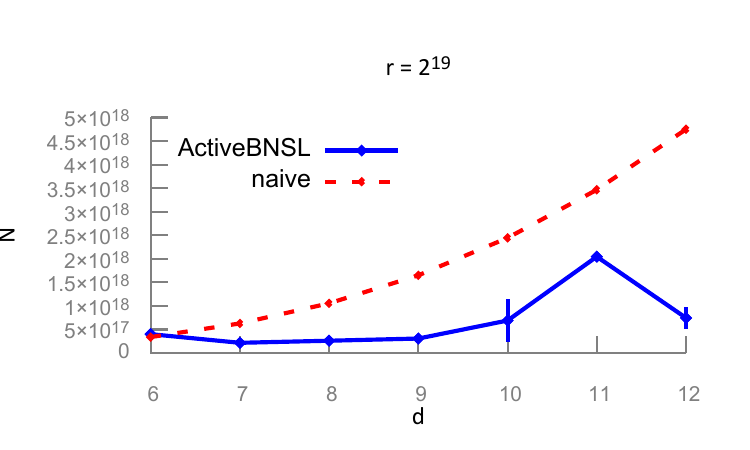}
	
                        \caption{Number of samples taken by \algname\ (active) and the naive algorithm, $N$, as a function of $d$, the number of nodes in the network, for three values of $r$. 
                        }
\label{fig:changing_d}
\end{figure}

To summarize, the reported experiments demonstrate that despite the computational hardness of \algname\ in the general case, it can be successfully implemented in practice using established approaches, and obtain a substantial improvement in the number of required samples. We note that the number of samples required for both the naive algorithm and \algname\ is quite large in our experiments. This is a consequence of the specific estimator and concentration bounds that we use, which may be loose in some cases. We expect that this number can be significantly reduced by using tighter concentration bounds tailored to this task. We leave this challenge for future work.

	\section{Discussion}\label{sec:discussion}
	
	This is the first work to study active structure learning for Bayesian networks in an observational setting. The proposed algorithm, \algname, can have a significantly improved sample complexity in applicable cases, and incurs a negligible  sample complexity penalty in other cases. 
	
	We considered the case where $k$, the maximal number of parents, is smaller by exactly one than the number of variable observations that can be made in a single sample. We now briefly discuss the implications of other cases. Denote by $l$ the number of variables that can be simultaneously observed, as determined based on external constraints (for instance, $l$ may indicate the maximal number of tests that each patient in an experiment can undergo). 
	As mentioned in \secref{related}, if $k > l-1$ then the score cannot be computed directly. Thus, if there are no other limitations on $k$, one may set $k := l-1$ to obtain the least constrained problem. Nonetheless, in some cases this value of $k$ may be too large, especially if $l$ is large, for instance due to computational reasons. This raises an interesting question: How do the results above generalize for the case of $k < l-1$? In this case, a non-interactive naive algorithm may be defined by fixing a set of subsets of size $l$ that cover all the possible subsets of size $k+1$, and sampling each of those sufficiently for uniform convergence. A version of ActiveBNSL might then be suggested, which samples the same set of fixed subsets of size $l$, and stops sampling such a subset only if a family was accepted for each of the variables in the subset. Characterizing the improvement of the active algorithm over the naive one in this case would depend on the initial choice of fixed subsets and their relationship to the underlying distribution. We leave a full analysis of this scenario for future work.
	
	We characterized a family of distributions in which \algname\ can obtain a significant sample complexity reduction, and demonstrated the reduction in experiments. A full characterization of distributions in which a sample complexity reduction is possible is an interesting open problem for future work.

	\bibliography{mybib}

	\appendix

	\clearpage
	\section{Convergence of the conditional entropy plug-in estimator}\label{ap:N}

	In this section, we prove \lemref{N}, which gives a bound on the number of random samples required to estimate the conditional entropy of a discrete random variable. We first bound the bias of the conditional-entropy plug-in estimator. 
	\cite{Paninski03} showed that the bias of the unconditional entropy estimator, for any random variable $A$ with support size $M_a$, satisfies
	\begin{equation}\label{eq:paninski}
	-\frac{M_a-1}{N} < \E[\hat{H}(A)] - \E[H(A)] < 0.
	\end{equation}
	The following lemma derives an analog result for the conditional entropy. 
	\begin{lemma}\label{lem:cond_bias}
		Let $A,B$ be discrete random variables with support cardinalities $M_a$ and $M_b$, respectively. Let $\hat{H}(A \mid B)$ be the plug-in estimator of $H(A \mid B)$, based on $N$ i.i.d.~copies of $(A,B)$. Then
		\begin{equation*}
		-\frac{(M_a-1)M_b}{N}\leq\mathbb{E}[\hat{H}(A\mid B)]-H(A\mid B) \leq 0.
		\end{equation*}
	\end{lemma}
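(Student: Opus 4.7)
The plan is to decompose the conditional entropy plug-in estimator according to the values of $B$, and then reduce to the unconditional case where Paninski's bound \eqref{paninski} applies.

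Write $N_b$ for the (random) number of samples with $B = b$, and let $\hat{H}(A \mid B = b)$ denote the plug-in entropy estimator computed from those $N_b$ samples alone (with the convention that the estimator vanishes when $N_b = 0$). Then
\[
\hat{H}(A \mid B) = \sum_{b} \frac{N_b}{N}\, \hat{H}(A \mid B = b),
\qquad
H(A \mid B) = \sum_b P(B=b)\, H(A \mid B = b).
\]
Conditional on $N_b \geq 1$, the $N_b$ samples with $B = b$ are i.i.d.\ from $P(A \mid B = b)$, so Paninski's bound gives
\[
-\frac{M_a - 1}{N_b} \leq \mathbb{E}[\hat{H}(A \mid B = b) \mid N_b] - H(A \mid B = b) \leq 0.
\]

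Next, I would multiply through by $N_b/N$ and take expectation over $N_b$. The upper bound combines directly with $\mathbb{E}[N_b/N] = P(B=b)$, using $H(A \mid B=b) \geq 0$ to handle the $N_b = 0$ term. For the lower bound, the crucial cancellation is
\[
\frac{N_b}{N} \cdot \frac{M_a - 1}{N_b} = \frac{M_a - 1}{N},
\]
so the remainder does not depend on $N_b$ and only requires weighting by the probability $P(N_b \geq 1)$ that the Paninski bound is invoked at all. This yields
\[
P(B=b)\, H(A \mid B = b) - \frac{M_a - 1}{N} P(N_b \geq 1)
\;\leq\; \mathbb{E}\!\left[\tfrac{N_b}{N}\,\hat{H}(A \mid B = b)\right]
\;\leq\; P(B=b)\, H(A \mid B = b).
\]

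Finally, summing over the $M_b$ values of $b$ and using $\sum_b P(N_b \geq 1) \leq M_b$ produces the claimed two-sided inequality. The only delicate point is the bookkeeping around $N_b = 0$ (where the conditional Paninski bound does not formally apply): both sides of the inequality must be handled termwise using the convention for the empty estimator, and one must verify that the $1/N_b$ factor in Paninski's bias is cancelled exactly by the $N_b/N$ weight so that no uncontrolled random denominator survives. Once this accounting is done, the rest is a straightforward linearity-of-expectation computation.
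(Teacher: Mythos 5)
Your proposal is correct and follows essentially the same route as the paper's proof: decompose $\hat{H}(A\mid B)$ as $\sum_b \frac{N_b}{N}\hat{H}(A\mid B=b)$, apply Paninski's bias bound conditionally on $N_b=n$, and exploit the exact cancellation $\frac{n}{N}\cdot\frac{M_a-1}{n}=\frac{M_a-1}{N}$ before summing over the $M_b$ values of $b$. Your explicit tracking of the $P(N_b\geq 1)$ factor is, if anything, slightly more careful than the paper's accounting, which simply bounds it by $1$.
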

	
	\begin{proof}
		Let $\cB$ be the support of $B$. For any $b \in \cB$,
		denote $p_b := P(B=b)$ and let $N_b$ be the number of examples  in the sample with $B = b$. We have
		\begin{equation}\label{eq:estcond}
		\E[\hat{H}(A \mid B)] = \sum_{b \in \cB}\E\left[\frac{N_b}{N} \hat{H}(A\mid B=b)\right].
		\end{equation}
		Bounding a single term in the sum, we have
		\begin{align}
		\E\left[\frac{N_b}{N}\cdot \hat{H}(A\mid B=b)\right]&=
		\sum_{n \in [N]}\frac{n}{N} \cdot P(N_b = n)\cdot \E[\hat{H}(A\mid B=b) \,\big|\, N_b = n]\notag\\
		&\geq\sum_{n \in [N]}\frac{n}{N}\cdot P(N_b = n)\cdot (H(A\mid B=b)-(M_a-1)/n),\label{eq:paneq}\\
		&= p_b\cdot H(A\mid B=b) - (M_a-1)/N,\notag
		\end{align}
		where the inequality on line (\ref{eq:paneq}) follows from \eqref{paninski}, and the last equality follows since 
		\[
		\sum_{n \in [N]}\frac{n}{N}P(N_b = n)=\E[N_b/N]=p_b,
		\]
		and
		\[
		\sum_{n \in [N]} \frac{n}{N} P(N_b = n) (M_a-1)/n = (M_a-1)/N.
		\]
		Plugging this into \eqref{estcond}, we get
		\begin{align*}
		\E[\hat{H}(A \mid B)] &= \sum_{b \in \cB} \E\left[\frac{N_b}{N}\cdot \hat{H}(A\mid B=b)\right]\\
		&\geq  \sum_{b \in \cB} (p_b \cdot H(A \mid B =b) - (M_a - 1)/N) \\
		&= H(A \mid B) - (M_a - 1)M_b/N.
		\end{align*}
		Proving that the bias is non-positive is done in a similar way.
		This completes the proof.
	\end{proof}

	We now use the lemma above to prove \lemref{N}.
	
	\begin{proof}[Proof of \lemref{N}]
		By McDiarmid's inequality \citep{McDiarmid89}, if the maximal absolute change that a single random sample can induce on the plug-in estimator is $\theta$, then
		\[
		P(|\hat{H}(A \mid B)-\mathbb{E}[\hat{H}(A \mid B)]| > \epsilon) \leq 2\exp(-2\epsilon^2/(N\theta^2)).
		\]
		Let $\cB$ be the support of $B$. For any $b \in \cB$,
		denote $p_b := P(B=b)$ and let $\hat{p}_b$ be the empirical fraction of samples with $B = b$. We have
		\[
		\hat{H}(A \mid B) = \sum_{b \in \cB} \hat{p}_b \cdot \hat{H}(A \mid B = b),
		\]
		where $\cB$ is the support of $B$ and 
		By \cite{ShamirSaTi10}, the maximal change that can be induced on $\hat{H}(A \mid B = b)$ when replacing an example $(a,b)$ in the sample with an example $(a',b)$ is $\log(\hat{p}_bN)/(\hat{p}_bN)$. Therefore, this type of replacement induces a change of at most $\log(\hat{p}_bN)/N \leq \log(N)/N$ on $\hat{H}(A \mid B)$. Now, if an example $(a,b)$ is replaced with an example $(a',b')$,  where $b\neq b'$, then the maximal change it induces on $\hat{H}(A \mid B)$ is $\log(M_a)/N$, since for any $b$, $\hat{H}(A \mid B = b) \in [0, \log(M_a)]$. Therefore, if $N \geq M_a$, both types of sample replacements induce a change of at most $\log(N)/N$ on $\hat{H}(A \mid B)$. It follows that
		\begin{equation}\label{eq:entropy_converegnce}
		P(|\hat{H}(A \mid B)-\E[\hat{H}(A \mid B)]|>\epsilon)\leq 2\exp(-\frac{2N\epsilon^2}{\log^2(N)}).
		\end{equation}
		
		If $\epsilon \geq (M_a-1)M_b/N$, then
		\begin{equation}\label{eq:conv2}
		P(|\hat{H}(A \mid B)-H(A \mid B)|>2\epsilon)\leq 2\exp(-\frac{2N\epsilon^2}{\log^2(N)}).
		\end{equation}
		Lastly, we derive a lower bound for $N$ such that the RHS is at most $\delta$.
		Let $\alpha \geq 1$  and assume $N \geq e^2$. We start by showing that if $N \geq \alpha\log^2(\alpha)$, then $N/\log^2(N) \geq \alpha/4$. 
		$N/\log^2(N)$ is monotonic increasing for $N \geq e^2$. 
		Therefore, for $N \geq \alpha\log^2(\alpha)$,
		\begin{align*}
		\frac{N}{\log^2(N)} &\geq \frac{\alpha\log^2(  \alpha)}{\log^2( \alpha\log^2(  \alpha))} = \frac{\alpha\log^2(\alpha)}{(\log(\alpha) + \log(\log^2( \alpha)))^2} 
		\end{align*}
		Since $\alpha \geq 1$, we have $\log^2(\alpha) \leq \alpha$. It follows that $\log(\log^2(\alpha)) \leq \log(\alpha)$. Therefore,
		\[
		(\log(\alpha) + \log(\log^2( \alpha)))^2 \leq (\log(\alpha) + \log(\alpha))^2  = 4\log^2(\alpha).
		\] 
		We conclude that for $N \geq \alpha\log^2(\alpha)$, $N/\log^2(N) \geq \alpha/4.$ Setting $\alpha := 2\log(2/\delta)/\epsilon^2$, we obtain that if $N \geq \alpha \log^2 \alpha$, then $N/\log^2(N) \geq \log(2/\delta)/(2\epsilon^2),$ which implies that the RHS of \eqref{conv2} is at most $\delta$. 
	\end{proof}

	\section{Proofs for technical lemmas in \secref{stableex}}\label{ap:deferred}
	In this section, we provide the proofs of technical lemmas stated in \secref{stableex}. 
	\begin{proof}[of \lemref{help}]
		To prove \eqref{xmidpi}, note that
		\begin{align*}
		H(\xa,\xb\mid\Pi)&=H(\xa,\xb,\Pi)-H(\Pi) =H(\xb\mid \xa,\Pi)+H(\xa,\Pi)-H(\Pi)\\
		&= H(\xb\mid \xa)+H(\xa\mid\Pi).
		\end{align*} 
		The last equality follows since by assumption \ref{coin}, $\Pi\rightarrow \xa\rightarrow \xb$ is a Markov chain, so that $H(\xb\mid \xa,\Pi) = H(\xb\mid \xa)$. From the above it we have
		\[
		H(\xa\mid \Pi) = H(\xa,\xb\mid\Pi)-H(\xb\mid \xa).
		\]
		It follows that
		\begin{align*}
		|H(\xb\mid\Pi)-H(\xa\mid\Pi)|&=|H(\xb\mid\Pi)-H(\xa,\xb\mid\Pi)+H(\xb\mid \xa)|\\
		&=|H(\xb,\Pi)-H(\xa,\xb,\Pi)+H(\xb\mid \xa)|\\
		&=|H(\xb\mid \xa)-H(\xa\mid \xb,\Pi)|\\
		&\leq \max(H(\xb\mid \xa),H(\xa\mid \xb,\Pi))\\
		&\leq \max(H(\xb\mid \xa), H(\xa\mid \xb))\leq \lambda.
		\end{align*}
		The last inequality follows from the definition of $\lambda$. This proves \eqref{xmidpi}.
		
		To prove \eqref{y}, note that
		\begin{align*}
		H(Y,\xb\mid\Pi,\xa)&=H(Y,\xb,\Pi,\xa)-H(\Pi,\xa)\\
		&=H(\xb\mid\Pi,Y,\xa)+H(\Pi,Y,\xa)-H(\Pi,\xa)\\
		&= H(\xb\mid \xa)+H(Y\mid\Pi,\xa),
		\end{align*} 
		where the last equality follows since by assumption \ref{coin}, $(Y,\Pi)\rightarrow \xa\rightarrow \xb$ is a Markov chain, so that $H(\xb\mid\Pi,Y,\xa)=H(\xb\mid \xa)$. The equality above implies that
		\[
		H(Y\mid\Pi,\xa)=H(Y,\xb\mid\Pi,\xa)-H(\xb\mid \xa).
		\]
		It follows that
		\begin{align*}
		&|H(Y\mid\Pi,\xb)-H(Y\mid\Pi,\xa)|\\& =|H(Y,\Pi,\xb)-H(\Pi,\xb)-H(Y,\xb\mid\Pi,\xa)+H(\xb\mid \xa)|\\
		&=|H(Y,\Pi,\xb)-H(\Pi,\xb)-H(Y,\xb,\Pi,\xa)+H(\Pi,\xa)+H(\xb\mid \xa)|\\
		&=|-H(\xa\mid Y,\Pi,\xb)-H(\xb\mid\Pi)+H(\xa\mid \Pi) +H(\xb\mid \xa)|\\
		&\leq H(\xa\mid Y,\Pi,\xb)+ |H(\xb\mid\Pi)-H(\xa\mid \Pi)| + H(\xb\mid \xa)\\
		&\leq H(\xa\mid \xb) + |H(\xb\mid\Pi)-H(\xa\mid \Pi)| + H(\xb\mid \xa) \leq 3\lambda.
		\end{align*}
		The last inequality follows from \eqref{xmidpi} and the definition of $\lambda$. This proves \eqref{y}.
	\end{proof}
	
	\begin{proof}[of \lemref{no_children}]
		If $\xb$ has no children in $G$, then set $\tilde{G} := G$. It is easy to see that in this case, $\tilde{G}$ satisfies all the required properties.
		
		Otherwise, denote the set of children of $\xb$ in $G$ by $W$. Denote by $\Pb,\Pa$ the parent sets of $\xb,\xa$ in $G$, respectively, and by $\Pi_w$ the parents of $X_w \in W$. Denote by $f_v:=\family{X_v,\Pi_v} \in G$ the family of $X_v$ in $G$. We construct $\tilde{G}$ from $G$ by replacing the families $f_v$,  for $\xa,\xb$ and all $X_w \in W$, by new families $\tilde{f}_v := \family{X_v, \tilde{\Pi}_v}$, as follows.
		
		First, define $\tilde{\Pi}_a$ and $\tilde{\Pi}_b$:
		\begin{enumerate}
			\item If $\xb$ has no directed path to $\xa$, define $\tilde{\Pi}_a := \Pi_a$ and $\tilde{\Pi}_b := \Pi_b$.
			\item Otherwise ($\xb$ has a directed path to $\xa$), 
			if $\xb$ is a parent of $\xa$, make $\xa$ a parent of $\xb$ instead. 
			In any case, switch between the other parents of $\xa$ and $\xb$. 
			Formally, set $\tilde{\Pi}_a = \Pb$. If $\xb\in\Pa$, $\tilde{\Pi}_b:=\Pa\setminus\{\xb\}\cup\{\xa\}$. Otherwise, $\tilde{\Pi}_b:=\Pa$. 
		\end{enumerate}
		\label{compliacted}
		Next, for each $w\in W \setminus \{\xa\}$, make $\xa$ a parent of $w$ instead of $\xb$. Formally, $\tilde{\Pi}_w:=\Pi_w\setminus\{\xb\}\cup\{\xa\}$.
		This completes the definition of $\tilde{G}$.
		
		It is easy to see that the maximal in-degree of $\tilde{G}$ is at most that of $G$, which is at most $k$, as required. 
		We now show that $\tilde{G}$ is a DAG, thus proving that $\tilde{G} \in \struct_{k,d}$.
		
		\begin{claim}
			$\tilde{G}$ is a DAG.
		\end{claim}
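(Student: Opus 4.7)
The plan is to argue by contradiction that $\tilde{G}$ contains no directed cycle. First I would verify that $\xb$ has no outgoing edges in $\tilde{G}$. For every $w \in W \setminus \{\xa\}$, the redefinition $\tilde{\Pi}_w := \Pi_w \setminus \{\xb\} \cup \{\xa\}$ replaces the edge $\xb \to w$ by $\xa \to w$, so the only potentially remaining outgoing edge from $\xb$ would be $\xb \to \xa$. In case 1 (no $\xb$-to-$\xa$ path in $G$) we have $\xa \notin W$, so this edge does not exist in $G$ either. In case 2 with $\xb \in \Pa$, the redefinition $\tilde{\Pi}_a := \Pb$ removes $\xb$ from $\xa$'s parents. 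In case 2 with $\xb \notin \Pa$, no such edge exists in $G$. Hence $\xb$ has no outgoing edges in $\tilde{G}$, so any directed cycle in $\tilde{G}$ must avoid $\xb$; WLOG we may pick a simple such cycle.

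Next I would observe that every edge of $\tilde{G}$ whose source is not $\xa$ is already an edge of $G$. For targets $v \neq \xa, \xb$ with $v \notin W$, the parent set is unchanged; for targets $w \in W \setminus \{\xa\}$, the parent set $\tilde{\Pi}_w$ differs from $\Pi_w$ only in the presence of $\xa$ and absence of $\xb$, so any incoming edge with source other than $\xa$ still belongs to $G$; and the new in-edges of $\xa$ (from $\Pb$ in case 2) do not involve $\xa$ as their source. It follows that any cycle in $\tilde{G}$ that avoids $\xa$ is also a cycle in $G$, contradicting $G \in \struct_{d,k}$.

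It remains to rule out a simple cycle through $\xa$ that avoids $\xb$. Write such a cycle as $\xa \to v_1 \to \cdots \to v_k \to \xa$ with $v_i \notin \{\xa,\xb\}$. By the previous paragraph, the intermediate edges $v_i \to v_{i+1}$ all belong to $G$, so $G$ contains a path $v_1 \to \cdots \to v_k$. The outgoing edge $\xa \to v_1$ of $\tilde{G}$ is either (a) an edge of $G$, or (b) a redirected edge, in which case $v_1 \in W$ and $\xb \to v_1$ lies in $G$. The incoming edge $v_k \to \xa$ corresponds to $v_k \in \tilde{\Pi}_a$, which equals $\Pa$ in case 1 and $\Pb$ in case 2; hence $v_k \to \xa$ lies in $G$ in case 1, while $v_k \to \xb$ lies in $G$ in case 2. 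A short case analysis then yields a contradiction: in case 1, option (a) produces a $G$-cycle $\xa \to v_1 \to \cdots \to v_k \to \xa$, and option (b) produces a $G$-path $\xb \to v_1 \to \cdots \to v_k \to \xa$ that violates the case 1 hypothesis; in case 2, option (a) combines with the assumed $\xb$-to-$\xa$ path in $G$ to form a $G$-cycle, and option (b) gives the $G$-cycle $\xb \to v_1 \to \cdots \to v_k \to \xb$.

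The main obstacle I expect is the bookkeeping in case 2, where simultaneously rerouting the parents of both $\xa$ and $\xb$ and also redirecting the $W$-edges to $\xa$ raises the possibility of a newly-created cycle through $\xa$ using the freshly-added edges $\Pb \to \xa$ and (when $\xb \in \Pa$) $\xa \to \xb$. The observation that neutralizes this risk is precisely that $\xb$ has no outgoing edges in $\tilde{G}$, so any such cycle must close via the new in-edges of $\xa$ alone, and can then be traced back into an already-existing $G$-cycle or a $G$-path between $\xb$ and $\xa$ that contradicts the case hypothesis.
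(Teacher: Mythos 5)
Your proof is correct and follows essentially the same strategy as the paper's: establish that $\xb$ is a sink in $\tilde{G}$ and that the only genuinely new outgoing edges that matter emanate from $\xa$, then trace any putative cycle of $\tilde{G}$ back to either a cycle in $G$ or a directed $\xb$-to-$\xa$ path in $G$ contradicting the case hypothesis (the paper organizes the same argument as a case split on which new edge $(A,B)$ could lie on a cycle). The one blemish is the intermediate assertion that ``every edge of $\tilde{G}$ whose source is not $\xa$ is already an edge of $G$,'' which is false in case 2 (the new in-edges of $\xa$ from $\Pb\setminus\Pa$, and of $\xb$ from $\Pa\setminus\Pb$, have sources other than $\xa$); however, every conclusion you actually draw from it needs only the correct weaker statement that each edge of $\tilde{G}$ with both endpoints outside $\{\xa,\xb\}$ lies in $G$, so the argument goes through as written.
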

		\begin{claimproof}
			Consider first the case where $\xb$ does not have a directed path to $\xa$. Let $(A,B)$ be an edge in $\tilde{G}$ that is not in $G$. Then $B \in W$ and $A = \xa$.
			In $G$, there is no directed path from $B$ to $\xa$, since $B$ is a child of $\xb$ and this would create a directed path from $\xb$ to $\xa$. But all paths from $B$ in $\tilde{G}$ are the same as in $G$. Thus, there is no cycle in $\tilde{G}$.
			
			Now, consider the case where $\xb$ has a directed path to $\xa$. 
			Let $(A,B)$ be an edge in $\tilde{G}$ that is not in $G$. We will show that it does not create a cycle in $\tilde{G}$. Divide into cases:
			\begin{enumerate}
				\item If $B=\xb$, then $(A,B)$ is not an edge in a cycle in $\tilde{G}$, since $\xb$ has no children in $\tilde{G}$.	
				
				\item If $B\in W$, then $A=\xa$, since $\xa$ is the only parent added to $W$. Thus, $(A,B)$ is part of a cycle in $\tilde{G}$ only if $B$ is an ancestor of $\xa$ in $\tilde{G}$. Assume in contradiction that this is the case, and let $v_1,\ldots,v_l$ be the nodes in a shortest directed path from $B$ to $\xa$. $v_l$ is a parent of $\xa$ in $\tilde{G}$, thus it is a parent of $\xb$ in $G$. The nodes $v_1,\ldots, v_l$ cannot be $\xa$ or $\xb$ (because it has no children in $\tilde{G}$), thus their parents are the same in $G$ and $\tilde{G}$. It follows that the directed path $B,v_1,\ldots,v_l,\xb$ exists in $G$. However, $B$ is a child of $\xb$ in $G$, thus this implies a cycle in $G$, a contradiction. 
				Therefore, in this case $(A,B)$ is not part of a cycle.
				
				\item If $B=\xa$, then $A\in\Pb$. Thus, $(A,B)$ is part of a cycle if there is a directed path from $\xa$ to $A$ in $\tilde{G}$. Assume in contradiction that this is the case, and let $v_1,\ldots,v_l$ be the nodes in a shortest directed path from $\xa$ to $A$ in $\tilde{G}$. The nodes $v_2,\ldots, v_l$ and their parents cannot be $\xa$ or $\xb$, thus their parents are the same in $G$ and $\tilde{G}$. It follows that the directed path $v_1,v_2,\ldots,v_l,A$ exists in $G$. Now, if $v_1$ is a child of $\xb$ in $G$, this means that there is a directed path in $G$ from $\xb$ to its parent $A$, which means that there is a cycle in $G$.
				If $v_1$ is a child of $\xa$ in $G$, then, since $\xb$ has a directed path to $\xa$ in $G$, this again means that there is a path from $\xb$ to $A$ in $G$, leading to a cycle in $G$. In both cases, this contradicts the fact that $G$ is a DAG. 
				Thus, in this case as well, $(A,B)$ is not part of a cycle.
			\end{enumerate}
			
			Therefore, in this case as well, $\tilde{G}$ is a DAG.
			This completes the proof of the claim.
		\end{claimproof}
		
		To show that $\tilde{G}$ satisfies the properties stated in the lemma, observe first that properties (\ref{childless}) and (\ref{xa_child}) are immediate from the construction of $\tilde{G}$. Property (\ref{sim_score}) requires a bound on the difference in scores of $G$ and $\tilde{G}$. We have
		\[
		|\score(G) - \score(\tilde{G})| \leq \sum_{w \in W\setminus\{\xa\}}|H(f_w) - H(\tilde{f}_w)| + |H(f_a) - H(\tilde{f}_a) + H(f_b) - H(\tilde{f}_b)|.
		\]
		To bound the first term, let $\Pb' := \Pi_w \setminus \{\xb\}$. Then 
		\[
		|H(f_w) - H(\tilde{f}_w)| = |H(w \mid \Pb', \xb) - H(w \mid \Pb', \xa)| \leq 3\lambda,
		\]
		where the last inequality follows from \eqref{y}. Since there are at most $d-2$ such terms, the first term is at most $3(d-2)\lambda$.
		
		To bound the second term,
		first note that it is equal to zero if $\xb$ has no directed path to $\xa$ in $G$. If $\xb$ does have a directed path to $\xa$ in $G$, then let $\Pa' := \Pa \setminus \{\xb\}$.
		If $\xb$ is not a direct parent of $\xa$, then
		\begin{align*}
		&|H(f_b) - H(\tilde{f}_b) + H(f_a) - H(\tilde{f}_a)| \\
		&= |H(\xa\mid\Pa') + H(\xb\mid\Pb)-H(\xa\mid\Pb)- H(\xb\mid \Pa')| =: \Delta.
		\end{align*}
		If $\xb$ is a direct parent of $\xa$, then 
		\begin{align*}
		&|H(f_a) - H(\tilde{f}_a) + H(f_b) - H(\tilde{f}_b)| \\
		&= |H(\xa\mid\Pa',\xb) + H(\xb\mid\Pb)-H(\xa\mid\Pb)- H(\xb\mid\Pa',\xa)| \\
		&=
		|H(\xa\mid\Pa',\xb) + H(\xb\mid\Pb)-H(\xa\mid\Pb)- H(\xb\mid\Pa',\xa)|\\
		&=|H(\xb\mid\Pb)-H(\xa\mid\Pb)+H(\xa,\Pa')-H(\xb,\Pa')| = \Delta.
		\end{align*}
		Now, $\Delta$ can be bounded as follows: 
		\begin{align*}
		\Delta \leq |H(\xb\mid\Pb)-H(\xa\mid\Pb)|+|H(\xa\mid\Pa')-H(\xb\mid\Pa')|\leq 2\lambda,
		\end{align*}
		where the last inequality follows from \eqref{xmidpi}. Thus, the second term is upper bounded by $2\lambda$ in all cases. Property (\ref{sim_score}) is proved by summing the bounds of both terms.

	\end{proof}

	Lastly, we prove \thmref{scoreeps}.

	\begin{proof}[of \thmref{scoreeps}]
		Set $\lambda < \epsilon/2$. We prove that there exists a non-optimal EC with a score at least $\score^* -2\lambda$.
		Let $G^*\in\struct_{d,k}$ be an optimal structure for $\cD_2(\lambda)$. By \thmref{d2main},  $\cD_2(\lambda)$ is $(\beta - 3d\lambda,\mathbf{X}_1\setminus \{\xa\})$-stable. Therefore, by \lemref{optimalstruct}, the set of families of $\bar{V} \equiv \{\xb,\xa\}$ in $G^*$ is an optimal legal family set. Denote this set $F := \{f_a,f_b\}$. First, we prove that $\xa$ and $\xb$ must be in a parent-child relationship.
		
		\begin{claim}
			In $F$, either $\xa$ is a parent of $\xb$, or vice versa.	
		\end{claim}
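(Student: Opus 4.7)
My plan is to argue by contradiction. Let $f_a = \family{\xa, \Pi_a}$ and $f_b = \family{\xb, \Pi_b}$ be the two families in $F$, and suppose that $\xa \notin \Pi_b$ and $\xb \notin \Pi_a$. Since $\mathbf{X} = \mathbf{X}_1 \cup \{\xb\}$, this forces both $\Pi_a$ and $\Pi_b$ to lie inside $\mathbf{X}_1 \setminus \{\xa\}$. I will then exhibit a legal family set for $\bar V := \{\xa,\xb\}$ whose score strictly exceeds that of $F$, contradicting the optimality of $F$ established in \lemref{optimalstruct}.

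The alternative I would use is $F'' := \{\family{\xb, \Pi_b}, \family{\xa, \{\xb\}}\}$: keep the family of $\xb$, and take $\xb$ as the sole parent of $\xa$. It is straightforward to check that $F''$ is legal: each variable in $\bar V$ appears as a child exactly once, the parent-set sizes are at most $k \geq 1$, and since $\xa$'s only parent is $\xb$ while $\Pi_b \subseteq \mathbf{X}_1 \setminus \{\xa\}$, the subgraph $F''$ is acyclic. A direct calculation gives
\[
\score(F'') - \score(F) = H(\xa \mid \Pi_a) - H(\xa \mid \xb),
\]
because the two $H(\xb \mid \Pi_b)$ terms cancel. Property (V) of $\cD_2(\lambda)$ yields $H(\xa \mid \xb) \leq \lambda$, and property (III) of $\cD_1$ combined with the monotonicity of conditional entropy under additional conditioning yields $H(\xa \mid \Pi_a) \geq H(\xa \mid \mathbf{X}_1 \setminus \{\xa\}) = \alpha$. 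Hence $\score(F'') - \score(F) \geq \alpha - \lambda > 0$, using $\lambda < \alpha$ from the definition of $\cD_2(\lambda)$, which is the desired contradiction.

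The main subtlety is in choosing the alternative structure. The symmetric candidate $F' := \{\family{\xa, \Pi_a}, \family{\xb, \{\xa\}}\}$ (making $\xa$ a parent of $\xb$ instead) forces a detour through \eqref{xmidpi} of \lemref{help} to relate $H(\xb \mid \Pi_b)$ to $H(\xa \mid \Pi_b)$, picking up an extra $\lambda$ and yielding only $\score(F') - \score(F) \geq \alpha - 2\lambda$; that would require the stronger hypothesis $\lambda < \alpha/2$, which is not guaranteed by the admissible range of $\lambda$. Routing the argument through $F''$ instead uses $H(\xa \mid \xb) \leq \lambda$ directly and closes the gap under the given assumptions.
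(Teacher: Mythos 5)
Your proof is correct and takes essentially the same route as the paper's: argue by contradiction, build an alternative legal family set for $\{\xa,\xb\}$ in which $\xb$ is a parent of $\xa$, and use $H(\xa\mid \mathbf{X}_1\setminus\{\xa\})=\alpha$ together with $H(\xa\mid\xb)\leq\lambda<\alpha$ to beat $\score(F)$ by at least $\alpha-\lambda$, contradicting \lemref{optimalstruct}. The only (immaterial) difference is that the paper adds $\xb$ to the existing parent set of $\xa$ (dropping another parent if needed to respect the in-degree bound $k$), whereas you make $\xb$ the sole parent; both give the same bound.
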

		
		\begin{claimproof}
			Assume for contradiction that the claim does not hold. We define a new legal family set $F'$ for $\bar{V}$ and show that its score is larger than the score of $F$, thus contradicting the optimality of $F$. Denote the parent set of $\xa$ in $f_a$ by $\bar{\Pi}_a$.
			Define a new family $f'_a := \family{X_a, \Pi'_a}$, where $\Pi'_a$ is obtained by adding $\xb$ as a parent to $\bar{\Pi}_a$. In addition, if this increases the number of parents of $\xa$ over $k$, then one of the other elements in $\bar{\Pi}_a$ is removed in $\Pi'_a$.
			Let $F' = \{f'_a,f_b\}$. According to Property \ref{xdchildless} in the definition of $\cD_2$, $\xa$ has no children in $\mathbf{X}_1\setminus\{\xa\}$, therefore, $\xa$ has no children in $G^*$, and so adding the edge $(X_a, X_b)$ cannot create a cycle. Therefore, $F'$ is a legal family set. We have
			\begin{align*}
			\score(F') - \score(F) &= \score(\{\family{\xa, \Pa'}\})-\score(\{\family{\xa,\Pa}\})= H(\xa \mid \Pa) - H(\xa \mid \Pa') \\
			&\geq H(\xa \mid \mathbf{X}_1 \setminus \{\xa\}) - H(\xa \mid \xb) \geq \alpha - \lambda,
			\end{align*}
			where the last inequality follows from assumptions \ref{alpha} and \ref{lambda}. Now, since by definition, $\lambda < \alpha$, it follows $\score(F') > \score(F)$, which contradicts the optimality of $F$. 
			This completes the proof of the claim.
		\end{claimproof} 
		
		Now, define a new graph as follows: Let $\tilde{F}$ a family set which is the same as $F$, except that the roles of $\xb$ and $\xa$ are swapped. Define $\tilde{G} := G^* \setminus F \cup \tilde{F}$. Notice that by \thmref{d2main}, $\cD_2$ is $(\gamma,V)$-stable for $V=\mathbf{X}_1\setminus\{\xa\}$. Then, by Property \ref{vgap} in \defref{gammastable}, both $\xa$ and $\xb$ have no children in $V$. This means that in $G^*$, one of $\xa$ and $\xb$ has the other as a child, and the other has no children. Thus, swapping them cannot create a cycle, implying that $\tilde{G} \in \struct_{d,k}$ and that $\tilde{F}$ is a legal family set.
		
		We now show that $\score^*-2\lambda \leq \score(\tilde{G}) < \score^*$, thus proving the theorem. Since we have
		\[
		\score^* - \score(\tilde{G}) = \score(F)-\score(\tilde{F}),
		\]
		it suffices to show that $0 <  \score(F)-\score(\tilde{F}) \leq 2\lambda$.

		Let $\bar{\Pi}_a$, $\bar{\Pi}_b$ be the parent sets of $\xa$ and $\xb$ in $F$, respectively.
		Let $\Pa := \bar{\Pi}_a \setminus \{\xb\}$ and $\Pb := \bar{\Pi}_b \setminus \{\xa\}$. If $\xa$ is a parent of $\xb$ in $F$, then 
		\begin{align*}
		\score(F)-\score(\tilde{F}) &= H(\xa\mid \Pb,\xb) + H(\xb\mid \Pa) -H(\xa\mid \Pa) - H(\xb\mid\Pb,\xa).
		\end{align*}
		Since $H(\xb\mid \Pb,\xa) = H(\xb, \Pb,\xa) - H(\xa \mid \Pb) - H(\Pb)$, and symmetrically for \mbox{$H(\xa \mid \Pb, \xb)$}, it follows that
		\[
		H(\xa\mid \Pb,\xb)-H(\xb\mid\Pb,\xa) = H(\xa\mid \Pb) -H(\xb\mid \Pb).
		\]
		Therefore,
		\begin{align}
		\score(F)-\score(\tilde{F}) = H(\xa\mid \Pb) -H(\xb\mid \Pb) +  H(\xb\mid \Pa) - H(\xa\mid\Pa).\label{eq:pitag}
		\end{align}
		By a symmetric argument, \eqref{pitag} holds also if $\xb$ is a parent of $\xa$ in $F$.
		Combining \eqref{pitag} with \eqref{xmidpi} in \lemref{help}, we have that $|\score(F)-\score(\tilde{F})| \leq 2\lambda$.
		We have left to show that $\score(F) > \score(\tilde{F})$, which will complete the proof of the theorem.

		We have, for any set $\Pi \subseteq \mathbf{X}_1$,
		\[
		H(\xb, \xa \mid \Pi) = H(\xb \mid \xa, \Pi) + H(\xa \mid \Pi) = H(\xb \mid \xa) + H(\xa \mid \Pi),
		\]
		where the last inequality follows since by assumption \ref{coin}, $\Pi \rightarrow \xa \rightarrow \xb$ is a Markov chain. 
		Therefore,
		\[
		H(\xb \mid \Pi) \equiv H(\xb, \xa \mid \Pi) - H(\xa \mid \xb, \Pi)  = H(\xb \mid \xa) + H(\xa \mid \Pi) - H(\xa \mid \xb, \Pi).
		\]
		Using the above for $\Pi_a$ and $\Pi_b$, we get
		\[
		H(\xb \mid \Pa) - H(\xb \mid \Pb) = H(\xa \mid \Pa)-  H(\xa \mid \Pb) +H(\xa \mid \xb, \Pb) - H(\xa \mid \xb, \Pa).
		\]
		Combined with \eqref{pitag}, it follows that 
		\begin{equation}\label{eq:sf}
		\score(F)-\score(\tilde{F}) = H(\xa \mid \xb, \Pb) - H(\xa \mid \xb, \Pa).
		\end{equation}

		Now, in $F$, one of $\xb,\xa$ is the parent of the other, and in $\tilde{F}$, the other case holds. We show that in both cases, $\score(F)-\score(\tilde{F}) > 0$.
		\begin{enumerate}
			\item Suppose that $\xa$ is the parent of $\xb$. In this case, we can assume without loss of generality that $\Pb = \emptyset$ in $F$, since $H(\xb \mid \xa) = H(\xb \mid \xa, \Pb)$ for any set $\Pb \subseteq \mathbf{X}_1$, due to assumption \ref{coin}. 
			Therefore, from \eqref{sf},
			\begin{align*}
			\score(F)-\score(\tilde{F}) &= H(\xa \mid \xb) - H(\xa\mid \xb,\Pa)\\
			&= H(\xa, \xb) - H(\xb) - H(\xa, \xb,\Pa) + H(\xb, \Pa) \\
			&= H(\Pa \mid \xb) - H(\Pa \mid \xa, \xb) = H(\Pa \mid \xb) - H(\Pa \mid \xa).
			\end{align*}
			The last inequality follows since $\xb \rightarrow \xa \rightarrow \Pa$ is a Markov chain by assumption \ref{coin}. 
			Now, by assumption \ref{coin}, we have
			\begin{align}
			&H(\Pa \mid \xb) \geq H(\Pa \mid \xb, C)\\
			&= H(\Pa \mid \xa, C= 1)\P[C= 1] + H(\Pa \mid C=0)\P[C=0]\notag\\
			&= H(\Pa \mid \xa)\P[C= 1] + H(\Pa)\P[C=0].\label{eq:czero}
			\end{align}
			In addition, $H(\Pa \mid \xa) < H(\Pa)$. This is because $\Pa$ is an optimal parent set for $\xa$ from $\mathbf{X}_1$, and by assumption \ref{gap} on $\cD_1$, it is unique. Therefore, $H(\xa \mid \Pa) > H(\xa)$, which implies $H(\Pa \mid \xa) < H(\Pa)$. Combined with \eqref{czero}, and since $\P[C=0] > 0$ by assumption \ref{coin}, we have $H(\Pa \mid \xb) > H(\Pa \mid \xa)$. It follows that $\score(F)-\score(\tilde{F}) > 0$, as required.

			\item Suppose that $\xb$ is a parent of $\xa$ in $F$. Assume for contradiction that $\score(F) = \score(\tilde{F})$. Then $\tilde{F}$ is an optimal legal family set in which $\xa$ is a parent of $\xb$. Therefore, by the first case above, $\score(\tilde{F}) > \score(F)$, in contradiction to the optimality of $F$. 
		\end{enumerate}
		
		It follows that $\score(F) > \score(\tilde{F})$, which proves the claim.	
		
		Thus, $\tilde{G}$ is not an optimal graph, as required. This completes the proof of the theorem.
		
	\end{proof}

\end{document}